\def\eqref#1{equation~\ref{#1}}
\def\1{\bm{1}}
\DeclareMathAlphabet{\mathsfit}{\encodingdefault}{\sfdefault}{m}{sl}
\SetMathAlphabet{\mathsfit}{bold}{\encodingdefault}{\sfdefault}{bx}{n}
\algrenewcommand\algorithmicrequire{\textbf{Input:}}
\algrenewcommand\algorithmicensure{\textbf{Output:}}
\newtheorem{definition}{Definition}
\newtheorem{property}{Property}
\newcommand{\green}[1]{\textcolor{ForestGreen}{#1}}
\newcommand{\blue}[1]{\textcolor{Orange}{#1}}
\newcommand{\dblue}[1]{\textbf{\textcolor{Maroon}{#1}}}
\newcommand{\gray}[1]{\textbf{\textcolor{Gray}{#1}}}
\title{Interpretable Probability Estimation with LLMs via Shapley Reconstruction}
\date{} 
\author{
Yang Nan, Qihao Wen, Jiahao Wang, Pengfei He, Ravi Tandon, Yong Ge, Han Xu.\\
\texttt{\{yangnan, qihaowen, jiahaow, tandonr,yongge, xuhan2}\}@arizona.edu\\
\texttt{hepengf1@msu.edu}
}
\begin{document}

\maketitle

\begin{abstract}

Large Language Models (LLMs) demonstrate potential to estimate the probability of uncertain events, by leveraging their extensive knowledge and reasoning capabilities. This ability can be applied to support intelligent decision-making across diverse fields, such as financial forecasting and preventive healthcare. However, directly prompting LLMs for probability estimation faces significant challenges: their outputs are often noisy, and the underlying predicting process is opaque. In this paper, we propose \textbf{PRISM: Probability Reconstruction via Shapley Measures}, a framework that brings transparency and precision to LLM-based probability estimation. PRISM decomposes an LLM’s prediction by quantifying the marginal contribution of each input factor using Shapley values. These factor-level contributions are then aggregated to reconstruct a calibrated final estimate. In our experiments, we demonstrate PRISM improves predictive accuracy over direct prompting and other baselines, across multiple domains including finance, healthcare, and agriculture. Beyond performance, PRISM provides a transparent prediction pipeline: our case studies visualize how individual factors shape the final estimate, helping build trust in LLM-based decision support systems. 
\end{abstract}

\vspace{-0.3cm}
\section{Introduction}
\vspace{-0.2cm}

Estimating the probability of uncertain events~\citep{berger2013statistical, winkler2019probability} is a critical task for intelligent decision makings in various domains such as financial investment~\citep{lathief2024quantifying}, healthcare~\citep{rajkomar2019machine}, and emergency management~\citep{rostami2025hierarchical}. However, in many real-world scenarios, either high-quality datasets are unavailable or mature Machine Learning (ML) techniques are lacking. Besides, there could also appear temporarily unexpected factors which are not considered when people building datasets~\citep{chowdhury2021covid}. As an example, in business analytics~\citep{raghupathi2021contemporary}, people may encounter a variety of diverse estimation tasks, such as determining whether the price, production or market demand of a certain product will increase or not~\citep{fildes2022retail}. It could be difficult for them to collect sufficient task-specific data and build reliable predictive models promptly. In this scenario, the applicability of traditional ML approaches are severely limited.  

As a potential solution, Large Language Models (LLMs) offer a promising alternative to address this challenge \citep{feng2024bird, sui2024table, chung2024large}. They incorporate extensive world knowledge and exhibit strong reasoning capabilities \citep{wei2022chain}, making them particularly valuable in data or model scarce settings. However, directly prompting LLMs to probability estimation still faces tremendous challenges: (i) LLMs typically produce noisy probability estimates that lack accuracy and stability. As an instance, according to the study~\citep{nafar2025extracting}, when LLMs are asked to predict the same event in different forms, i.e., “whether it will happen” and “whether it will not happen”, LLMs could provide conflicting answers. (ii) The ``black-box'' generative process provides little transparency regarding how each individual factor contributes to the final prediction, making the results difficult to interpret. As illustrated in Figure~\ref{fig:introduction}, when asked to predict the likelihood of a person having a certain disease, LLMs will not explicitly show how much weight each factor contributes to final prediction, but outputs a single score (with a partial explanation). It makes the final outcome difficult to interpret and less trustworthy.

To overcome the challenges, we propose a novel framework \textbf{Probability Reconstruction via
Shapley Measures (PRISM)}, inspired by Shapley Value for ML explanation~\citep{lundberg2017unified}. Specifically, we decompose the estimation task into quantifying the marginal contribution of each factor independently. 
As illustrated in  Figure~\ref{fig:introduction}, consider the task of predicting whether a person will experience a stroke. For a given factor such as ``Age=$79$'', we compare the LLM’s estimated probability when the model has access to this factor versus when it does not have access to it, paired with a random subset $S$ of the remaining factors (see Section~\ref{sec:method} for details). The \textit{marginal contribution of a factor}, referred to as its \textit{Shapley value}, is computed by averaging over different subsets $S$. These values capture both the direction and the intensity of the features' contribution to the prediction outcome. However, unlike traditional Shapley methods which solely focus on model output attribution~\citep{lundberg2017unified}, PRISM goes further: we reconstruct a new probability estimate by aggregating the factor contributions. For example, in Figure~\ref{fig:introduction}, the aggregated contributions yield a probability of 0.61, and we can clearly interpret this estimate originates from key risk-increasing factors such as advanced age and having hypertension. 
This process can make the prediction process transparent and allow human users to diagnose and interpret the model’s reasoning.

In our experiments, we validate the effectiveness of our proposed method under various settings. In Section~\ref{sec:exp1}, we compare PRISM with other LLM-based probability estimation methods, on binary classification tasks under benchmark tabular datasets (such as Adult Census Income~\citep{uci_adult_1996}, Heart Disease Prediction~\citep{chicco2020machine}, Stroke Prediction~\citep{stroke_kaggle} and Loan Default Prediction~\citep{lendingclub_kaggle}). We find that PRISM demonstrates higher prediction performance than directly LLM prompting and other representative baselines.  Furthermore, in Section~\ref{sec:exp2}, we evaluate PRISM on real-world prediction tasks involving textual and numerical inputs. For instance, we predict whether the a patient will be re-admitted to hospital, based on the doctor notes, price of an agricultural commodity will rise in the next year using the previous year's annual report, and we forecast the outcomes of English football matches based on pre-match reports. Together, these studies demonstrate PRISM’s ability to handle heterogeneous, context-rich factors—well beyond simple tabular data. Overall, the \textit{experiments highlight PRISM’s potential as a practical and reliable tool} for a wide range of real-world prediction problems.

\begin{figure}
    \centering
    \includegraphics[width=0.7\linewidth, trim={0cm, 3.2cm, 0cm, 0cm}, clip]{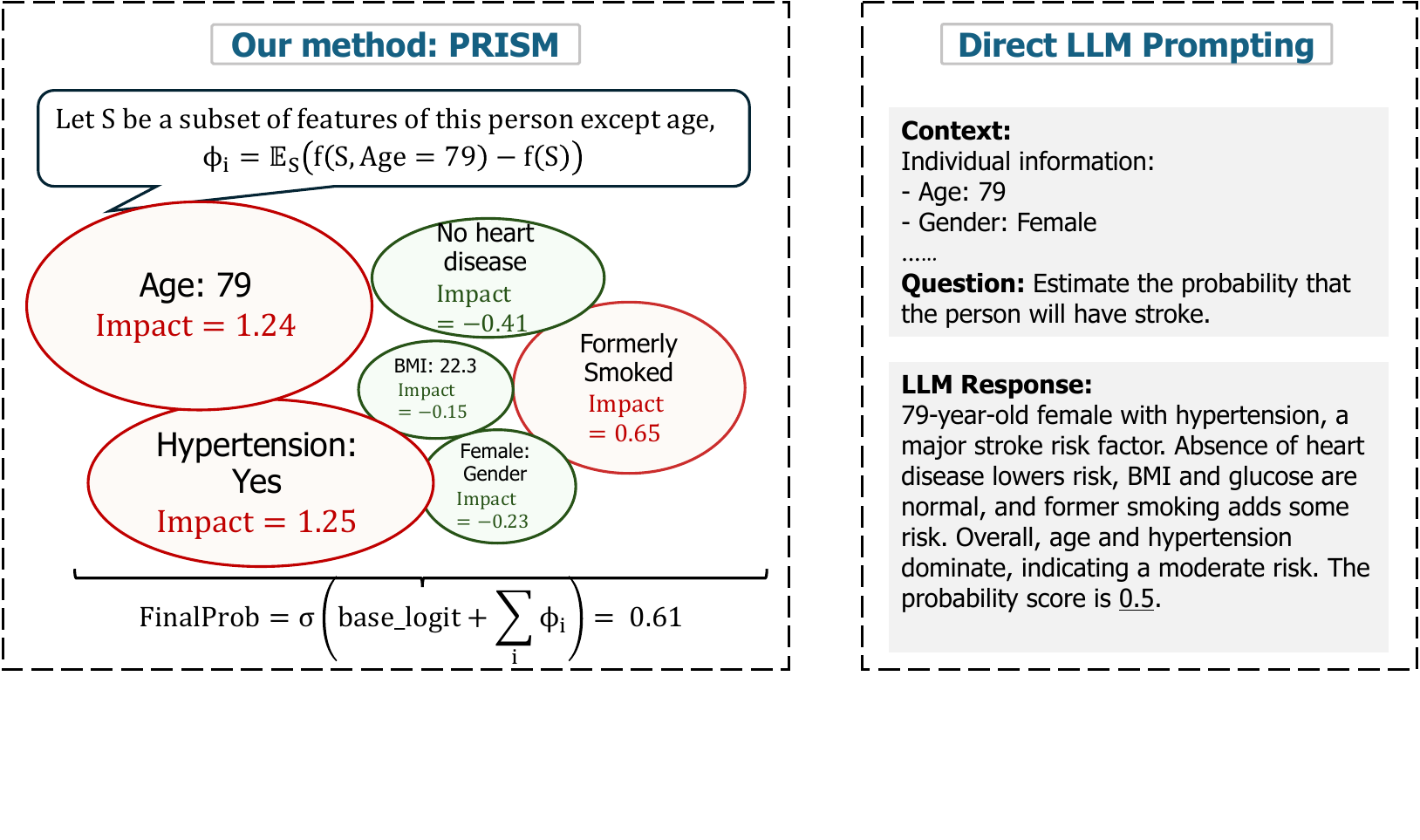}
    \caption{Illustration of direct LLM prompting and PRISM. PRISM first estimates Shapley values (factor contributions) and aggregates them to reconstruct  final probability. We use \textcolor{BrickRed}{red} to represent the factors found by PRISM to have positive contribution to the positive outcome (have a stroke), \textcolor{ForestGreen}{green} are factors found to be negative. The size reflects the contribution's absolute value. $f(\cdot)$ is from LLM prediction and $S$ is a background set, $\sigma(\cdot)$ is the sigmoid function (see details in Section~\ref{sec:method}).} 
    \vspace{-0.2cm}
    \label{fig:introduction}
\end{figure}

\vspace{-0.3cm}
\section{Related Works}
\vspace{-0.2cm}

\subsection{Probability Estimation via Traditional ML and LLMs}
\vspace{-0.2cm}

Traditional machine learning models such as Bayesian inference~\citep{berger2013statistical}, multilayer perceptrons (MLPs)~\citep{rumelhart1986learning} and decision trees~\citep{quinlan1986induction} have long been widely used for probability estimation, typically cast as binary classification tasks on tabular data. These methods often rely on large amount of data and well-designed models. To overcome this limitation, recent works have explored using Large Language Models (LLMs) for probability estimation by leveraging their rich prior knowledge. In this paper, we focus on LLM-based probability estimation \textbf{under zero-shot setting} which makes estimations only based on the LLM's own knowledge. Under this setting, previous  studies validate the potential especially for tabular format datasets~\citep{sui2024table, chung2024large, Ren2025PredictingLM, xie2024finben}. However, although many of these methods show decent estimation precision, their estimation process usually relies on direct LLM prompting. Thus, it can be difficult for the users to exactly measure the contribution of each factor to the final prediction, different from the case in traditional ML models. As a more relevant paper to our work,  BIRD \citep{feng2024bird} is proposed to integrate Bayesian networks~\citep{berger2013statistical} to explicitly calculate the probability of the prediction outcome conditioning on each factor. However, it assumes the factors are independent to each other, and it has to transform the factors into categorical values. Notably, beyond the zero-shot setting, there are related studies focusing on few-shot prediction with LLMs~\citep{hegselmann2023tabllm, brown2020language}. However, in such cases, the interpretation can become more complicated, as it should disentangle whether a prediction arises from the own knowledge of LLMs or from the provided demonstrations. Therefore, we leave the exploration of few-shot prediction tasks to future work.

\vspace{-0.2cm}
\subsection{Explainable probability estimation}
\vspace{-0.2cm}

For traditional ML explanation, Shapley-value~\citep{shapley1953value, lundberg2017unified,vstrumbelj2014explaining} is widely used to attribute predicted probabilities to input factors. In general, it quantifies each factor’s marginal contribution to the final prediction outcome. 
Building on this line, recent works adapt Shapley values to explain the behavior of LLMs. For instance,  TokenSHAP~\citep{goldshmidt2024tokenshap} estimates token-level contributions to LLM outputs in general question answering tasks. SyntaxSHAP~\citep{amara2024syntaxshap} moves beyond token-level attributions by capturing the contribution of higher-level syntactic units.
Unlike these studies, our work specifically targets on probability estimation tasks, where we quantify the contributions of each ``influencing factor'' in the probability estimation. 
Besides,, our approach goes beyond conventional LLM or ML interpretation tools. We reconstruct new probability estimates through a principled approach of factor aggregation, rather than only interpreting the original model outputs.

\vspace{-0.3cm}
\section{Methodology}\label{sec:method}
\vspace{-0.3cm}

In this section, we  formally introduce and describe our proposed method Probability Reconstruction via Shapley Measures (PRISM). In 
 Section 3.1, we first introduce the necessary notations and definitions, with particular emphasis on the Shapley value and its importance for ML model explanations. Section 3.2 then provides a full description of the PRISM algorithm, clarifying how factor contributions are combined to reconstruct probabilities. Finally, Section 3.3 introduces an efficient variant of PRISM specifically designed for tabular tasks, enabling faster computation in such scenarios.


\vspace{-0.2cm}
\subsection{Definitions and Notations}
\vspace{-0.2cm}

\textbf{Notations.} In our study, for a probability estimation task, we denote an \textbf{instance} by $x=(x_1,\ldots,x_m)$ and its true outcome by $y_{true}\in\{0,1\}$, where each $x_i$ is a factor that can influence the outcome.
We let $f(\cdot)$ denote a general model output, which may come from either a traditional ML binary classifier or from LLM-based predictions. 
Depending on the setting, $f(\cdot)$ can represent either the predicted probability or the model logit (before sigmoid function). In general, it is desirable to obtain the estimation $f(x)$ to be close or aligned to $y_{true}$.


In our paper, an important concept is Shapley value~\citep{shapley1953value} that applied in ML prediction explanation~\citep{lundberg2017unified}. In the next, we first introduce the basic definition of Shapley value (for ML models) in general, and we discuss one important property of it. 

\begin{definition}[Shapley value]
Let $\mathcal{I} = \{1,2,\dots,m\}$ denote the index set of factors, and let $x = (x_1, \dots, x_m)$.  
The Shapley value of factor $i \in \mathcal{I}$ for instance $x$ with respect to model $f$ is:
\begin{align}
\phi_i(f,x) = \sum_{S \subseteq \mathcal{I} \setminus \{i\}}
\frac{|S|!\,(m-|S|-1)!}{m!}\,
\Big[ f(x_{S \cup \{i\}}) - f(x_S) \Big],
\label{eq:shap}
\end{align}
where each $S$ is a subset of factors $\mathcal{I}$ excluding $i$, and the sum is taken over all $S\subseteq \mathcal{I}\setminus\{i\}$, $f(x_S)$ is the model output when only the factors in $S$ are used 
(and $f(x_{S \cup \{i\}})$ is defined analogously). In our paper, we name each $S$ as a ``\textbf{background set}''. 
\end{definition}
\vspace{-0.1cm}

Intuitively, Shapley value $\phi_i(f,x)$ measures the \emph{marginal contribution} of factor $i$ to the model output, by comparing the model's output when factor $i$ is ``included in the background set $S$'' versus when it is ``not included in the background set $S$'', across all possible cases of $S$. In this way, one can interpret the model’s prediction by separately examining the contribution of each individual factor. In practice, for traditional ML, the difference 
$\big(f(x_{S \cup \{i\}}) - f(x_S)\big)$ 
can be computed either by retraining models on different subsets of factors~\citep{lipovetsky2001analysis}, 
or by approximating the effect of missing factors using training data~\citep{vstrumbelj2014explaining}. Next, we discuss one important property of Shapley value in traditional MLs which inspires our method.

\begin{property}[Additivity~\citep{lundberg2017unified}]\label{property1}
Let $\phi_0 = f(x_\emptyset)$ be the expected model output when no factor information is available. Then, for models with deterministic scalar outputs, the sum of $\phi_0$ and the Shapley value of the factors 
$\phi_i(f, x)$ must \textbf{reconstruct} the model prediction itself.  
\begin{align}\label{eq:localacc}
f(x) = \phi_0 + \sum_{i=1}^m \phi_i(f,x).
\end{align}
That is, the explanation exactly matches the original model output.
\end{property}

It suggests for traditional ML models, the output $f(x)$ can be decomposed to the sum of Shapley values and the base logit $\phi_0$. However, Property~\ref{property1} does not extend easily to Large Language Models (LLMs). This is because, when LLMs make predictions for uncertain events, they express their probability estimates through generated text (see Figure~\ref{fig:introduction} (right)). Such expressed probabilities differ from the direct model outputs, such as the token probabilities produced by the LLM architectures.


\vspace{-0.3cm}
\subsection{PRISM: Probability Reconstruction via Shapley Measures}\label{sec:prism}
\vspace{-0.2cm}

Nevertheless, inspired by Property~\ref{property1}, we devise our new estimation framework: we first obtain the Shapley values for each factor and then aggregate them to reconstruct a new prediction outcome. In the next, we elaborate two important steps to obtain the Shapley values through LLMs.

\begin{wrapfigure}{r}{0.38\textwidth} 
  \centering
  \vspace{-0.4cm}
\includegraphics[width=0.36\textwidth]{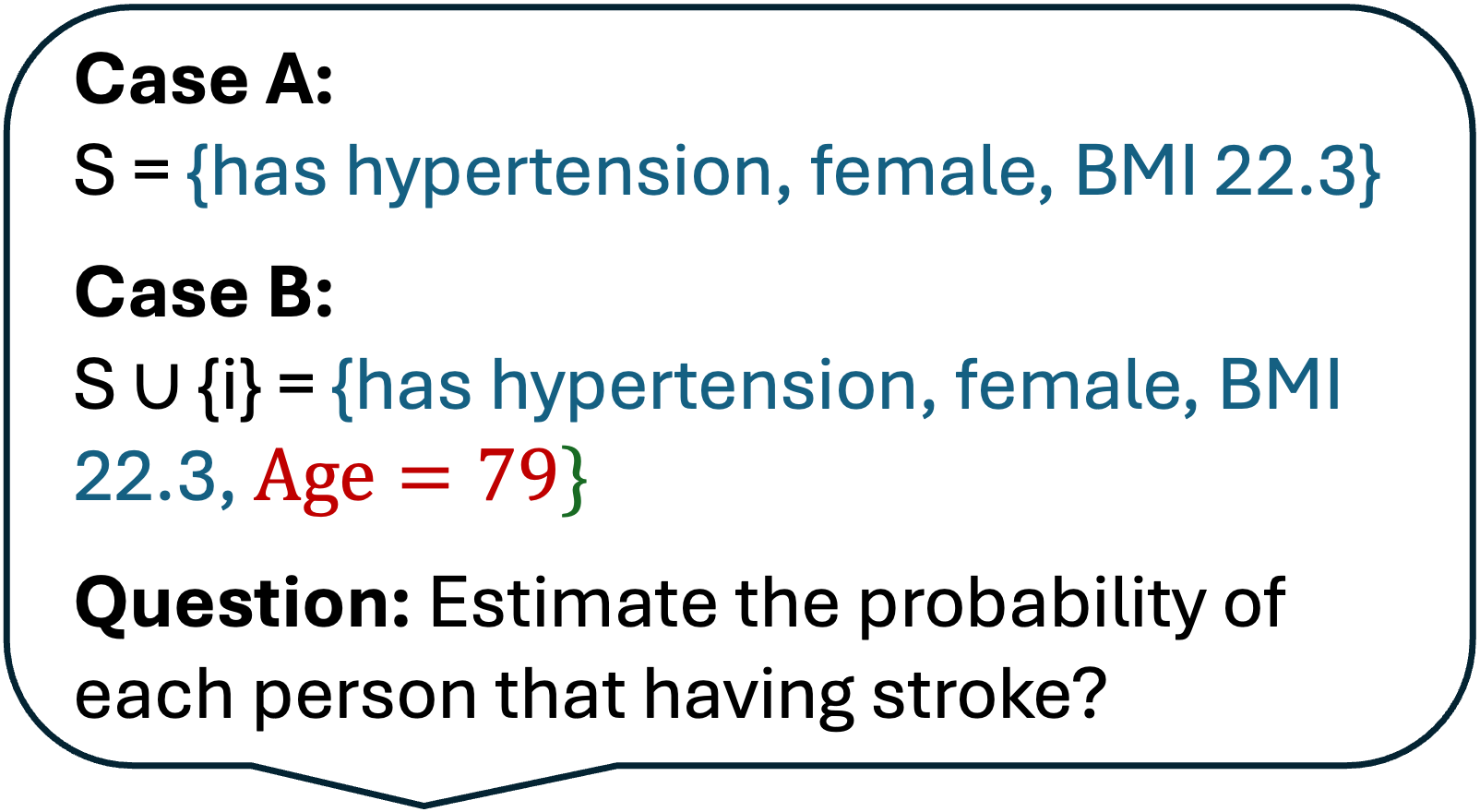} 
\vspace{-0.3cm}
  \caption{\small Comparative Prompting.}
    \vspace{-0.4cm}
  \label{fig:compare}
\end{wrapfigure}
\textbf{Comparative LLM Prompting.}
Given a factor of interest, such as ``$i=\text{Age is 79}$'' in the stroke prediction task,  we first sample multiple background sets $S$ from $\mathcal{I}\setminus\{i\}$ 
(following the permutation sampling strategy in the next part). Each $S$ contains information of several factors that different from $i$, such as ``hypertension, female and BMI=22.3'' illustrated in Figure~\ref{fig:compare}. 
Next, for a given background set $S$, we prompt the LLM to evaluate both $x_{S\cup\{i\}}$ and $x_{S}$ in the same query. 
We denote $p(\cdot)$ as the LLM estimated probability for a given instance (e.g., by responding to a question as in Figure~\ref{fig:compare}).
Then, if we let $f(\cdot) = \sigma^{-1}(p(\cdot))$, where $\sigma^{-1}(\cdot)$ is the inverse sigmoid function, we have:
\begin{align}\label{eq:diff}
    f(x_{S\cup\{i\}}) - f(x_{S}) = \sigma^{-1}\big(p(x_{S\cup\{i\}})\big) - \sigma^{-1}\big(p(x_S)\big),
\end{align}
This calculates the difference term in Shapley value defined in Eq.(\ref{eq:shap}).
We argue that: the single estimates derived by directly LLM prompting may not be exact, but the relative relation between the two cases can be well-captured. Recent studies also find that LLMs tend to perform more reliably on ranking or comparison tasks 
than absolute probability estimation~\citep{qin2023large, liu2024aligning}, 
which supports the validity of our design.

\textbf{Permutation Sampling.} In PRISM, at each iteration we sample a background set $S$ 
using the permutation sampling rule~\citep{shapley1953value, vstrumbelj2014explaining}.  Concretely, we generate a random permutation of all factors in $\mathcal{I}$ including $i$, 
and choose $S$ as the set of factors that precede $i$ in this order. Then, we directly average the pair-wise differences (from Eq.(\ref{eq:diff})) across multiple (e.g., $K$ times) such samplings to approximate the Shapley value:
\begin{align}
\phi_i = \frac{1}{K} \sum_{k=1}^{K} 
\Big[ \sigma^{-1}\big(p(x_{S^{(k)}\cup\{i\}})\big) - \sigma^{-1}\big(p(x_{S^{(k)}})\big) \Big],
\label{eq:perm-shap}
\end{align}
The equivalence of Eq.(\ref{eq:perm-shap}) and Eq.(\ref{eq:shap}) is shown in~\citep{shapley1953value}.
In practice, it is necessary to sample $S$ for multiple times, ensuring each background factor is considered with a non-negligible probability. 
This will lead to a more comprehensive estimation by accounting for possible factor interactions (see Section~\ref{sec:ablation} for further discussion).

Finally, we set $\phi_0$ as the base logit, either from the population average 
(e.g., $\phi_0 = \sigma^{-1}$(Average stroke prevalence), or simply set to $0$ when no prior knowledge is available in other tasks. We then reconstruct the probability estimation as:
\begin{align}\label{eq:prism}
p_{PRISM}(x)=\sigma (\phi_0 + \sum_{i=1}^m \phi_i).
\end{align}
We provide the detailed algorithm sketch in Appendix~\ref{app:algo}.  

\vspace{-0.2cm}
\subsection{Tabular-PRISM}\label{sec:tabular-prism}
\vspace{-0.2cm}

In this subsection, we discuss a variant of PRISM, when the factors can be represented as single values or categories and be integrated in tabular datasets. For such tasks, instead of calculating the difference term in Eq.(\ref{eq:diff}) for each $S$ in a separate LLM query, we can instead consider multiple background sets in \textbf{one single query}. Refer to Figure~\ref{fig:missed}, we can present $x_{S}$ and $x_{S\cup\{i\}}$ for different $S$ in the same table (the adjacent rows are from the same $S$). Then, we prompt LLMs to evaluate each of them in one query. In this way,  the query and token efficiency can be significantly improved. However, a challenge is that: if we leave those factors that are not presented as blank, ``unknown'' or ``not provided'', it will introduce bias to the LLM judgment. For example, LLM can over-estimate a person's risk of certain disease only because some information is ``unknown''.

\begin{wrapfigure}{r}{0.5\textwidth} 
  \centering
  \vspace{-0.4cm}
  \begin{subfigure}{0.48\linewidth}
\includegraphics[width=\linewidth]{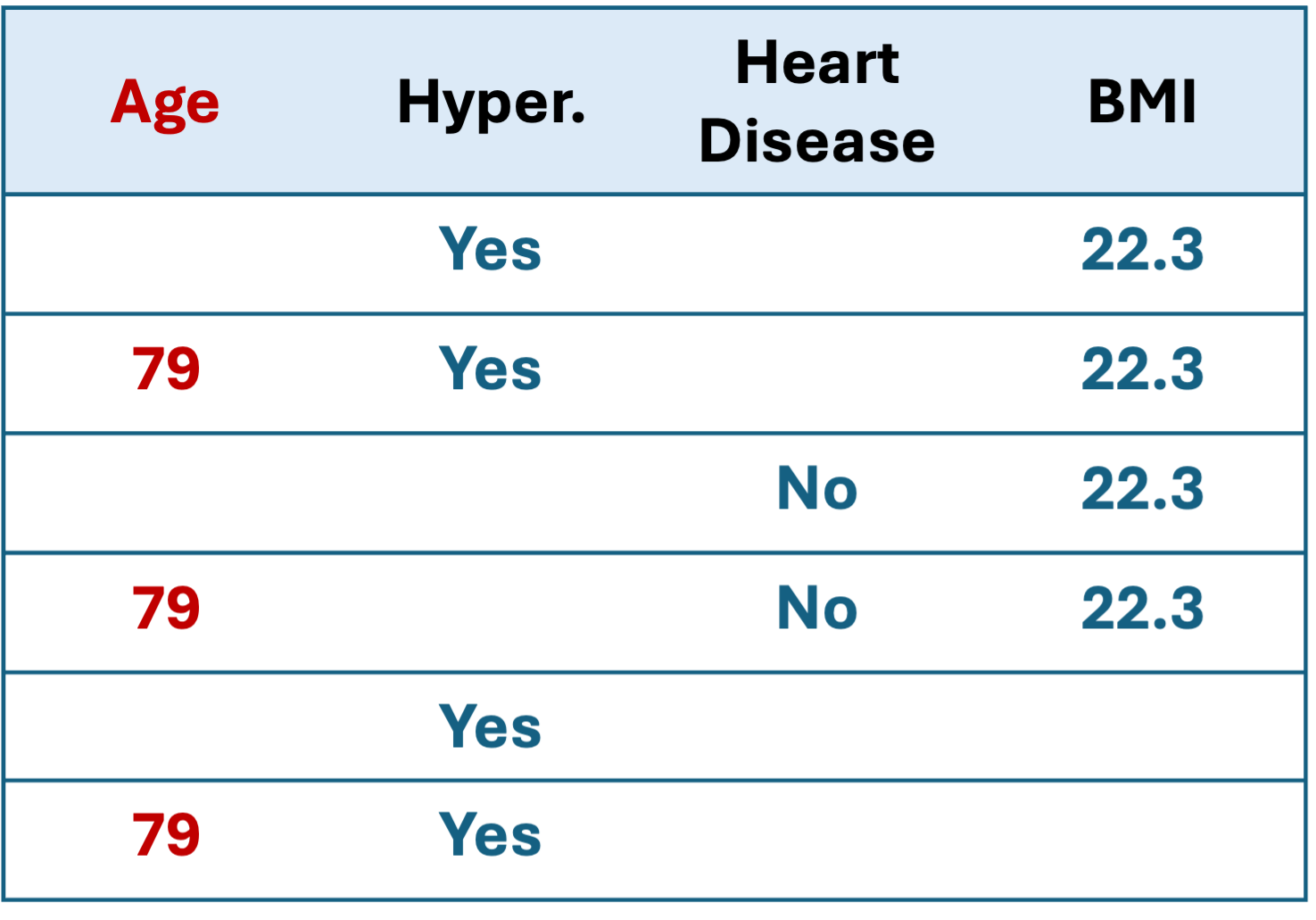}
  \vspace{-0.5cm}
    \caption{\small With blanks.}
    \label{fig:missed}
  \end{subfigure}
  \hfill
  \begin{subfigure}{0.48\linewidth}
\includegraphics[width=\linewidth]{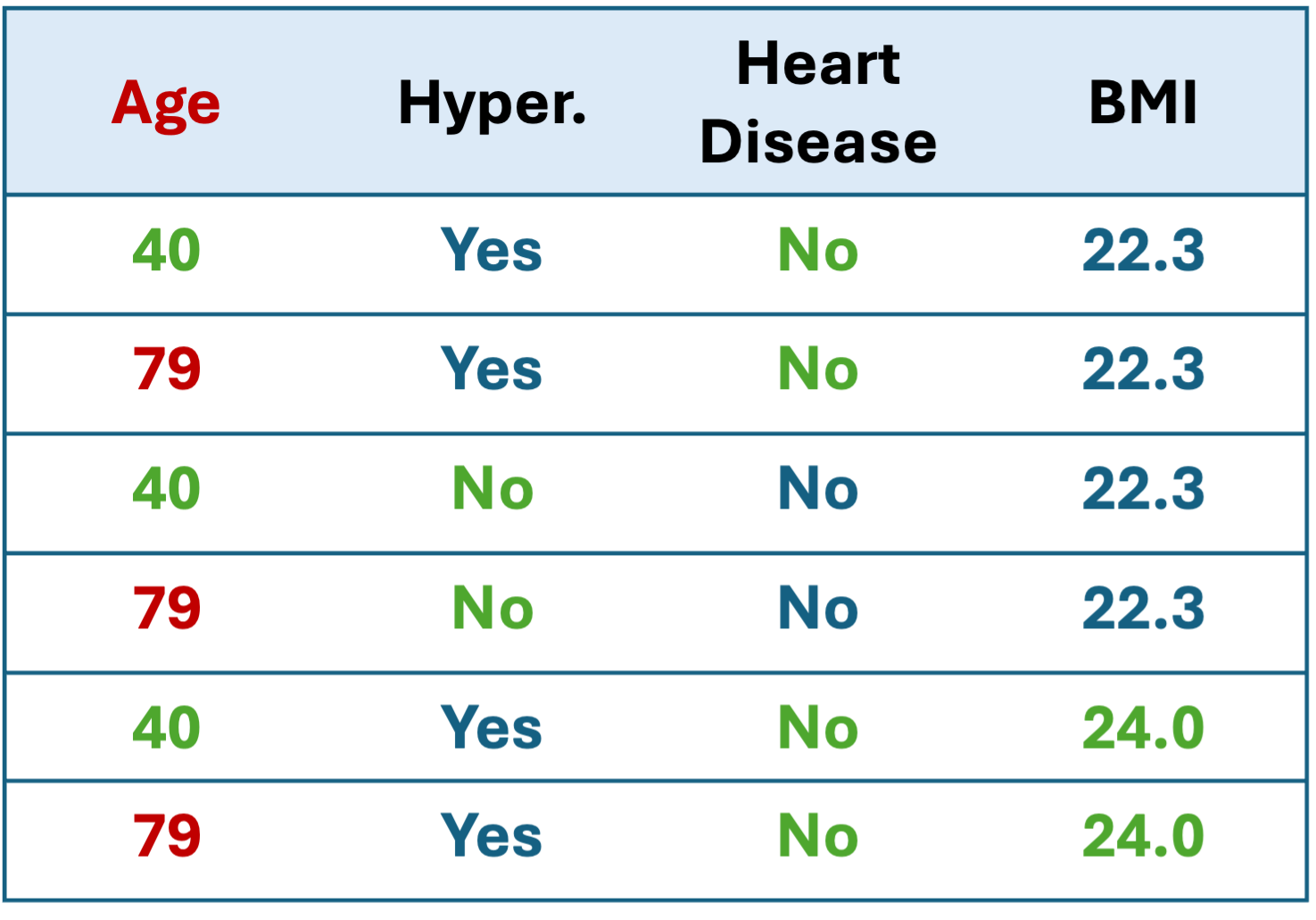}  
  \vspace{-0.5cm}
    \caption{\small With reference.}
    \label{fig:imputed}
  \end{subfigure}
  \vspace{-0.2cm}
  \caption{\small When calculating Shapley value of the factor ``Age=79'', we put multiple $S$ in one table. Factor values from reference instances are noted in \green{green}.}
  \label{fig:wrapsubfig}
  \vspace{-0.4cm}
\end{wrapfigure}
To solve this issue, we leverage the strategy to introduce a \textbf{reference instance}, which is also used in Shapley ML explanation~\citep{vstrumbelj2014explaining}. Specifically, we let 
$r$ be a reference instance with each factor to be a fixed value (typically obtained from population average or majority). Then, we use the information of this reference instance to impute missing factors, which is illustrated in Figure~\ref{fig:imputed}, and  we calculate the Shapley values (with a new definition) based on this new table. 

\begin{definition}\label{def:shap2}
For background set $S$, we
define the model output of the imputed sample as
$
v_r(S) = f\big([x_S,\; r_{\bar S}]\big)$, where $\bar S=\mathcal{I}\setminus S$, which means the factors in $S$ are provided by $x$, and the remaining are provided by $r$. Then, for each factor $x_i$, the (reference-specific) Shapley value is
\begin{align}\label{eq:table-shap}
\phi_i^{(r)} =
\sum_{S\subseteq\mathcal{I}\setminus\{i\}}
\frac{|S|!\,(m-|S|-1)!}{m!}\,
\Big(v_r(S\cup\{i\})-v_r(S)\Big). 
\end{align}
\end{definition}
Essentially, this new definition has a different interpretation compared to the original definition in Eq.(\ref{eq:shap}). It emphasizes the comparison to the reference sample. For example, a large positive $\phi_i^{(r)}$ for ``Age = 79'' suggests: compared with ``Age = 40'' (from reference sample), the factor ``Age = 79''  increases the risk of having stroke greatly. Despite different interpretations, the following proposition can still allow us to reconstruct predictions from these Shapley values.  

\begin{restatable}{proposition}{mainprop}\label{thm:proposition1}
Fix an instance $x$ and a single reference sample $r$.
Let $\phi_i^{(r)}$ be the Shapley value in Definition~\ref{def:shap2}.
Then, for models with deterministic scalar outputs, with $\phi_0^{(r)}=v_r(\emptyset)$, we still have:
\begin{align}
  f(x) = v_r(\mathcal{I})  = \phi_0^{(r)} + \sum_{i=1}^m \phi_i^{(r)}   
\end{align}
\vspace{-0.4cm}
\end{restatable}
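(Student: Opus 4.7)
The plan is to recognize that the proposition is the classical efficiency axiom of Shapley values applied to the coalitional game $v_r: 2^{\mathcal{I}} \to \mathbb{R}$ rather than to a raw prediction function. Since Definition~\ref{def:shap2} defines $\phi_i^{(r)}$ using exactly the Shapley weighting on the set function $v_r$, the only work is (i) identifying $f(x)$ with $v_r(\mathcal{I})$, and (ii) proving efficiency for this particular game.

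First I would verify the boundary identity $f(x) = v_r(\mathcal{I})$. By definition, $v_r(S) = f([x_S, r_{\bar S}])$ with $\bar S = \mathcal{I} \setminus S$. Taking $S = \mathcal{I}$ gives $\bar S = \emptyset$, so the imputation uses no entries from $r$, and $v_r(\mathcal{I}) = f([x_{\mathcal{I}}, r_\emptyset]) = f(x)$. Similarly $\phi_0^{(r)} = v_r(\emptyset) = f(r)$, which is the base logit induced by the reference.

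Next I would prove the efficiency identity
\begin{align}
\sum_{i=1}^m \phi_i^{(r)} = v_r(\mathcal{I}) - v_r(\emptyset)
\label{eq:efficiency-goal}
\end{align}
via the standard permutation/telescoping argument. Using the well-known equivalent form of Eq.~(\ref{eq:table-shap}),
\begin{align}
\phi_i^{(r)} = \frac{1}{m!} \sum_{\pi \in \Pi_m} \Big[ v_r(P_i^\pi \cup \{i\}) - v_r(P_i^\pi) \Big],
\end{align}
where $\Pi_m$ is the set of all permutations of $\mathcal{I}$ and $P_i^\pi$ is the set of indices preceding $i$ under $\pi$. Summing over $i$ and swapping the order of summation, the inner sum along any fixed $\pi$ telescopes through the chain $\emptyset \subset \{\pi(1)\} \subset \{\pi(1),\pi(2)\} \subset \cdots \subset \mathcal{I}$, collapsing to $v_r(\mathcal{I}) - v_r(\emptyset)$. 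Averaging over $m!$ permutations leaves the same quantity, establishing~(\ref{eq:efficiency-goal}). Rearranging and substituting $\phi_0^{(r)} = v_r(\emptyset)$ and $v_r(\mathcal{I}) = f(x)$ delivers the claim.

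I do not expect a genuine obstacle: the only subtlety is that Property~\ref{property1} was stated for the ``empty-coalition'' base $\phi_0 = f(x_\emptyset)$ used in the original Shapley formulation, whereas here the base is $v_r(\emptyset) = f(r)$. The telescoping argument, however, is agnostic to which coalitional game one applies it to, so the reference-specific construction inherits efficiency verbatim. The one thing to state carefully in the write-up is the equivalence between the weighted-sum form of the Shapley value and its permutation-average form, which is already cited in the paper and can be invoked without reproof.
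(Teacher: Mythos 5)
Your proposal is correct and follows essentially the same route as the paper's own proof: both rewrite $\phi_i^{(r)}$ in the permutation-average form, sum over $i$, swap the order of summation, telescope along the chain induced by each permutation to get $v_r(\mathcal{I})-v_r(\emptyset)$, and then substitute the boundary identities $v_r(\mathcal{I})=f(x)$ and $\phi_0^{(r)}=v_r(\emptyset)$.
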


Similar to Property~\ref{property1}, this proposition suggests that: in LLMs, we can also reconstruct a new estimation by aggregating Shapley values, following the rule: $p_{PRISM}(x)=\sigma (\phi^{(r)}_0 + \sum_{i=1}^m \phi^{(r)}_i).$
We defer both the detailed algorithm sketch and the proof in Appendix~\ref{app:algo} and Appendix~\ref{app:thm}.

\vspace{-0.2cm}
\subsection{Factor Extraction} 
\vspace{-0.2cm}

In PRISM, it is essential to determine which factors should be considered during prediction. We adopt two criteria for selecting these factors: (1) Non-overlap, which avoids redundancy by ensuring that the same information is not repeatedly considered or recalculated, and
(2) Completeness, which ensures that the union of all factors covers the necessary information used by the LLM. In practice, for tabular tasks, factors are directly chosen from the provided tabular features after removing duplicates. For unstructured tasks, we use an automated pipeline. We first query an LLM to propose the minimal set of aspects required to complete the task, with the constraints that these aspects must be non-overlapping and complete. For each aspect, we then extract a summary from the given context, which serves as a “factor’’ within the PRISM framework.

\vspace{-0.2cm}
\section{Experiment and Discussion}
\vspace{-0.2cm}

In this section, we conduct experiments to validate the effectiveness of PRISM. In Section~\ref{sec:exp1}, we compare the estimation performance of PRISM and baselines on benchmark (tabular) datasets. Meanwhile, we provide case analysis to visualize the interpretation outcome of PRISM.
In Section~\ref{sec:exp2}, we demonstrate that PRISM can also be applied in real-world estimation tasks which cannot be easily formed into tabular format. Finally, Section~\ref{sec:ablation} provides ablation studies about whether PRISM can handle feature interactions~\citep{hall1999correlation}, and discuss its computational efficiency. 

\vspace{-0.2cm}
\subsection{Probability Estimation on Benchmark Tabular Datasets}\label{sec:exp1}
\vspace{-0.2cm}

We first evaluate our method focused on benchmark \textbf{tabular} datasets. This is because probability estimation tasks are more frequently built in tabular format, for traditional ML studies. Therefore, we have various datasets with sufficient samples for fair and comprehensive comparisons. We later discuss scenarios beyond tabular format in Section~\ref{sec:exp2}.

\textbf{Experiment setup.} We involve four representative tabular datasets ranging from disease prediction, income prediction, and loan credit estimation. In details,  Adult Census Income~\citep{uci_adult_1996} is to predict whether a person has annual income over \$50K, based on their occupation, education and family information.
Stroke~\citep{stroke_kaggle} and Heart Disease~\citep{chicco2020machine} predict whether a certain disease will appear based on the patients' health status. Lending~\citep{lendingclub_kaggle} predicts whether a loan default will happen, based on the loan application records and applicants' personal information. More details and pre-process procedures are in Appendix~\ref{sec:addl-datasets}.

In our study, we majorly consider the zero-shot settings, where the estimations are solely relied on the LLM's own knowledge.
For our method, we implement Tabular-PRISM (in Section~\ref{sec:tabular-prism}) and we sample the background set $S$ for 10 times (for each Shapley value) \footnote{We set the factor values in the reference instance to be around the population average (for continuous values), and population majority (for categorical values). We query an LLM for the base logit $\phi_0$. In practice, one can choose $\phi_0$ from more reliable sources such as historical statistics or human experts, and the selection of $\phi_0$ will not impact the later evaluation metrics, including AUROC, AUPRC and F1.}.
We also consider the baselines:
\begin{itemize}[itemsep=0em, leftmargin=2em]
    \item Directly prompting LLMs to predict likelihood \textbf{levels}. For example, we ask LLM to choose one in the options from ``very unlikely'' to ``very likely''. We also try multiple shots to obtain the self-consistency result~\citep{wang2022self} by making votes. They are denoted as ``1shot\_level, 5shot\_level, 10shot\_level'' in Table~\ref{tab:comparison}.
    \item Directly prompting LLMs to predict likelihood \textbf{scores}, which are probability scores between [0-1], and we obtain the self-consistency result by taking their average. They are denoted as ``1shot\_score, 5shot\_score, 10shot\_score'' in Table~\ref{tab:comparison}
    \item \textit{Contrast}~(motivated by \citet{nafar2025extracting}) asks about the likelihood in the positive and the negated question form, and then unify the answers to get the final estimation.
    \item \textit{BIRD}~\citep{feng2024bird} builds Bayesian networks to evaluate the probability of the estimation outcome conditioning on various (categorized) factors. 
    \item We also add In-Context Learning~(ICL, \cite{brown2020language}), which is beyond zero-shot setting. We randomly select 5 positive and 5 negative samples as demonstrations for each instance (or 10 positive and 10 negative respectively).
\end{itemize} 
For each setting and method, we conduct experiments on GPT-4.1-mini \citep{gpt41mini_docs} and Gemini-2.5-Pro \citep{gemini25pro_card}. 
All runs use temperature 1.0 under default settings.

\textbf{Experiment result.} In our result shown in Table~\ref{tab:comparison}, for each dataset, we randomly choose 300 test samples (150 positive and 150 negative), and we report AUROC (shown as ``ROC'' in Table~\ref{tab:comparison}), AUPRC (shown as ``PRC''), and the best F1 (when select the threshold for maximized TPR + TNR). From the result, we can see the PRISM consistently demonstrates reliable estimation performance. Specifically, PRISM presents highest performance or it is close to the strongest baselines in most datasets, including Adult Census, Heart Disease and Lending, under both LLMs. In these datasets, we see PRISM has more obvious improvement in AUROC and AUPRC than F1-scores. This suggests that the baselines can sometimes effectively separate positive and negative samples when an appropriate threshold is chosen, whereas the high AUROC and AUPRC of PRISM indicate its strong discriminative ability across all possible thresholds. Under Stroke Dataset, our method is comparable to strong baselines or slightly lower than those strong baselines. For example, under Gemini-2.5-pro, ``1shot\_score'' has higher AUROC, AUPRC and F1 than PRISM. We conjecture it may be because the LLM itself has well-educated knowledge in the relevant domain. However, PRISM demonstrates stable performance across various datasets and shows competency against the strongest baselines, if not surpassing them. in Appendix~\ref{sec:addl-results}, we show PRISM's predicted probability also has a good calibration~\citep{bella2010calibration} if a proper base logit is selected.

For the baseline methods, we find ``score'' based LLM prompting are generally better than ``level'' based prompting. The most comparable baseline with PRISM is ``5shot\_score'' and ``10shot\_score''. However, we would like to argue that: the multiple-query strategy (or namely Self-Consistency) is less interpretable than single-query in practice, as they rely on aggregating multiple diverse estimation paths, making it difficult for a uniform assessment of the factor contribution. Besides, BIRD is the only method with explicit interpretable structure among the baselines. Compared to BIRD, PRISM has obviously higher performance across different settings. 

\begin{table}[t]
\centering
\resizebox{0.9\linewidth}{!}{
\begin{tabular}{c|ccc|ccc|ccc|ccc}\hline\hline 
& \multicolumn{3}{c|}{Adult Census} 
& \multicolumn{3}{c|}{Heart Disease}  
& \multicolumn{3}{c|}{Stroke} 
& \multicolumn{3}{c}{Lending} \\ 
Model / Method 
& ROC  & PRC  & F1   
& ROC  & PRC  & F1   
& ROC  & PRC  & F1  
& ROC  & PRC  & F1  \\
\hline
\multicolumn{13}{l}{\textbf{GPT-4.1-mini}}\\
\hline
1shot\_level        & 0.777 & 0.773 & 0.709& 0.722 & 0.706 & 0.615 & 0.767 & 0.694 & 0.726 & 0.629 & 0.606 & 0.478 \\
5shot\_level  & 0.795 & 0.779 & 0.701 & 0.759 & 0.723 & 0.664  & 0.780 & 0.716 & 0.730 & 0.617 & 0.576 & \blue{0.648}  \\
10shot\_level & 0.799 & 0.779 & 0.701  & 0.759 & 0.718 & 0.672  & 0.792 & 0.727 & 0.732  & 0.627 & 0.591 & 0.443  \\
1shot\_score & 0.795 & 0.792 & 0.709 & 0.799 & 0.761 & 0.772 & 0.804 & 0.763 & 0.753 & 0.612 & 0.600 & 0.558 \\

5shot\_score & \blue{0.819} & \blue{0.820} & \blue{0.755} & \blue{0.807} & \blue{0.779} & \blue{0.782} & \dblue{0.816} & \blue{0.783} & 0.780 & 0.629 & 0.621 & 0.612\\ 
10shot\_score & 0.816 & \blue{0.820} & 0.734 & 0.806 & 0.776 & \dblue{0.793} & 0.813 & 0.781 & \blue{0.785} & \blue{0.636}  & \dblue{0.631} & 0.621\\
ICL-5+5      & 0.807 & 0.788 & 0.707 & 0.803 & 0.761 & 0.762  & 0.801 & 0.751 & 0.775 & 0.598 & 0.567 & 0.668  \\
ICL-10+10    & 0.754 & 0.758 & 0.645  & 0.776 & 0.744 & 0.760  & 0.788 & 0.736 & 0.763  & 0.591 & 0.569 & 0.642  \\
Contrast     & 0.790 & 0.813 & 0.697  & 0.769 & 0.729 & 0.738  & 0.790 & \dblue{0.813} & 0.697 & 0.485 & 0.543 & 0.169 \\
BIRD          & 0.813 & 0.804 & 0.730 & 0.777 & 0.748 & 0.712 & 0.778 & 0.740 & 0.729 & 0.610 & 0.564 & 0.533  \\
PRISM (Ours)  & \dblue{0.851} & \dblue{0.874} & \dblue{0.770} & \dblue{0.816} & \dblue{0.799} & \dblue{0.793} & \blue{0.814} & \blue{0.783} & \dblue{0.790} & \dblue{0.655} & \blue{0.626} & \dblue{0.671} \\
\gray{Std. Err. (Ours)} & \gray{0.0215} & \gray{0.0221} & \gray{0.0296} & \gray{0.0254} & \gray{0.0350} & \gray{0.0250} & \gray{0.0311} & \gray{0.0430} & \gray{0.0283} & \gray{0.0313} & \gray{0.0428} & \gray{0.0300} \\
\hline
\multicolumn{13}{l}{\textbf{Gemini-2.5-Pro}}\\
\hline
1shot\_level        & 0.818 & 0.797 & 0.699 & 0.732 & 0.668 & 0.780  & 0.793 & 0.744 & 0.720 & 0.555 & 0.538 & 0.518 \\
5shot\_level  & 0.822 & 0.794 & 0.694  & 0.758 & 0.686 & 0.794 & 0.804 & 0.749 & 0.727 & 0.541 & 0.526 & 0.511 \\
10shot\_level & 0.821 & 0.797 & 0.688 & 0.738 & 0.663 & 0.791 & 0.803 & 0.745 & 0.716 & 0.564 & 0.543 & 0.571  \\
1shot\_score & 0.855 & 0.876 & 0.767 & 0.812 & 0.739 & \blue{0.797} & \dblue{0.836} & \blue{0.812} & 0.802 & 0.536 & 0.538 & 0.465\\
5shot\_score  & \blue{0.864} & \blue{0.879} & \blue{0.823} & 0.816 & 0.749 & 0.795 & 0.834 & 0.810 & \dblue{0.804} & 0.529 &0.534 & 0.533\\
10shot\_score & \blue{0.864} & 0.878 & \dblue{0.826} & 0.815 & 0.753 & \dblue{0.799} & \blue{0.835} & \dblue{0.814} & \blue{0.803} & 0.528 &0.526 & 0.547\\
ICL-5+5      & 0.842 & 0.834 & 0.744 & \blue{0.834} & \blue{0.785} & 0.770 & 0.815 & 0.767 & 0.728  & 0.615 & 0.576 & 0.575 \\
ICL-10+10    & 0.812 & 0.793 & 0.722 & 0.807 & 0.756 & 0.739 & 0.818 & 0.776 & 0.738 & \blue{0.626} & 0.584 & \dblue{0.690} \\
Contrast     & 0.835 & 0.830 & 0.757 & 0.831 & 0.779 & 0.789 & 0.806 & 0.768 & 0.727  & 0.593 & \blue{0.588} & 0.520 \\
BIRD         & 0.799 & 0.794 & 0.754  & 0.810 & 0.767 & 0.762  & 0.803 & 0.779 & 0.735 & 0.555 & 0.528 & 0.625 \\
PRISM (Ours)  & \dblue{0.876} & \dblue{0.893} & \dblue{0.826}  & \dblue{0.844} & \dblue{0.832} & 0.787 & 0.826 & 0.788 & 0.783  & \dblue{0.654} & \dblue{0.614} & \blue{0.685} \\
\gray{Std. Err. (Ours)} & \gray{0.0198} & \gray{0.0194} & \gray{0.0239} & \gray{0.0220} & \gray{0.0298} & \gray{0.0257} & \gray{0.0245} & \gray{0.0369} & \gray{0.0252} & \gray{0.0314} & \gray{0.0426} & \gray{0.0277} \\
\hline\hline
\end{tabular}
}
\vspace{-0.3cm}
\caption{Performance comparison across various tabular datasets. Highest results are in dark red. Second-highest results are in orange.}
\vspace{-0.3cm}
\label{tab:comparison}
\end{table}

\begin{table}[t]
\centering
\setlength{\tabcolsep}{4pt}
\resizebox{0.9\linewidth}{!}{%
\begin{tabular}{c |cc| cc| cc| cc}
\hline
Factor & \multicolumn{2}{|c|}{Case 1} & \multicolumn{2}{c|}{Case 2} & \multicolumn{2}{c|}{Case 3} & \multicolumn{2}{c}{Case 4} \\
 & Value & Shapley & Value & Shapley & Value & Shapley & Value & Shapley \\
\hline
Gender                & Female          & -0.08 & Female          & 0.00 & Male             & 0.00 & Female         & 0.00 \\
Age                   & 82              & 1.17  & 52              & 0.57 & 68               & 1.15 & 29             & -0.77 \\
Hypertension          & Yes             & 1.64  & Yes             & 1.41 & No               & 0.00 & No             & 0.00 \\
Heart Disease         & Yes             & 0.66  & No              & 0.00 & Yes              & 1.17 & No             & 0.00 \\
Marital Status        & Never Married   & 0.00  & Ever Married    & 0.00 & Ever Married     & 0.00 & Ever Married   & 0.00 \\
Work Type             & Government job  & 0.00  & Private sector  & 0.00 & Private sector   & 0.00 & Private sector & 0.00 \\
Residence Type        & Rural           & 0.00  & Urban           & 0.00 & Urban            & 0.00 & Urban          & 0.00 \\
Glucose Level & 84.03           & 0.00  & 94.98           & 0.51 & 223.83           & 0.62 & 116.98         & 0.98 \\
BMI                   & 25.60           & 0.00  & 23.80           & 0.00 & 31.90            & 0.53 & 23.40          & 0.00 \\
Smoking Status        & Smokes          & 0.85  & Never smoked    & 0.00 & Formerly smoked  & 0.60 & Never smoked   & 0.00 \\
\hline
\textit{Sum Shapley}          & \multicolumn{2}{|c|}{4.240} & \multicolumn{2}{c|}{2.490} & \multicolumn{2}{c|}{4.070} & \multicolumn{2}{c}{0.210} \\
\textit{Sum logit}         & \multicolumn{2}{|c|}{-0.355} & \multicolumn{2}{c|}{-2.105} & \multicolumn{2}{c|}{-0.525} & \multicolumn{2}{c}{-4.385} \\
\textit{Predicted prob}    & \multicolumn{2}{|c|}{0.413}  & \multicolumn{2}{c|}{0.109}  & \multicolumn{2}{c|}{0.372}  & \multicolumn{2}{c}{0.012}  \\

\textit{True label}       & \multicolumn{2}{|c|}{Yes}     & \multicolumn{2}{c|}{No}      & \multicolumn{2}{c|}{Yes}     & \multicolumn{2}{c}{No}      \\
\hline
\end{tabular}%
}
\vspace{-0.2cm}
\caption{Shapley values for four instances in Stroke dataset. Reference instance: \textit{\textcolor{gray}{gender=Male; age=40; hypertension=No; heart disease=No; marital status=Never Married;  residence=Rural; average glucose=90.0; BMI=24.0; work type=Private; smoking status=never smoked}}. A single base logit is shared across cases, \textcolor{gray}{\textit{$\phi_0 = \sigma^{-1}$(0.01)=-4.5951}.}}
\vspace{-0.4cm}
\label{tab:stroke-cases-impacts}
\end{table}

\textbf{Examination of Interpretations.} In Table~\ref{tab:stroke-cases-impacts}, we present instances under Stroke 
dataset to help understand the interpretation process of PRISM. In detail, we present the factor values as well as the estimated Shapley values (Eq.(\ref{eq:table-shap})). Notably, the Shapley value here represents the relative contribution of each factor, comparing to the reference instance (see Section~\ref{sec:tabular-prism}). For example, in Case~1, a Shapley value of 0.66 for ``Heart Disease'' suggests: compared with ``No Heart Disease'' (from reference instance), this factor increases the risk of having stroke. Similarly, in Case 2, because the person does not have heart disease which is the same as reference instance, the Shapley value is 0. In Appendix~\ref{sec:addl-results}, we provide the interpretation results similar to Table~\ref{tab:stroke-cases-impacts} of other datasets. 

\vspace{-0.3cm}
\subsection{Probability Estimation on Unstructured Data}\label{sec:exp2}
\vspace{-0.2cm}

Beyond tabular data, there are often cases where the influencing factors cannot be easily transformed into single values to be inputted into tables. For example, the factors can be in form of descriptive facts that are extracted from unstructured sources such as news articles, financial reports, or social media. Probability estimation in this scenario is also of great importance. We conduct multiple studies to demonstrate the applicability of PRISM (the version in Section~\ref{sec:prism}) in such setting.

\begin{figure}[t]
\centering

\begin{minipage}{0.66  
 \textwidth}
    \centering
    \includegraphics[width=\linewidth]{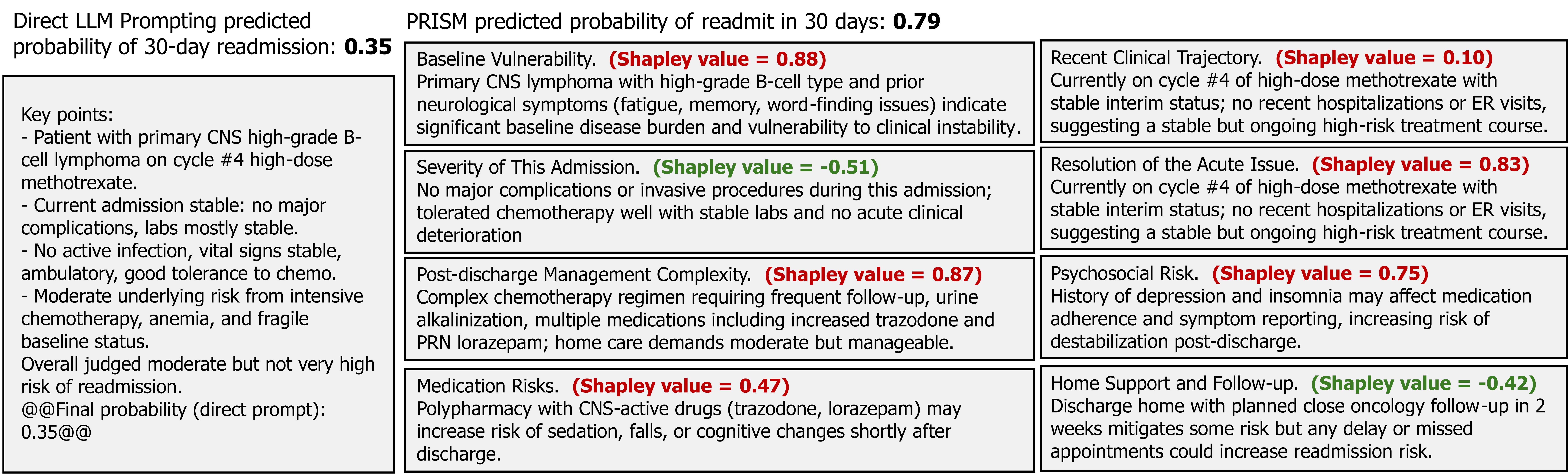}
    \caption{PRISM attribution example from MIMIC-III (True Label=1), illustrating the contribution of each  factor.}
    \label{fig:mimic_example}
\end{minipage}
\hfill
\begin{minipage}{0.32\textwidth}
    \centering
    \scalebox{0.6}{
    \begin{tabular}{c| ccc}
    \hline \hline
    GPT & ROC & PRC & F1 \\
    1shot\_level & 0.621 & 0.573 & 0.573 \\
    5shot\_level & 0.639 & 0.627 & 0.657 \\
    1shot\_score & 0.635 & 0.607 & 0.503 \\
    5shot\_score & 0.641 & 0.616 & 0.632 \\
    PRISM & 0.630 & 0.645 & 0.688\\
    \hline
    Gemini & ROC & PRC & F1 \\
    1shot\_level & 0.642 & 0.634 & 0.603 \\
    5shot\_level & 0.663 & 0.672 & 0.661 \\
    1shot\_score & 0.658 & 0.658 & 0.592 \\
    5shot\_score & 0.672 & 0.675 & 0.667 \\
    PRISM & 0.659 & 0.673 & 0.674\\
    \hline \hline
    \end{tabular}
    }
    \captionof{table}{Performance on the MIMIC-III dataset}
    \label{tab:mimic_results}
\end{minipage}
\vspace{-0.2cm}
\end{figure}

\begin{figure}[t]
  \centering
  \begin{minipage}{0.32\linewidth}
    \includegraphics[width=1.1\linewidth]{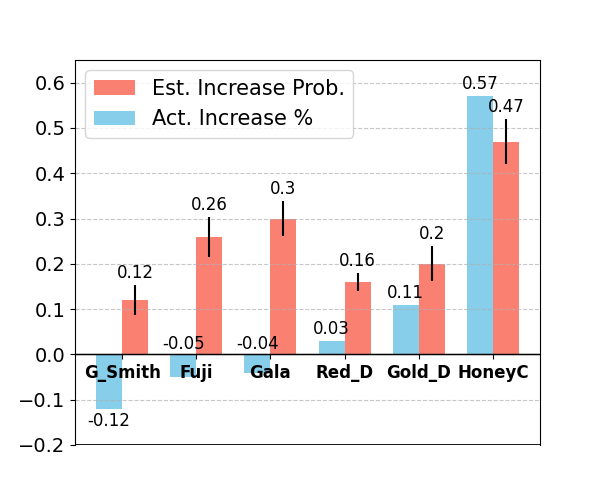}
    \vspace{-0.8cm}
\subcaption{PRISM-10\_shot}
      \label{fig:app1}
  \end{minipage}
  \hfill
  \begin{minipage}{0.32\linewidth}
    \includegraphics[width=1.1\linewidth]{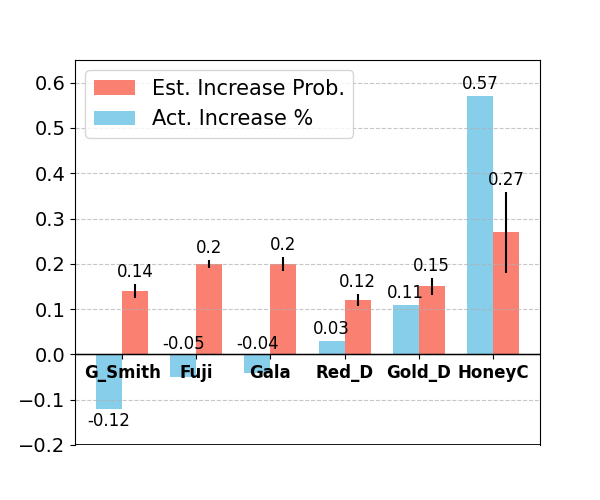}
         \vspace{-0.8cm}
      \subcaption{Extracted-10\_shot}
      \label{fig:app2}
  \end{minipage}
  \hfill
  \begin{minipage}{0.32\linewidth}
\includegraphics[width=1.1\linewidth]{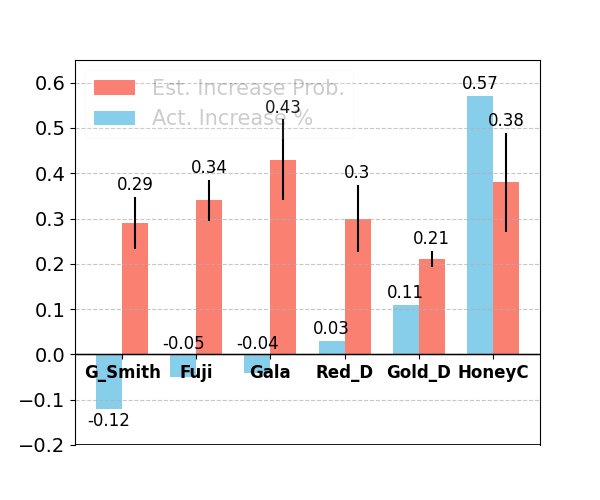}
    \vspace{-0.8cm}
\subcaption{Raw-10\_shot}
      \label{fig:app3}
  \end{minipage}
  \vspace{-0.2cm}
  \caption{Prediction on whether the price of a type of apple will increase. Blue bars are actual price increase rates. Red bars are the estimated increase probability. } 
  \label{fig:apple}
  \vspace{-0.6cm}
\end{figure}

\textbf{Predicting 30-day Readmission after Discharging.} 
We evaluate an unstructured-text prediction task under MIMIC-III dataset \citep{johnson2016mimic}. We aim to use the discharge summaries (doctor notes) to predict whether a patient will be readmitted to the hospitals within 30 days. In PRISM, we consider eight distinct factors, including the patients' overall health status and their recent hospital course and so on. We evaluate 200 samples (100 positive and 100 negative cases) using both GPT-4.1-mini and Gemini-2.5-Pro. We compare PRISM against directly prompting baselines.  Our results in table~\ref{tab:mimic_results} show that PRISM achieves consistently higher performance, particularly on PRAUC and F1. For these direct-prompting baselines, we find that LLMs such as GPT-4.1-mini often provide the same values (0.35 and 0.65) for different samples, reducing their ability to provide more fine-grained ranking. 
Moreover, Figure~\ref{fig:mimic_example} shows a full attribution example on a true readmission case. Direct prompting yields only a coarse estimate (0.35) without explaining the underlying clinical drivers. In contrast, PRISM produces a calibrated prediction (0.79) and distributes contributions across eight clinically meaningful factors, revealing how baseline disease burden, treatment intensity, psychosocial risks, and post-discharge complexity elevate the patient’s risk.

\textbf{Predicting apple price}. Our  task is to determine ``whether the price of apple will increase in 2025 compared to 2024?'', based on ({\href{https://usaa.memberclicks.net/assets/USApple_OutlookReport_2024.pdf}{U.S.~Apple Association annual report 2024}). Such a task may later assist farmers in deciding what type of produce to grow. In this report, it provides descriptive analysis regarding key factors that can influence apple prices. For each type of apple, we first use an LLM to generate summaries across seven aspects: production, demand, storage, imports and exports, policy, cost, and varietal competition. Each summary is then directly treated as a factor in PRISM. Notably, we use GPT-4.1 model for factor extraction and PRISM implementation, as its knowledge-cutoff is June 2024 and guarantees knowledge absent in 2025. 

In Figure~\ref{fig:apple}, for each type of apple, we report the actual price change rate ((price of 2025 - price of 2024) / price of 2024) (\textcolor{blue}{blue bars}), and the estimated probability of ``the price will increase in 2025'' (\textcolor{red}{red bars}). We compare PRISM with direct LLM prompting, which conducts estimation on the extracted factors (Figure~\ref{fig:app2}) and on the raw report (Figure~\ref{fig:app3}). 
Note that the factor extraction from a long report is highly stochastic and it can greatly impact the estimation, we repeat the extraction and estimation process for 10 times and report the average estimation outcome.

\vspace{0.2cm}
From the result, we see PRISM can provide relatively more promising estimation result. In detail, if we focus on ``Honeycrisp (HoneyC)'', which has the largest increase in 2025, both PRISM and ``Extracted'' give it highest estimated increase probability, although ``Extracted'' has a huge variance in its predictions. For ``Granny Smith (G\_Smith)'', which has the largest decrease, only PRISM can give it a lower expectation than other apple varieties. However, it is difficult to have more fine-grained comparison for PRISM and other strategies, e.g, to compare ``Fuji vs Red Delicious (Red\_D)''. In Figure~\ref{fig:honey-crisp} and Figure~\ref{fig:granny-smith} in Appendix~\ref{sec:addl-results}, we provide all the factor details for ``Honeycrisp'' and ``Granny Smith'', showing the prediction of PRISM is reasonable and easy to interpret.

\begin{wrapfigure}{r}{0.5\textwidth}
\vspace{-0.4cm}
  \centering
  \begin{minipage}{0.49\textwidth} 
    \begin{subfigure}{0.48\linewidth}
      \includegraphics[width=1.1\linewidth]{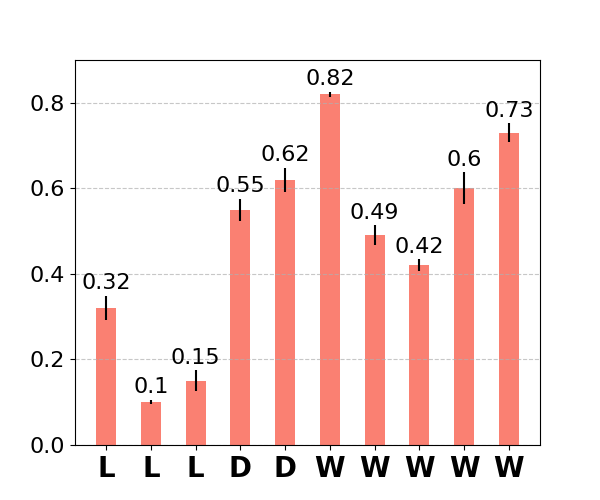}
\vspace{-0.6cm}
      \caption{PRISM}
    \end{subfigure}\hfill
    \begin{subfigure}{0.48\linewidth}
\includegraphics[width=1.1\linewidth]{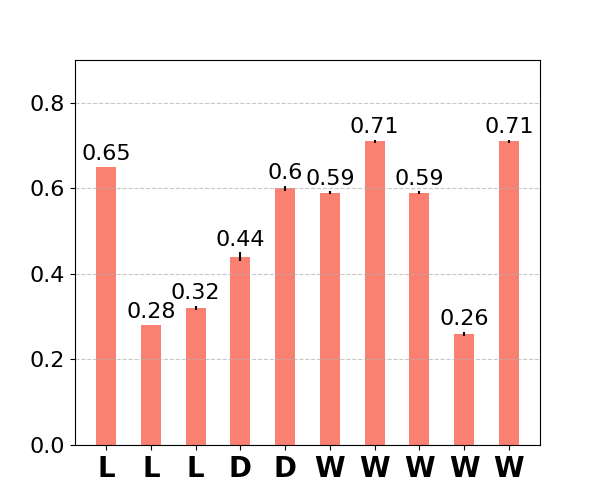}
\vspace{-0.6cm}
      \caption{Raw}
    \end{subfigure}
\vspace{-0.2cm}
    \caption{Football match result prediction.}
    \label{fig:football}
\vspace{-0.3cm}
  \end{minipage}
\end{wrapfigure}

\textbf{Predicting football matches}. In this study, we randomly choose 10 English football matches in 2025, and we leverage PRISM to estimate the probability of: ``the home team will win''. Here, we use GPT-5-mini, which is released before but relatively close to match dates. We collect pre-match reports (\href{https://www.football365.com/match-preview}{sourced from Footbal365}) and extract key features from five aspects: squad quality, head-to-head records, recent form,  player availability and fitness, and  external conditions.
 Figure~\ref{fig:football} shows the estimated winning probability (\textbf{red bars}) and the match results (\textbf{x-axis}), where ``L, D, W'' denote ``Lose, Draw and Win''. Since the reports are relatively short, we only compare PRISM with LLM direct prompting from the raw texts. From the result, we can see PRISM can correctly predict the ``Lose'' cases by giving them a low prediction value, and it gives the two ``Draw'' matches winning probability around 0.5-0.6. For the ``Win'' games, it can tell 2 of out 5 winning matches by giving them scores over 0.7. Interestingly, we find PRISM tends to focus on accounting the factors themselves during prediction, while direct LLM prompting tends to rely on overall impressions, usually assuming the stronger teams are more likely to win (see the example in Figure~\ref{fig:man_t} in Appendix~\ref{sec:addl-results}).

\vspace{-0.4cm}
\subsection{Additional Studies}\label{sec:ablation}
\vspace{-0.3cm}

In this subsection, we further answer three important questions regarding PRISM: (1) Can PRISM generalize to multi-class prediction? (2) Can it handle feature interaction~\citep{hall1999correlation}? (3) How is the computational efficiency of PRISM?

\begin{wraptable}{r}{0.30\textwidth}
\vspace{-10pt}
\centering

\resizebox{0.30\textwidth}{!}{
\begin{tabular}{l c}
\hline
Method & Accuracy \\
\hline
1shot & 0.437 \\
10shot & 0.453 \\
PRISM (Ours) & 0.613 \\
\hline
\end{tabular}
}
\caption{Accuracy on the UCI-Wine.}
\label{tab:multiclass_wine}
\vspace{-10pt}
\end{wraptable}

\textbf{Can PRISM generalize to multi-class prediction?}
We provide a concise strategy to extend PRISM to multi-class prediction tasks. Consider a prediction task with $M$ possible classes, in PRISM, we aim to quantify the marginal contribution of each factor $i$ to each possible class $m$ among all possible classes $m\in\{1,2,...,M\}$. In detail, given a background set $S$ and an interested factor $i$ (see Definition 1 of our paper),
we query the LLM to estimate the probability of $x_S$ and $x_{S\cup\{i\}}$ to belong to each class $m\in\{1,2,...m\}$. We denote these LLM generated estimations as $p^m(x_S)$ and $p^m(x_{S\cup\{i\}})$. Similar to Section~\ref{sec:method}, we can use these estimated probability values to calculate
\(f^m(x_{S\cup\{i\}})-f^m(x_S)
\) which is the logit difference, and then get the Shapley value $\phi^m_i$ to each class $m$. In the experiment, we conduct a study on UCI-Wine dataset (3 classes) which is to predict the category of different types of wines. We compare PRISM with directly prompting, which let LLM to choose the most likely class among the three options.  In the setting, we consider 1 time prompting (1shot) or asking 10 times and get the majority voting (10shot).
The accuracy reported in Table~\ref{tab:multiclass_wine} shows PRISM can greatly outperform both baselines. 
This significant gap may arise from direct prompting cannot well distinguish Class 2 (Grignolino) from Class 3 (Barbera), which are frequently confused due to their highly overlapping feature distributions. In contrast, PRISM introduces pairwise comparisons that amplify these small differences.

\textbf{Can PRISM handle feature interaction?} In ML probability estimation, feature interaction usually happens as the contribution of one factor to the outcome also highly depends on the condition of another factor. Taking this into consideration is necessary for precise and reliable estimation. Figure~\ref{fig:interaction} demonstrates that PRISM can indeed consider feature interactions when calculating the Shapley values and the final estimation. In detail, we focus on the task to predict loan default in Lending dataset under GPT-4.1-mini (see Section~\ref{sec:exp1}), where feature interactions can naturally occur between ``Loan Amount'' and ``Annual Income''. In Figure~\ref{fig:interaction1}, we compute the average Shapley value of Loan Amount across different instances, conditioned on loan amount (vertical axis) and annual income (horizontal axis).  From the result, we can see that the individuals with annual income 120K+ receive a Shapley value of 0.18 for the factor ``having a loan amount over 30K+''. It is lower than the values assigned to people with income in 0–60K or 60K–120K. This indicates that, within PRISM, for individuals earning above 120K, having a loan amount over 30K+ \textbf{is not considered as risky as those with lower incomes}. Similarly, Figure~\ref{fig:interaction2} computes the average Shapley value of Annual Income. We can see PRISM believes that having a factor ``Annual Income over 120K+'' can greatly reduce the risk (-0.65), if their loan amount is over 30K+, and it only moderately reduces the risk (-0.26), if the loan amount is low, e.g., below 10K. It indeed shows a Shapely value in PRISM does not only rely on its factor of interest, but also other factors.

\begin{wrapfigure}{r}{0.5\textwidth}
\vspace{-0.4cm}
  \centering
  \begin{minipage}{0.52\textwidth} 
    \begin{subfigure}{0.48\linewidth}
      \includegraphics[width=1.1\linewidth]{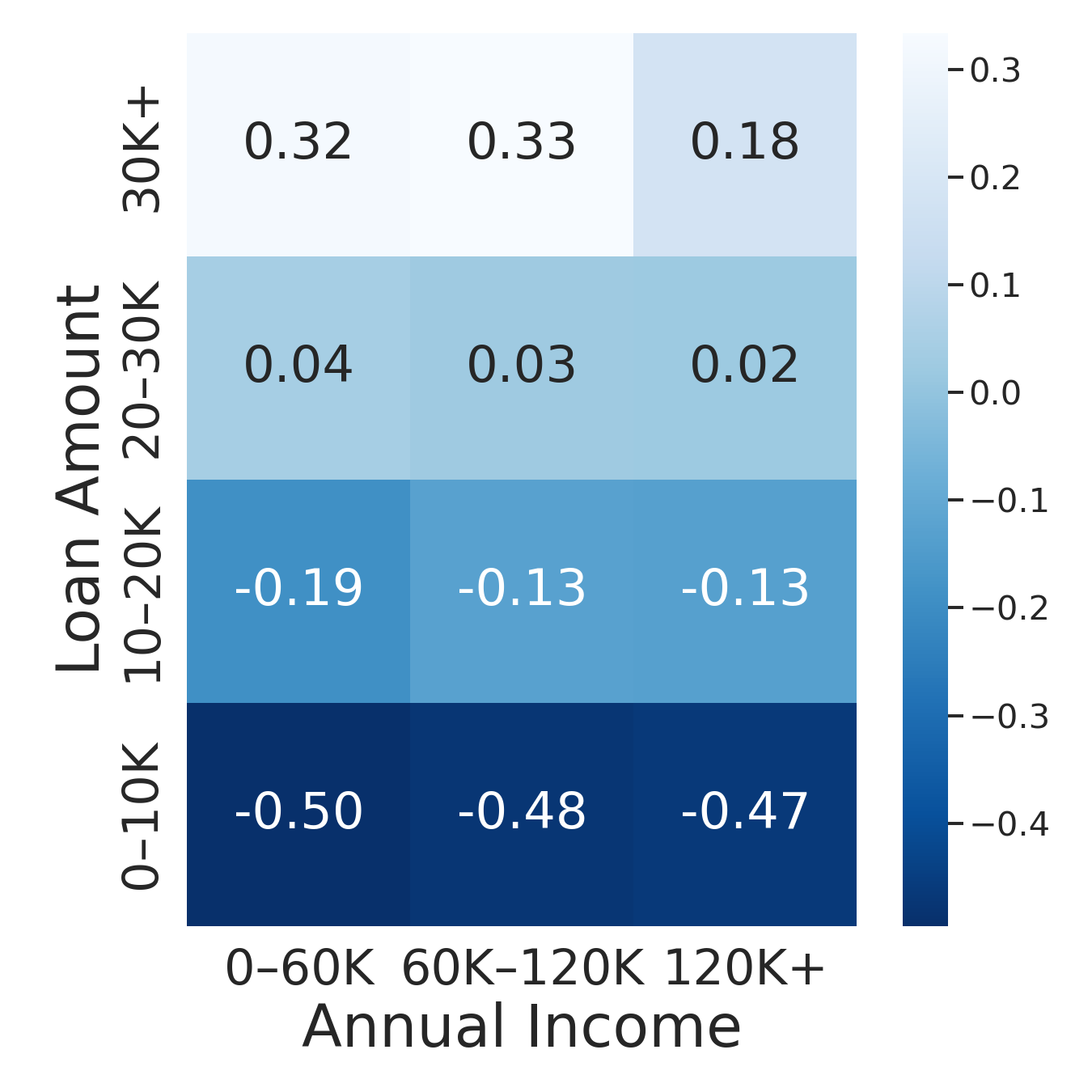}
\vspace{-0.6cm}
      \caption{\small Loan Amount}
      \label{fig:interaction1}
    \end{subfigure}\hfill
    \begin{subfigure}{0.48\linewidth}
\includegraphics[width=1.1\linewidth]{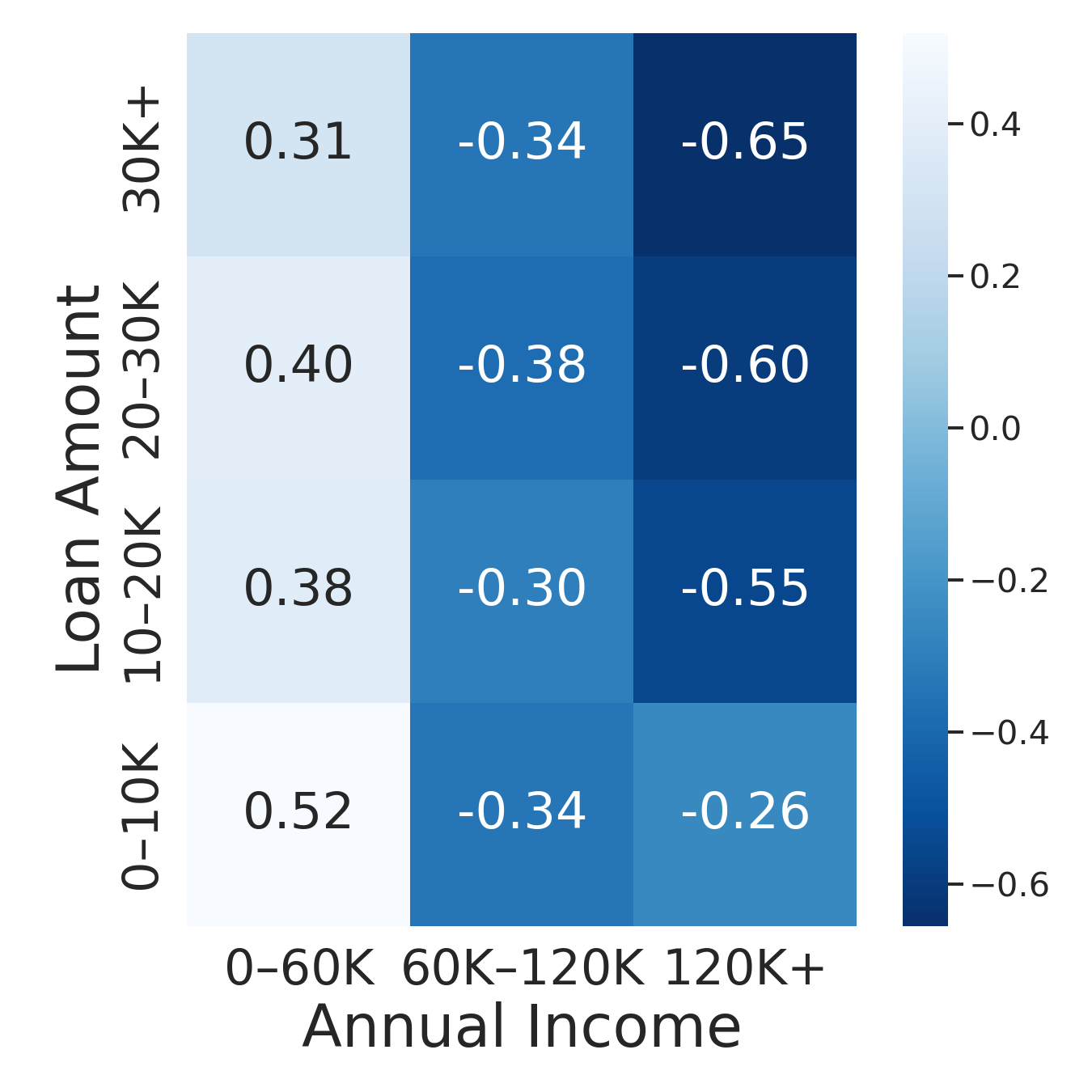}
\vspace{-0.6cm}
      \caption{\small Annual Income}
      \label{fig:interaction2}
    \end{subfigure}
\vspace{-0.2cm}
      \caption{Avg.~Shapley under various conditions}
    \label{fig:interaction}
\vspace{-0.3cm}
  \end{minipage}
\end{wrapfigure}

\textbf{Computational efficiency of PRISM.}  In this part, we analyze the computational efficiency of PRISM.
Table~\ref{tab:effciency} summarizes two notions of complexity: \emph{Query Complexity} counts the number of API requests that each method issues, and \emph{LLM Evaluation Complexity} counts the number of instances that each method need to evaluate. In the table, we compare PRISM with 1-shot direct prompting and $n$-shot direct prompting. For PRISM, it needs to calculate Shapley values for $m$ factors one by one. For each factor, it makes $K$ samplings of background set for comparison. Therefore, it has a query and evaluation complexity $\Theta(mK)$. Tabular-PRISM can input multiple $S$ samplings into one query, so it has a query complexity $\Theta(m)$. In practice, we argue that Tabular-PRISM strategy can greatly reduce the time and token cost, as it saves API calls and evaluate multiple instances in one answer. For the actual time cost, in Stroke dataset (with $m=10, K=10$), Tabular-PRISM takes 92.7s for each instance on average under GPT-4.1-mini non-batched API calling. Apple price prediction using PRISM (GPT-4.1, $m=7, K = 5$) takes around 330s on average, due to large amount of queries, long inputs (each factor is a paragraph) and larger model size. This efficiency can be acceptable if the evaluation size is not large.

\begin{table}[h]
\centering
\vspace{-0.2cm}
\small
\label{tab:stroke-tokens}
\begin{tabular}{ccc}
\hline
\textbf{Metric} & \textbf{Query Complexity} &\textbf{LLM Evaluation Complexity} \\
\hline
1-shot & $\Theta(1)$ & $\Theta(1)$\\ 
$n$-shot & $\Theta(n)$ & $\Theta(n)$\\
Tabular-PRISM & $\Theta(m)$& $\Theta(mK)$\\
PRISM &   $\Theta(mK)$ & $\Theta(mK)$\\
\hline
\end{tabular}
\vspace{-0.2cm}
\caption{Complexity of PRISM and baselines.}
\vspace{-0.3cm}
\label{tab:effciency}
\end{table}

\vspace{-0.3cm}
\section{Conclusion and Limitations}
\vspace{-0.4cm}

In this work, we propose Probability Reconstruction via Shapley Measures (RPSIM) for LLM-based probability estimation tasks. 
In our experiment, we empirically validate its predictive accuracy across multiple benchmark datasets, demonstrating the reliability of the proposed approach.
Compared to direct LLM prompting, PRISM provides enhanced explainability and transparency, thereby enabling more trustworthy use of LLM predictions in high-stakes applications. However, our work has a few limitations. First, it only focuses on the zero-shot setting. In practice, there could be historical records or references available to facilitate the prediction. In such few-shot prediction settings, interpretation can become more challenging, as the system need disentangle whether a prediction arises from its own knowledge or from the provided demonstrations. 
Besides, in Section~\ref{sec:ablation}, we examine the efficiency of PRISM and conclude that its cost can be relatively high, making it practical only when the evaluation size is not too large.

\vspace{-0.4cm}
\section{Reproducibility statement}
\vspace{-0.3cm}
We release an anonymous repository with full source code and our processed datasets (if they are not publicly available) at \href{https://anonymous.4open.science/r/prism-62B5/}{https://anonymous.4open.science/r/prism-62B5/}. In Appendix~\ref{app:algo_and_thm}, we provide the detailed algorithm sketches and theorem proof. 
Appendix~\ref{sec:addl-datasets} and Appendix~\ref{sec:addl-baselines} provide a complete description of our data pre-processing pipelines and the mentioned baselines. The experimental setup, including hyperparameters, model configurations are introduced in the main text. Appendix~\ref{sec:addl-results} contains additional examples, results and interpretations to aid further verification. Appendix~\ref{sec:prompts} lists the exact prompts used for all language model components. 

\bibliography{reference}

@article{winkler2019probability,
  title={Probability forecasts and their combination: A research perspective},
  author={Winkler, Robert L and Grushka-Cockayne, Yael and Lichtendahl Jr, Kenneth C and Jose, Victor Richmond R},
  journal={Decision Analysis},
  volume={16},
  number={4},
  pages={239--260},
  year={2019},
  publisher={INFORMS}
}

@misc{stroke_kaggle,
  author       = {{Kaggle}},
  title        = {Stroke Prediction Dataset},
  year         = {2021},
  howpublished = {\url{https://www.kaggle.com/datasets/fedesoriano/stroke-prediction-dataset}}
}

@phdthesis{hall1999correlation,
  title={Correlation-based feature selection for machine learning},
  author={Hall, Mark A},
  year={1999},
  school={The University of Waikato}
}

@article{shapley1953value,
  title={A value for n-person games},
  author={Shapley, Lloyd S and others},
  year={1953},
  publisher={Princeton University Press Princeton}
}

@article{vstrumbelj2014explaining,
  title={Explaining prediction models and individual predictions with feature contributions},
  author={{\v{S}}trumbelj, Erik and Kononenko, Igor},
  journal={Knowledge and information systems},
  volume={41},
  number={3},
  pages={647--665},
  year={2014},
  publisher={Springer}
}

@article{liu2024aligning,
  title={Aligning with human judgement: The role of pairwise preference in large language model evaluators},
  author={Liu, Yinhong and Zhou, Han and Guo, Zhijiang and Shareghi, Ehsan and Vuli{\'c}, Ivan and Korhonen, Anna and Collier, Nigel},
  journal={arXiv preprint arXiv:2403.16950},
  year={2024}
}

@article{qin2023large,
  title={Large language models are effective text rankers with pairwise ranking prompting},
  author={Qin, Zhen and Jagerman, Rolf and Hui, Kai and Zhuang, Honglei and Wu, Junru and Yan, Le and Shen, Jiaming and Liu, Tianqi and Liu, Jialu and Metzler, Donald and others},
  journal={arXiv preprint arXiv:2306.17563},
  year={2023}
}

@article{lipovetsky2001analysis,
  title={Analysis of regression in game theory approach},
  author={Lipovetsky, Stan and Conklin, Michael},
  journal={Applied stochastic models in business and industry},
  volume={17},
  number={4},
  pages={319--330},
  year={2001},
  publisher={Wiley Online Library}
}

@article{wang2022self,
  title={Self-consistency improves chain of thought reasoning in language models},
  author={Wang, Xuezhi and Wei, Jason and Schuurmans, Dale and Le, Quoc and Chi, Ed and Narang, Sharan and Chowdhery, Aakanksha and Zhou, Denny},
  journal={arXiv preprint arXiv:2203.11171},
  year={2022}
}

@misc{lendingclub_kaggle,
  title        = {Lending Club Loan Data},
  author       = {{LendingClub}},
  year         = {2007--2018},
  howpublished = {\url{https://www.kaggle.com/datasets/wordsforthewise/lending-club}},
  note         = {Accessed: 2025-09-20}
}

@article{chicco2020machine,
  title={Machine learning can predict survival of patients with heart failure from serum creatinine and ejection fraction alone},
  author={Chicco, Davide and Jurman, Giuseppe},
  journal={BMC medical informatics and decision making},
  volume={20},
  number={1},
  pages={16},
  year={2020},
  publisher={Springer}
}

@article{lathief2024quantifying,
  title={Quantifying risk in investment decision-making},
  author={Lathief, Jaheera Thasleema Abdul and Kumaravel, Sunitha Chelliah and Velnadar, Regina and Vijayan, Ravi Varma and Parayitam, Satyanarayana},
  journal={Journal of Risk and Financial Management},
  volume={17},
  number={2},
  pages={82},
  year={2024},
  publisher={MDPI}
}

@article{chung2024large,
  title={Large language model capabilities in perioperative risk prediction and prognostication},
  author={Chung, Philip and Fong, Christine T and Walters, Andrew M and Aghaeepour, Nima and Yetisgen, Meliha and O’Reilly-Shah, Vikas N},
  journal={JAMA surgery},
  volume={159},
  number={8},
  pages={928--937},
  year={2024},
  publisher={American Medical Association}
}

@article{xie2024finben,
  title={Finben: A holistic financial benchmark for large language models},
  author={Xie, Qianqian and Han, Weiguang and Chen, Zhengyu and Xiang, Ruoyu and Zhang, Xiao and He, Yueru and Xiao, Mengxi and Li, Dong and Dai, Yongfu and Feng, Duanyu and others},
  journal={Advances in Neural Information Processing Systems},
  volume={37},
  pages={95716--95743},
  year={2024}
}

@incollection{bella2010calibration,
  title={Calibration of machine learning models},
  author={Bella, Antonio and Ferri, C{\`e}sar and Hern{\'a}ndez-Orallo, Jos{\'e} and Ram{\'\i}rez-Quintana, Mar{\'\i}a Jos{\'e}},
  booktitle={Handbook of Research on Machine Learning Applications and Trends: Algorithms, Methods, and Techniques},
  pages={128--146},
  year={2010},
  publisher={IGI Global Scientific Publishing}
}

@article{brown2020language,
  title={Language models are few-shot learners},
  author={Brown, Tom and Mann, Benjamin and Ryder, Nick and Subbiah, Melanie and Kaplan, Jared D and Dhariwal, Prafulla and Neelakantan, Arvind and Shyam, Pranav and Sastry, Girish and Askell, Amanda and others},
  journal={Advances in neural information processing systems},
  volume={33},
  pages={1877--1901},
  year={2020}
}

@article{rajkomar2019machine,
  title={Machine learning in medicine},
  author={Rajkomar, Alvin and Dean, Jeffrey and Kohane, Isaac},
  journal={New England Journal of Medicine},
  volume={380},
  number={14},
  pages={1347--1358},
  year={2019},
  publisher={Mass Medical Soc}
}

@article{rostami2025hierarchical,
  title={Hierarchical time series forecasting in emergency medical services},
  author={Rostami-Tabar, Bahman and Hyndman, Rob J},
  journal={Journal of Service Research},
  volume={28},
  number={2},
  pages={278--295},
  year={2025},
  publisher={SAGE Publications Sage CA: Los Angeles, CA}
}

@article{raghupathi2021contemporary,
  title={Contemporary business analytics: An overview},
  author={Raghupathi, Wullianallur and Raghupathi, Viju},
  journal={Data},
  volume={6},
  number={8},
  pages={86},
  year={2021},
  publisher={MDPI}
}

@article{fildes2022retail,
  title={Retail forecasting: Research and practice},
  author={Fildes, Robert and Ma, Shaohui and Kolassa, Stephan},
  journal={International Journal of Forecasting},
  volume={38},
  number={4},
  pages={1283--1318},
  year={2022},
  publisher={Elsevier}
}

@article{chowdhury2021covid,
  title={COVID-19 pandemic related supply chain studies: A systematic review},
  author={Chowdhury, Priyabrata and Paul, Sanjoy Kumar and Kaisar, Shahriar and Moktadir, Md Abdul},
  journal={Transportation Research Part E: Logistics and Transportation Review},
  volume={148},
  pages={102271},
  year={2021},
  publisher={Elsevier}
}

@inproceedings{Ren2025PredictingLM,
  title={Predicting Language Models'Success at Zero-Shot Probabilistic Prediction},
  author={Kevin Ren and Santiago Cortes-Gomez and Carlos Miguel Pati{\~n}o and Ananya Joshi and Ruiqi Lyu and Jingjing Tang and Alistair Turcan and Khurram Yamin and Steven Wu and Bryan Wilder},
  year={2025},
  url={https://api.semanticscholar.org/CorpusID:281411477}
}

@inproceedings{sui2024table,
  title={Table meets llm: Can large language models understand structured table data? a benchmark and empirical study},
  author={Sui, Yuan and Zhou, Mengyu and Zhou, Mingjie and Han, Shi and Zhang, Dongmei},
  booktitle={Proceedings of the 17th ACM International Conference on Web Search and Data Mining},
  pages={645--654},
  year={2024}
}

@inproceedings{hegselmann2023tabllm,
  title={Tabllm: Few-shot classification of tabular data with large language models},
  author={Hegselmann, Stefan and Buendia, Alejandro and Lang, Hunter and Agrawal, Monica and Jiang, Xiaoyi and Sontag, David},
  booktitle={International conference on artificial intelligence and statistics},
  pages={5549--5581},
  year={2023},
  organization={PMLR}
}

@article{feng2024bird,
  title={Bird: A trustworthy bayesian inference framework for large language models},
  author={Feng, Yu and Zhou, Ben and Lin, Weidong and Roth, Dan},
  journal={arXiv preprint arXiv:2404.12494},
  year={2024}
}

@article{nafar2025extracting,
  title={Extracting Probabilistic Knowledge from Large Language Models for Bayesian Network Parameterization},
  author={Nafar, Aliakbar and Venable, Kristen Brent and Cui, Zijun and Kordjamshidi, Parisa},
  journal={arXiv preprint arXiv:2505.15918},
  year={2025}
}

@article{wei2022chain,
  title={Chain-of-thought prompting elicits reasoning in large language models},
  author={Wei, Jason and Wang, Xuezhi and Schuurmans, Dale and Bosma, Maarten and Xia, Fei and Chi, Ed and Le, Quoc V and Zhou, Denny and others},
  journal={Advances in neural information processing systems},
  volume={35},
  pages={24824--24837},
  year={2022}
}

@book{berger2013statistical,
  title={Statistical decision theory and Bayesian analysis},
  author={Berger, James O},
  year={2013},
  publisher={Springer Science \& Business Media}
}

@article{rumelhart1986learning,
  title={Learning representations by back-propagating errors},
  author={Rumelhart, David E and Hinton, Geoffrey E and Williams, Ronald J},
  journal={nature},
  volume={323},
  number={6088},
  pages={533--536},
  year={1986},
  publisher={Nature Publishing Group UK London}
}

@article{quinlan1986induction,
  title={Induction of decision trees},
  author={Quinlan, J. Ross},
  journal={Machine learning},
  volume={1},
  number={1},
  pages={81--106},
  year={1986},
  publisher={Springer}
}

@article{lundberg2017unified,
  title={A unified approach to interpreting model predictions},
  author={Lundberg, Scott M and Lee, Su-In},
  journal={Advances in neural information processing systems},
  volume={30},
  year={2017}
}

@article{goldshmidt2024tokenshap,
  title={Tokenshap: Interpreting large language models with monte carlo shapley value estimation},
  author={Goldshmidt, Roni and Horovicz, Miriam},
  journal={arXiv preprint arXiv:2407.10114},
  year={2024}
}

@article{amara2024syntaxshap,
  title={Syntaxshap: Syntax-aware explainability method for text generation},
  author={Amara, Kenza and Sevastjanova, Rita and El-Assady, Mennatallah},
  journal={arXiv preprint arXiv:2402.09259},
  year={2024}
}

@dataset{uci_adult_1996,
  author    = {Becker, Barry and Kohavi, Ronny},
  title     = {Adult},
  year      = {1996},
  publisher = {UCI Machine Learning Repository},
  doi       = {10.24432/C5XW20},
  url       = {https://archive.ics.uci.edu/dataset/2/adult},
  urldate   = {2025-09-21}
}

@online{gpt41mini_docs,
  author  = {{OpenAI}},
  title   = {GPT-4.1 mini — OpenAI API},
  year    = {2024},
  url     = {https://platform.openai.com/docs/models/gpt-4.1-mini},
  urldate = {2024-09-21}
}

@techreport{gemini25pro_card,
  author  = {{Google DeepMind}},
  title   = {Gemini 2.5 Pro — Model Card},
  year    = {2025},
  month   = {June},
  url     = {https://storage.googleapis.com/model-cards/documents/gemini-2.5-pro.pdf},
  urldate = {2025-09-21}
}

@article{johnson2016mimic,
  title={MIMIC-III, a freely accessible critical care database},
  author={Johnson, Alistair EW and Pollard, Tom J and Shen, Lu and Lehman, Li-wei H and Feng, Mengling and Ghassemi, Mohammad and Moody, Benjamin and Szolovits, Peter and Anthony Celi, Leo and Mark, Roger G},
  journal={Scientific data},
  volume={3},
  number={1},
  pages={1--9},
  year={2016},
  publisher={Nature Publishing Group}
}
\bibliographystyle{unsrtnat}

\appendix

\section{Theory and Algorithm}\label{app:algo_and_thm}
\subsection{Detailed Algorithm}\label{app:algo}
\label{sec:algo_overview}

Guided by the Shapley additivity property (Property~\ref{property1}), PRISM estimates each factor's marginal effect via paired contrasts (realized vs.\ baseline value of that factor), averages these effects over randomly sampled contexts, and then \emph{reconstructs} the model output, finally mapping to a calibrated probability.

\textbf{Setup.}
Let $\mathcal{I}=\{1,\dots,m\}$ index factors, $x=(x_1,\ldots,x_m)$ be the instance,
$b=(b_1,\ldots,b_m)$ be designated baselines ,
$f$ be an evaluation oracle that returns either a \emph{probability} in $[0,1]$ or a \emph{logit} in $\mathbb{R}$ when only a subset of factors is revealed,
and $\Pi_i$ be a distribution over background sets $S\subseteq\mathcal{I}\setminus\{i\}$.
Given $S$, we write $x_S$ for the partial specification that reveals $\{x_j:j\in S\}$.

\begin{algorithm}[h]
\caption{PRISM (Probability Reconstruction via Shapley Measures)}
\label{alg:prism}
\begin{algorithmic}[1]
\Require Instance $x\in\mathcal{X}^m$; oracle $f$; sampling distributions $\{\Pi_i\}_{i=1}^m$; budget $K$.
\Ensure Probability $\hat{p}(x)$ and factor attributions $\{\hat{\phi}_i(x)\}$.
\State $\phi_0 \gets f(x_\emptyset)$
\For{$i \in \mathcal{I}$}
    \State $\Delta \gets 0$
    \For{$k=1$ to $K$}
        \State Sample $S \sim \Pi_i$
        \State $\Delta \gets \Delta + \big(f(x_{S\cup\{i\}}) - f(x_S)\big)$ 
    \EndFor
    \State $\hat{\phi}_i(x) \gets \Delta / K$
\EndFor
\State $\hat{z}(x) \gets \phi_0 + \sum_i \hat{\phi}_i(x)$,\quad $\hat{p}(x) \gets \sigma(\hat{z}(x))$
\State \textbf{return} $\hat{p}(x)$ and $\{\hat{\phi}_i(x)\}$
\end{algorithmic}
\end{algorithm}

\begin{algorithm}[h]
\caption{Tabular-PRISM (Batched realized vs.\ baseline contrasts)}
\label{alg:tabular_prism}
\begin{algorithmic}[1]
\Require Instance $x\in\mathcal{X}^m$; oracle $f$; designated baselines $b$; sampling distributions $\{\Pi_i\}_{i=1}^m$; budget $K$.
\Ensure Probability $\hat{p}(x)$ and factor attributions $\{\hat{\phi}_i(x)\}$.
\State $\phi_0 \gets f(x_\emptyset)$
\For{$i \in \mathcal{I}$}
    \State For $k=1,\dots,K$, sample $S_i^{(k)} \sim \Pi_i$
    \State \textbf{Batch-Query:} evaluate all pairs
    \[
      \big\{ f(x_{S_i^{(k)}\cup\{i\}}|i{:=}x_i),\ f(x_{S_i^{(k)}\cup\{i\}}|i{:=}b_i) \big\}_{k=1}^K
    \]
    \State $\hat{\phi}_i(x)\gets \tfrac{1}{K}\sum_{k=1}^K \Big(f(x_{S_i^{(k)}\cup\{i\}}|i{:=}x_i)-f(x_{S_i^{(k)}\cup\{i\}}|i{:=}b_i)\Big)$
\EndFor
\State $\hat{z}(x) \gets \phi_0 + \sum_i \hat{\phi}_i(x)$,\quad $\hat{p}(x) \gets \sigma(\hat{z}(x))$
\State \textbf{return} $\hat{p}(x)$ and $\{\hat{\phi}_i(x)\}$
\end{algorithmic}
\end{algorithm}

\textbf{Explanation for Alg.~\ref{alg:prism} .}
(1) Query the oracle on the empty specification to obtain the intercept $\phi_0=f(x_{\emptyset})$.
(2--3) Start looping over factors $i\in\mathcal I$ and initialize the accumulator $\Delta\leftarrow 0$ for factor $i$.
(4--6) For $k=1,\ldots,K$, draw a background set $S\sim\Pi_i$ and accumulate the presence/absence contrast
$f(x_{S\cup\{i\}})-f(x_S)$ into $\Delta$.
(7) Average the $K$ contrasts to estimate the contribution of factor $i$: $\hat\phi_i(x)\leftarrow \Delta/K$.
(8) Reconstruct the score by additivity, $\hat z(x)=\phi_0+\sum_i\hat\phi_i(x)$, and map through the logistic link to get the probability $\hat p(x)=\sigma(\hat z(x))$.
(9) Return the probability and the per-factor attributions, i.e., $\hat p(x)$ and $\{\hat\phi_i(x)\}$.

\textbf{Explanation for Alg.~\ref{alg:tabular_prism} .}
(1) Obtain the intercept by querying $f(x_{\emptyset})$ so $\phi_0=f(x_{\emptyset})$.
(2--3) For each factor $i$, sample $K$ background contexts $S_i^{(k)}\sim\Pi_i$ to form the evaluation batches.
(4) In each sampled context, evaluate a realized/baseline pair in batch:
$f(x_{S_i^{(k)}\cup\{i\}}\!\mid i:=x_i)$ and $f(x_{S_i^{(k)}\cup\{i\}}\!\mid i:=b_i)$.
(5) Average the $K$ realized–baseline differences to obtain
$\hat\phi_i(x)=\tfrac{1}{K}\sum_{k=1}^K\big(f(x_{S_i^{(k)}\cup\{i\}}\!\mid i:=x_i)-f(x_{S_i^{(k)}\cup\{i\}}\!\mid i:=b_i)\big)$.
(6) Reconstruct the score $\hat z(x)=\phi_0+\sum_i\hat\phi_i(x)$ and apply the logistic link to produce $\hat p(x)=\sigma(\hat z(x))$.
(7) Return $\hat p(x)$ together with the attributions $\{\hat\phi_i(x)\}$.

\subsection{Theorem proof}\label{app:thm}


\mainprop*

\begin{proof}
We use the permutation form of the Shapley value, which is equivalent to the subset form. Let $\pi$ be a permutation of $\mathcal{I}$, and let $\Pi(\mathcal{I})$ be the set of all $m!$ permutations. For a permutation $\pi$ and an index $i$, let $\mathrm{Pre}_i(\pi)$ denote the set of features that appear before $i$ in $\pi$. The permutation form of the Shapley value of $i$ for the game $v_r(S) = f\big([x_S,\; r_{\bar S}]\big)$ can be written as

\begin{align}\label{eq:permutation-shap}
\phi_i^{(r)}=
\frac{1}{m!}\sum_{\pi\in\Pi(\mathcal{I})}
\Big(
v_r\big(\mathrm{Pre}_i(\pi)\cup\{i\}\big)
-
v_r\big(\mathrm{Pre}_i(\pi)\big)
\Big). 
\end{align}

Taking equation (\ref{eq:permutation-shap}) to the $\sum_{i=1}^m \phi_i^{(r)}$ part yields
\begin{align}\label{eq:permutation-shap-sum}
\sum_{i=1}^m \phi_i^{(r)}
=
\frac{1}{m!}\sum_{\pi\in\Pi(\mathcal{I})}
\sum_{i=1}^m
\Big(
v_r\big(\mathrm{Pre}_i(\pi)\cup\{i\}\big)
-
v_r\big(\mathrm{Pre}_i(\pi)\big)
\Big).
\end{align}

For a fixed permutation $\pi=(\pi_1,\pi_2,\dots,\pi_m)\in\Pi(\mathcal{I})$, we have $\mathrm{Pre}_i(\pi)\cup\{i\} = \mathrm{Pre}_{i+1}(\pi)$ for $1 \leq i \leq m-1$. Therefore
\begin{align}\label{eq:permutation-sum-cancel}
\sum_{i=1}^m
\Big(
v_r\big(\mathrm{Pre}_i(\pi)\cup\{i\}\big)
-
v_r\big(\mathrm{Pre}_i(\pi)\big)
\Big)
=
v_r(\mathcal{I}) - v_r(\emptyset).
\end{align}
Substituting equation (\ref{eq:permutation-sum-cancel}) into (\ref{eq:permutation-shap-sum}) gives
\begin{align}
\sum_{i=1}^m \phi_i^{(r)}
=
\frac{1}{m!}\sum_{\pi\in\Pi(\mathcal{I})}\big(v_r(\mathcal{I})-v_r(\emptyset)\big)
=
v_r(\mathcal{I})-v_r(\emptyset).
\end{align}
Finally, given $\phi_0^{(r)}:=v_r(\emptyset)$ and $v_r(\mathcal{I}) = f\big([x_\mathcal{I},\; r_{\emptyset}]\big) = f(x)$, we have
\begin{align}
f(x) = v_r(\mathcal{I}) = \phi_0^{(r)} + \sum_{i=1}^m \phi_i^{(r)}.
\end{align}
Proof complete.
\end{proof}

\section{Additional Details for Experiments}

\subsection{Datasets}\label{sec:addl-datasets}
\textbf{Stroke}: The stroke prediction dataset contains health, demographic, and lifestyle information for 5,110 patients, with the goal of predicting stroke occurrence. Each record includes variables such as age, hypertension, heart disease, marital status, work type, body mass index (BMI) and so on. 

\textbf{Heart Disease}: The heart disease dataset integrates multiple clinical heart disease datasets and contains 918 patient records with 11 features related to demographics, clinical measurements, and lifestyle factors. The target variable indicates the presence or absence of heart disease.

\textbf{Adult Census}: The adult census contains 48,842 records from the 1994 U.S. Census, with 14 demographic and employment-related features such as age, education, occupation, work hours, and marital status. The target variable indicates whether an individual’s annual income exceeds \$50,000.

\textbf{Lending}: The lending dataset contains peer-to-peer loans issued from 2007--2018, with borrower and loan features such as income, debt-to-income ratio, FICO score, interest rate, loan amount, and purpose. The target variable indicates whether the loan will default.

Before applying above datasets for evaluation, we preprocess the datasets as follows:

(1) In order to make LLMs better understand the datasets, we rename the names of columns of datasets. For example, for \textbf{stroke} dataset, we rename ``avg\_glucose\_level'' as ``average glucose level" and rename ``bmi'' as ``Body Mass Index''.

(2) For some columns, the values may be some abbreviation or with unclear meanings. For example, the values of attribute ``gender'' in dataset \textbf{stroke} are $\{0,1\}$, so we convert ``0'' into ``Female'' and ``1'' into ``Male''. Besides, for attribute ``Chest Pain Type'' in dataset \textbf{heart disease}, the values are abbreviations such as ``ATA'', ``NAP''. We also convert them into their full names, e.g., ``ATA'' $\rightarrow$ ``Atypical Angina'' and ``NAP'' $\rightarrow$ ``Non-Anginal Pain''.

(3) We dropped some attributes of the datasets so that them LLMs can evaluation each data point with more efficiency.

The final data points examples and columns of each dataset are summarized in Figure (\ref{fig:data_example}).

\begin{figure}[htbp]
  \centering
  \begin{subfigure}{0.24\textwidth}
    \centering
    \includegraphics[width=\linewidth]{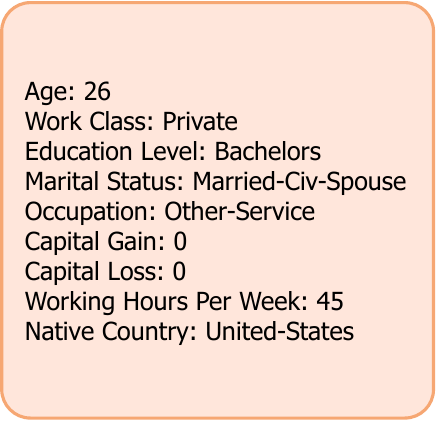}
    \caption{Adult Census}
  \end{subfigure}\hfill
  \begin{subfigure}{0.24\textwidth}
    \centering
    \includegraphics[width=\linewidth]{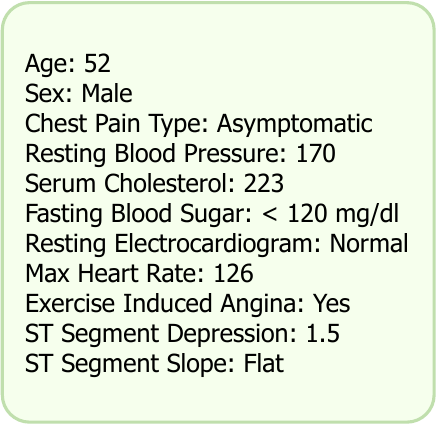}
    \caption{Heart Disease}
  \end{subfigure}\hfill
  \begin{subfigure}{0.24\textwidth}
    \centering
    \includegraphics[width=\linewidth]{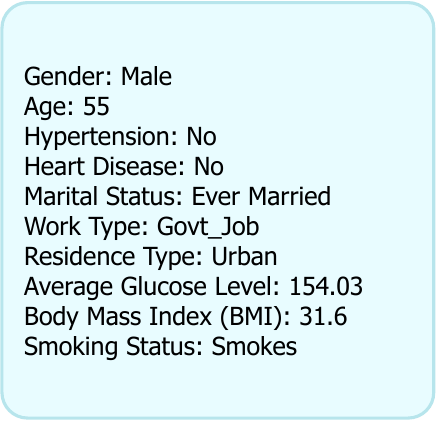}
    \caption{Stroke}
  \end{subfigure}\hfill
  \begin{subfigure}{0.24\textwidth}
    \centering
    \includegraphics[width=\linewidth]{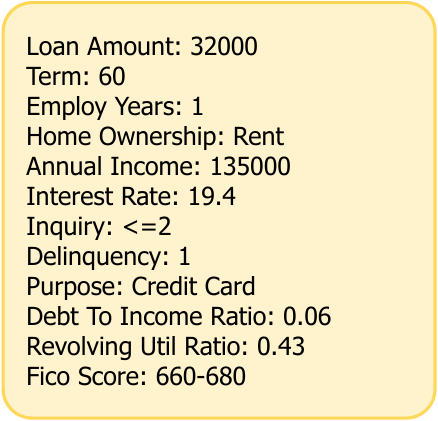}
    \caption{Lending}
  \end{subfigure}
    \caption{Data examples of the datasets.}
    \label{fig:data_example}
\end{figure}

\subsection{Baseline Methods}\label{sec:addl-baselines}

\textbf{1/5/10shot\_level}: We perform multiple shots or multiple trials on directly asking LLM for probability as in prompt Figure (\ref{fig:prompt-nshot_level}). The LLM is instructed to select a linguistic probability description to represent the probability. Before evaluation, we map the descriptions into numerical values as follows: ``very unlikely": 0.05, ``unlikely": 0.2, ``somewhat unlikely": 0.35, ``neutral": 0.5, ``somewhat likely": 0.65, ``likely": 0.8, ``very likely": 0.95. For 1shot\_level, we only perform one trial; for 5shot\_level and 10shot\_level we perform 5 and 10 trials, and select the most common value as output.

\textbf{1/5/10shot\_score}: We perform multiple shots similar to 1/5/10shot\_level. The difference is that we instruct the LLM to output the numerical probability directly, as shown in the prompt Figure (\ref{fig:prompt-nshot_score}). We take the average probability as output.

\textbf{ICL-5+5, ICL-10+10}: We perform in-context learning on the 4 datasets with available training data. There are 5 positive 5 negative data in ICL-5+5 and 10 negative 10 positive data in ICL-10+10. The training data are excluding the evaluation data, with large mutual differences from the arbitrary candidate set. The order of the training data inside the prompt is shuffled randomly to minimize confusion. Detailed prompt example is shown as follows \ref{fig:prompt-ICL}.

\textbf{Contrast}: We perform two queries in opposite directions. The first query apply the same prompt as in 1shot\_level Figure (\ref{fig:prompt-nshot_level}). The second query is trying to ask the question in the opposite way, such as "How likely is this patient to NOT have a stroke?" instead of "How likely is this patient to have a stroke?". Two queries are normalized so that they sum up to 1, and the normalized positive answer will be the output.

\textbf{BIRD}: We follow BIRD's method with slight modification. First, since BIRD could not handle numerical datatype features, those features will be turned into bins before treated as input. In order to better utilize prior knowledge of LLM, binning will base on prior evidence that extract the most characteristic of the stage, therefore the interval of each bin may not have the same length. For example, regarding to the numerical feature ``Resting Blood Pressure" in the Heart Disease dataset, the bins are: Normal `80-120', Pre-hypertension `120-130', Hypertension Stage 1 `130-140', Hypertension Stage 2 `140-180', Hypertensive crisis `180-200'. Second, instead of probability mapping in BIRD \{``$f_j$ supports outcome $i$": 75\%, ``$f_j$ is neutral": 50\%, ``$f_j$ supports opposite outcome $\neg i$": 25\%\}, we are using a denser mapping which is the same as introduced in 1shot\_level. 

In general, the baseline BIRD is performed in 4 steps: (1) Initializing the prediction probability given each independent feature, by querying LLM using similar prompt as in 1shot\_level Figure (\ref{fig:prompt-nshot_level}), only replace ``Person information: ..." by ``Given that [\textit{factor}] = [\textit{value}]". (2) Generating stochastic training data, by randomly choice a bin for each feature, the prompt is the same as in 1shot\_level Figure (\ref{fig:prompt-nshot_level}). (3) Training the BIRD constrained optimization method. (4) Inferring to the evaluation dataset.

\subsection{Reference instances settings}
\label{app:anchor}

\begin{table}[!h]
\centering
\small
\begin{tabular}{ll}
\hline
\multicolumn{2}{c}{\textbf{Stroke}}\\
\hline
Gender & Male \\
Age & 40.0 \\
Hypertension & No \\
Heart disease & No \\
Marital status & Never Married \\
Residence type & Rural \\
Average glucose level & 90.0 \\
Body Mass Index (BMI) & 24.0 \\
Work type & Private \\
Smoking status & never smoked \\
\hline
\multicolumn{2}{c}{\textbf{Adult}}\\
\hline
Age & 40 \\
Workclass & Private \\
Education level & Some-college \\
Marital status & Married-civ-spouse \\
Occupation & Sales \\
Capital gain & 0 \\
Capital loss & 0 \\
Working hours per week & 40 \\
Native country & United-States \\
\hline
\multicolumn{2}{c}{\textbf{Loan}}\\
\hline
Loan amount & 20000 \\
Term & 36 \\
Employ years & 3 \\
Home ownership & OWN \\
Annual income & 60000 \\
Interest rate & 14.0 \\
Purpose & car \\
Debt-to-income ratio & 0.35 \\
Revolving util ratio & 0.30 \\
FICO score & 680--710 \\
Inquiry & $\leq 2$ \\
Delinquency & 0 \\
\hline
\multicolumn{2}{c}{\textbf{Heart Disease}}\\
\hline
Age & 53 \\
Resting Blood Pressure & 133 \\
Serum Cholesterol & 212 \\
Max Heart Rate & 137 \\
ST Segment Depression & 0.8 \\
Sex & Male \\
Chest Pain Type & Asymptomatic \\
Fasting Blood Sugar & $< 120$ mg/dl \\
Resting Electrocardiogram & Normal \\
Exercise Induced Angina & No \\
ST Segment Slope & Flat \\
\hline
\end{tabular}
\caption{Reference instance of each dataset.}
\label{tab:anchor}
\end{table}

We instantiate a per-dataset \emph{reference instance} \(x_{\mathrm{ref}}\) that serves as the baseline context for PRISM’s contrastive evaluations. Reference values are chosen to be representative, i.e., close to the empirical mean for continuous variables and a prevalent category for discrete variables (the mode for multi-class features and the negative category for binary features). The concrete instances used in our experiments are listed in Table~\ref{tab:anchor}.

For each dataset's reference instance,, we use prompt \ref{fig:prompt-nshot_score}, query the model five times, average the predicted probabilities, and convert the result to a base logit via \(\mathrm{logit}(p)=\log\!\big(\frac{p}{1-p}\big)\). The resulting base probabilities and logits are reported in Table~\ref{tab:base-logits}.

\begin{table}[h]
\centering
\small
\begin{tabular}{lcc}
\hline
Dataset & \(p\) (mean over 5 runs) & logit \\
\hline
Stroke & 0.001 & -6.9068 \\
Adult & 0.354 & -0.6015 \\
Heart Disease & 0.410 & -0.3640 \\
Loan & 0.182 & -1.5029 \\
\hline
\end{tabular}
\caption{Base probabilities \(p\) (from prompt \ref{fig:prompt-nshot_score}, averaged over five queries) and corresponding base logits per dataset.}
\label{tab:base-logits}
\end{table}




\section{Additional Results}
\label{sec:addl-results}

\subsection{Examination of Interpretations}
\label{subsec:interp-exam}

We examine factor-level attributions on the tabular datasets in \autoref{tab:adult-cases-impacts}, \autoref{tab:heart-cases-impacts}, and \autoref{tab:loan-cases-impacts}, and on the text scenarios in agriculture (\autoref{fig:honey-crisp}, \autoref{fig:granny-smith}) and soccer (\autoref{fig:man_t}).

\textbf{Adult (\autoref{tab:adult-cases-impacts}).}
The attributions align with well-known socio-economic regularities for predicting income.
Education exerts the largest and most consistent influence: \textit{Masters} yields a strong positive contribution (Case 4: \(+1.31\)), while \textit{HS-grad} is negative (Case 3: \(-0.45\)).
Occupational roles are similarly informative: \textit{Exec-managerial} is positive (Cases 1/4: \(+0.49/+0.99\)), whereas \textit{Farming-fishing} is negative (Case 3: \(-0.84\)).
\textit{Capital gain} is highly predictive when present (Case 1: \(+1.04\)), and \textit{age} contributes moderately with the expected direction (older age increasing odds in Cases 1/3).
Marital status exhibits negative impacts for \textit{Divorced} and \textit{Never-married} (Cases 1/2), consistent with prior findings that marriage correlates with higher income.
Some variables (e.g., \textit{workclass}, \textit{native country}) show near-zero effects in these cases, indicating either proximity to the anchor or low marginal power after conditioning on stronger factors.

\begin{table}[h]
\centering
\setlength{\tabcolsep}{4pt}
\resizebox{0.95\linewidth}{!}{%
\begin{tabular}{c |cc| cc| cc| cc}
\hline
Factor & \multicolumn{2}{|c|}{Case 1} & \multicolumn{2}{c|}{Case 2} & \multicolumn{2}{c|}{Case 3} & \multicolumn{2}{c}{Case 4} \\
 & Value & Shapley & Value & Shapley & Value & Shapley & Value & Shapley \\
\hline
Age              & 51   & 0.17  & 25   & -0.41 & 47   & 0.38  & 38   & 0.00 \\
Workclass        & Private & 0.00 & Private & 0.00 & Self-emp-not-inc & -0.44 & Private & 0.00 \\
Education level  & Bachelors & 0.62 & Bachelors & 0.49 & HS-grad & -0.45 & Masters & 1.31 \\
Marital status   & Divorced & -0.47 & Never-married & -0.41 & Married-civ-spouse & 0.00 & Married-civ-spouse & 0.00 \\
Occupation       & Exec-managerial & 0.49 & Sales & 0.00 & Farming-fishing & -0.84 & Exec-managerial & 0.99 \\
Capital gain     & 10520 & 1.04 & 0 & 0.00 & 0 & 0.00 & 0 & 0.00 \\
Capital loss     & 0 & 0.00 & 1876 & 0.00 & 0 & 0.00 & 0 & 0.00 \\
Working hours/wk & 40 & 0.00 & 40 & 0.00 & 60 & 0.54 & 60 & 0.53 \\
Native country   & US & 0.00 & US & 0.00 & US & 0.00 & US & 0.00 \\
\hline
\textit{Sum Shapley} & \multicolumn{2}{|c|}{1.86} & \multicolumn{2}{c|}{-0.33} & \multicolumn{2}{c|}{-0.81} & \multicolumn{2}{c}{2.83} \\
\textit{Sum logit}   & \multicolumn{2}{|c|}{1.26} & \multicolumn{2}{c|}{-0.93} & \multicolumn{2}{c|}{-1.41} & \multicolumn{2}{c}{2.22} \\
\textit{Pred prob}   & \multicolumn{2}{|c|}{0.778} & \multicolumn{2}{c|}{0.283} & \multicolumn{2}{c|}{0.197} & \multicolumn{2}{c}{0.902} \\
\textit{True label}  & \multicolumn{2}{|c|}{Yes}   & \multicolumn{2}{c|}{No}    & \multicolumn{2}{c|}{No}    & \multicolumn{2}{c}{Yes} \\
\hline
\end{tabular}%
}
\vspace{-0.2cm}
\caption{Shapley values for four instances in Adult dataset. Reference instance: \textit{\textcolor{gray}{age=40; workclass=Private; education=Some-college; marital status=Married-civ-spouse; occupation=Sales; capital gain=0; capital loss=0; working hours=40; native country=US}}. A single base logit is shared across cases, \textcolor{gray}{\textit{$\phi_0=\sigma^{-1}(0.354)=-0.6015$}}.}
\vspace{-0.3cm}
\label{tab:adult-cases-impacts}
\end{table}

\textbf{Heart (\autoref{tab:heart-cases-impacts}).}
For heart disease risk, the factor impacts align with real-world clinical patterns
\textit{Exercise-induced angina} is strongly positive (Cases 1/2: \(+0.89/+0.87\)), and elevated \textit{resting blood pressure} and \textit{serum cholesterol} contribute positively (Case 1: \(+0.43/+0.45\); Case 2: \(+0.38/+0.29\)).
Conversely, higher \textit{max heart rate} and \textit{upsloping ST slope} decrease risk (Case 3: \(-0.45\) and \(-0.48\)), aligning with cardiology practice that better exercise capacity and non-flat slopes are protective.
\textit{Sex=Female} is negative in Case 3 (\(-0.53\)), capturing lower risk in females.
Note that the same feature can change sign across cases (e.g., \textit{ST depression}: \(-0.59\) in Case 2 vs.\ near-zero/positive elsewhere), reflecting interactions and the conditional nature of \(\phi_i(x)\) under different covariate settings.

\begin{table}[t]
\centering
\setlength{\tabcolsep}{4pt}
\resizebox{0.95\linewidth}{!}{%
\begin{tabular}{c |cc| cc| cc| cc}
\hline
Factor & \multicolumn{2}{|c|}{Case 1} & \multicolumn{2}{c|}{Case 2} & \multicolumn{2}{c|}{Case 3} & \multicolumn{2}{c}{Case 4} \\
 & Value & Shapley & Value & Shapley & Value & Shapley & Value & Shapley \\
\hline
Age                    & 52  & 0.00 & 58  & 0.00  & 34  & -0.73 & 48  & -0.12 \\
Sex                    & Male & 0.00 & Male & 0.00  & Female & -0.53 & Male & 0.00 \\
Chest Pain Type        & Asymptomatic & 0.00 & Non-Anginal Pain & 0.70 & Atypical Angina & 0.42 & Asymptomatic & 0.00 \\
Resting Blood Pressure & 170 & 0.43 & 150 & 0.38  & 118 & -0.49 & 132 & 0.00 \\
Serum Cholesterol      & 223 & 0.45 & 219 & 0.29  & 210 & 0.00  & 272 & 0.37 \\
Fasting Blood Sugar    & $<120$ mg/dl & 0.00 & $<120$ mg/dl & 0.00 & $<120$ mg/dl & 0.00 & $<120$ mg/dl & 0.00 \\
Resting ECG            & Normal & 0.00 & ST-T abn. & 0.33 & Normal & 0.00 & ST-T abn. & 0.45 \\
Max Heart Rate         & 126 & 0.19 & 118 & 0.00  & 192 & -0.45 & 139 & 0.00 \\
Exercise Induced Angina& Yes  & 0.89 & Yes  & 0.87  & No  & 0.00  & No  & 0.00 \\
ST Segment Depression  & 1.5  & 0.44 & 0.0  & -0.59 & 0.7 & 0.00  & 0.2 & -0.44 \\
ST Segment Slope       & Flat & 0.00 & Flat & 0.00  & Upsloping & -0.48 & Upsloping & -0.44 \\
\hline
\textit{Sum Shapley}  & \multicolumn{2}{|c|}{2.40} & \multicolumn{2}{c|}{1.99} & \multicolumn{2}{c|}{-2.25} & \multicolumn{2}{c}{-0.63} \\
\textit{Sum logit}    & \multicolumn{2}{|c|}{2.04} & \multicolumn{2}{c|}{1.62} & \multicolumn{2}{c|}{-2.61} & \multicolumn{2}{c}{-0.54} \\
\textit{Pred prob}    & \multicolumn{2}{|c|}{0.885} & \multicolumn{2}{c|}{0.835} & \multicolumn{2}{c|}{0.068} & \multicolumn{2}{c}{0.367} \\
\textit{True label}   & \multicolumn{2}{|c|}{Yes}   & \multicolumn{2}{c|}{Yes}   & \multicolumn{2}{c|}{No}   & \multicolumn{2}{c}{No} \\
\hline
\end{tabular}%
}
\vspace{-0.2cm}
\caption{Shapley values for four instances in Heart dataset. Reference instance: \textit{\textcolor{gray}{Age=53; Resting BP=133; Serum Chol.=212; Max HR=137; ST Depression=0.8; Sex=Male; Chest Pain=Asymptomatic; Fasting Blood Sugar=$<120$ mg/dl; Resting ECG=Normal; Exercise Angina=No; ST Slope=Flat}}. A single base logit is shared across cases, \textcolor{gray}{\textit{$\phi_0=\sigma^{-1}(0.41)=-0.363$}}.}
\vspace{-0.3cm}
\label{tab:heart-cases-impacts}
\end{table}

\textbf{Loan (\autoref{tab:loan-cases-impacts}).}
For default risk, the patterns are intuitive.
High \textit{interest rate} increases risk (Cases 3/4: \(+0.67/+0.71\)), while lower rates reduce it (Case 1: \(-0.50\)).
Past \textit{delinquency} is the single most influential positive factor when present (Case 3: \(+1.15\)).
\textit{Home ownership=RENT} and lower \textit{annual income} tend to raise risk (Case 3: \(+0.44/+0.48\)), while higher \textit{FICO} mitigates risk and lower \textit{FICO} elevates it (Case 1: \(-0.41\) vs.\ Case 3: \(+0.45\)).
Debt burden is captured by \textit{DTI} and \textit{revolving utilization}: lower values reduce risk (Case 1: \(-0.08\) and \(-0.51\)), whereas moderate-to-high levels are less favorable across other cases.

\begin{table}[h]
\centering
\setlength{\tabcolsep}{4pt}
\resizebox{0.95\linewidth}{!}{%
\begin{tabular}{c |cc| cc| cc| cc}
\hline
Factor & \multicolumn{2}{|c|}{Case 1 (ID=1)} & \multicolumn{2}{c|}{Case 2 (ID=2)} & \multicolumn{2}{c|}{Case 3 (ID=29)} & \multicolumn{2}{c}{Case 4 (ID=37)} \\
 & Value & Shapley & Value & Shapley & Value & Shapley & Value & Shapley \\
\hline
Loan amount          & 30000 & 0.45 & 1500  & 0.00 & 10000 & -0.38 & 30000 & 0.48 \\
Term                 & 60    & 0.45 & 36    & 0.00 & 36    & 0.00  & 60    & 0.51 \\
Employ years         & 4     & -0.43 & $<1$ & 0.45 & 6     & -0.43 & 10+   & -0.55 \\
Home ownership       & MORTGAGE & 0.00 & MORTGAGE & 0.00 & RENT & 0.44 & MORTGAGE & 0.42 \\
Annual income        & 65000 & -0.27 & 55000 & 0.00 & 48000 & 0.48 & 42000 & 0.46 \\
Interest rate        & 12.0  & -0.50 & 10.4  & -0.43 & 20.0  & 0.67 & 19.4  & 0.71 \\
Inquiry              & $\leq 2$ & 0.00 & $\leq 2$ & 0.00 & $\leq 2$ & 0.00 & $\leq 2$ & 0.00 \\
Delinquency          & 0     & 0.00 & 0     & 0.00 & 1     & 1.15 & 0     & 0.00 \\
Purpose              & Debt cons. & 0.43 & Home improv. & 0.00 & Debt cons. & 0.44 & Debt cons. & 0.48 \\
Debt-to-income ratio & 0.25  & -0.08 & 0.13  & -0.55 & 0.29  & 0.00 & 0.17  & 0.00 \\
Revolving util ratio & 0.21  & -0.51 & 0.29  & 0.00 & 0.08  & -0.47 & 0.44  & 0.00 \\
FICO score           & 710--740 & -0.41 & 710--740 & -0.77 & 660--680 & 0.45 & 660--680 & 0.42 \\
\hline
\textit{Sum Shapley}  & \multicolumn{2}{|c|}{-0.87} & \multicolumn{2}{c|}{-1.30} & \multicolumn{2}{c|}{2.35} & \multicolumn{2}{c}{2.93} \\
\textit{Sum logit}    & \multicolumn{2}{|c|}{-2.37} & \multicolumn{2}{c|}{-2.81} & \multicolumn{2}{c|}{0.85} & \multicolumn{2}{c}{1.43} \\
\textit{Pred prob}    & \multicolumn{2}{|c|}{0.086} & \multicolumn{2}{c|}{0.057} & \multicolumn{2}{c|}{0.701} & \multicolumn{2}{c}{0.806} \\
\textit{True label}   & \multicolumn{2}{|c|}{No}    & \multicolumn{2}{c|}{No}    & \multicolumn{2}{c|}{Yes}   & \multicolumn{2}{c}{Yes} \\
\hline
\end{tabular}%
}
\vspace{-0.2cm}
\caption{Shapley values for four instances in Loan dataset. Reference instance: \textit{\textcolor{gray}{Loan amount=20000; Term=36; Employ years=3; Home ownership=OWN; Annual income=60000; Interest rate=14.0; Purpose=car; Debt-to-income ratio=0.35; Revolving util ratio=0.30; FICO score=680--710; Inquiry=$\leq 2$; Delinquency=0}}. A single base logit is shared across cases, \textcolor{gray}{\textit{$\phi_0=\sigma^{-1}(0.182)=-1.504$}}.}
\vspace{-0.3cm}
\label{tab:loan-cases-impacts}
\end{table}

Across these tabular datasets, predicted probabilities exhibit a label-consistent ordering with clear separation—for example, the positives cluster at higher values while the negatives fall into intermediate and low ranges—yielding an easily interpretable ranking. Per-instance factor impacts make the drivers of each estimate explicit and support auditability: dominant contributors can be inspected, implausible probabilities traced to specific factors, and questionable cases flagged for review.

\textbf{Agriculture (\autoref{fig:honey-crisp}, \autoref{fig:granny-smith}).}
For Honeycrisp, \textit{Production} contributes positively because a forecast output decline tightens supply, and this effect becomes stronger when \textit{Storage} carryover is elevated and \textit{Market demand} is soft. \textit{Costs} add a small positive push as labor and inputs remain high, while \textit{Varietal competition} subtracts; \textit{Government policy} offers limited support. Together these effects yield a predicted probability of \(0.51\) for a price increase.
For Granny Smith, the signs invert: an expected \textit{Production} increase drives a negative contribution, which becomes larger under \textit{Imports \& exports} pressure and aggressive \textit{Promotional activity}. The small positive \textit{Government policy} term does not offset oversupply, and \textit{Costs} together with \textit{Varietal competition} further weigh on price. In this setting the predicted probability for a price increase is \(0.09\). Across the two varieties, these attributions accord with well-known agricultural price formation regularities in which supply shocks and carryover inventory dominate short-run price movements, moderated by demand conditions, policy, and competing varieties.

\begin{figure}
    \centering
    \includegraphics[width=0.9\linewidth]{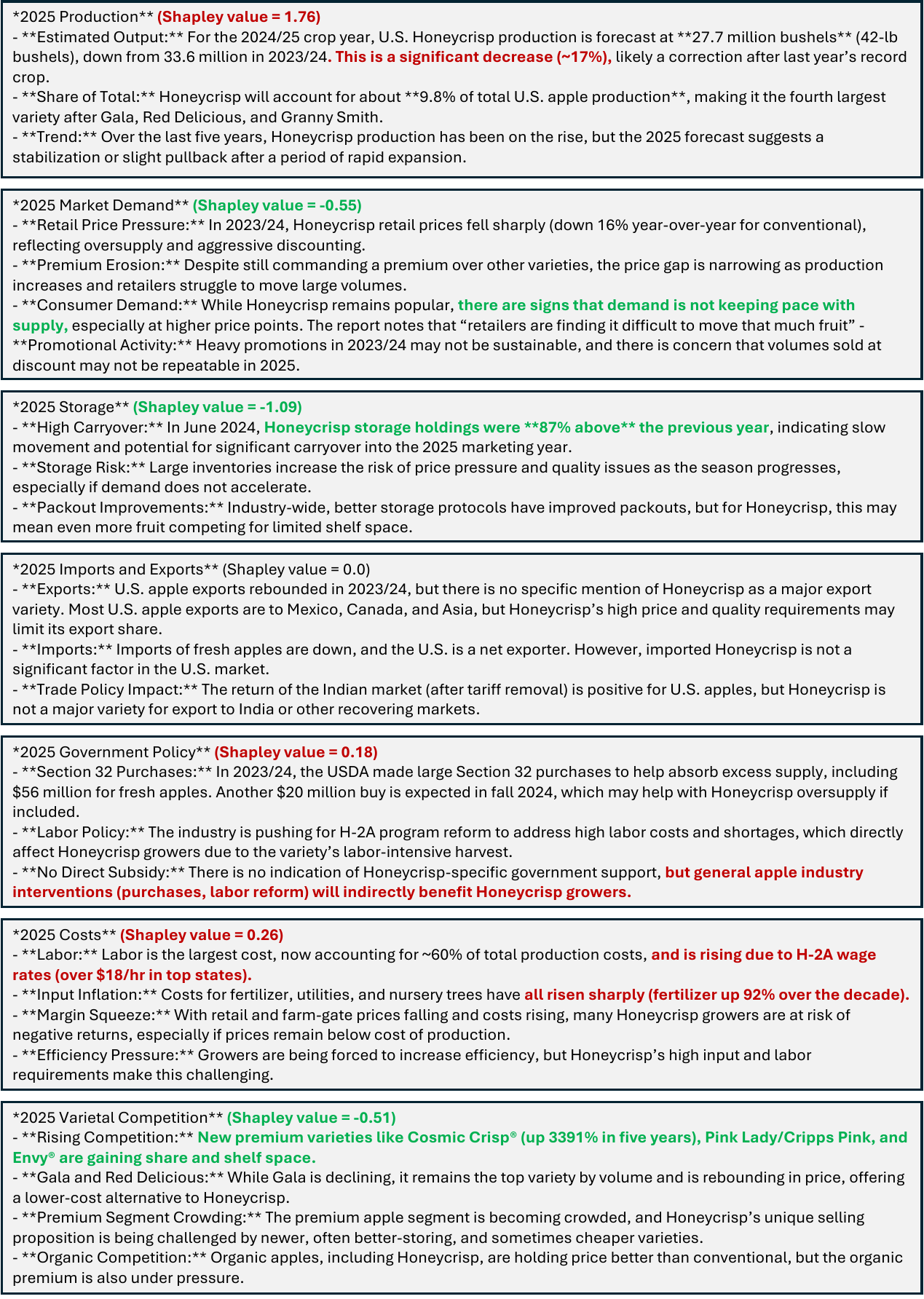}
    \caption{The factors and interpretations of PRISM for predicting whether the price of Honeycrisp apple will increase in 2025. Based on the Shapley values, it is finally predicted to have a chance of 51\% to have price increase. Among the factors, ``Production'' has a large positive indication of price increase, as the production is expected to decrease by 17\%.}
    \label{fig:honey-crisp}
\end{figure}

\begin{figure}
    \centering
    \includegraphics[width=0.89\linewidth]{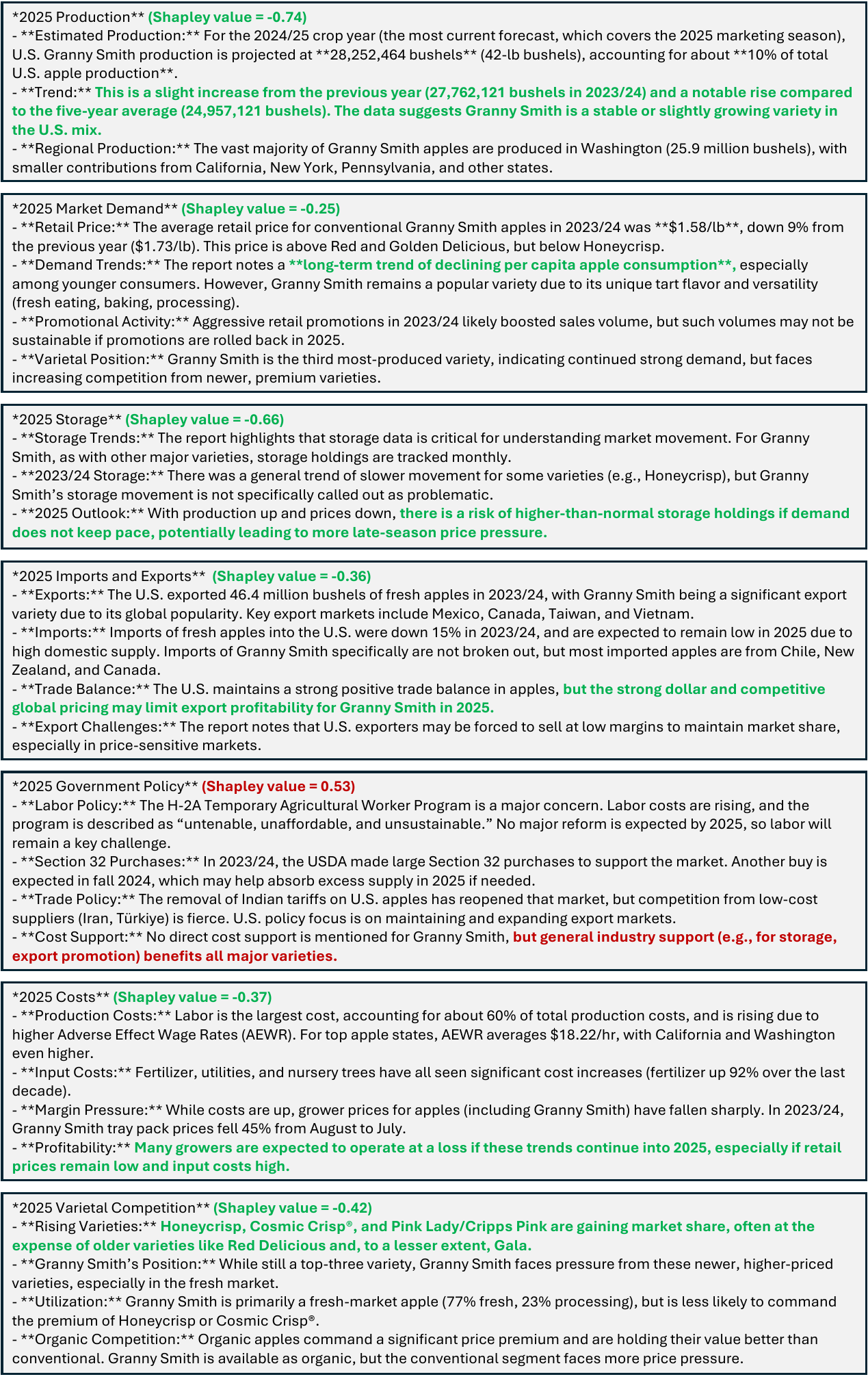}
    \caption{The factors and interpretations of PRISM for predicting whether the price of Granny Smith apple will increase in 2025. Based on the Shapley values, it is finally predicted to have a chance of 9\% to have price increase. Among the factors, only the ``Government policy'' is positive. In the negative factors, the production of Granny Smith is expected to increase, so it lowers the expectation of price increase.}
    \label{fig:granny-smith}
\end{figure}

\begin{figure}
    \centering
    \includegraphics[width=0.85\linewidth]{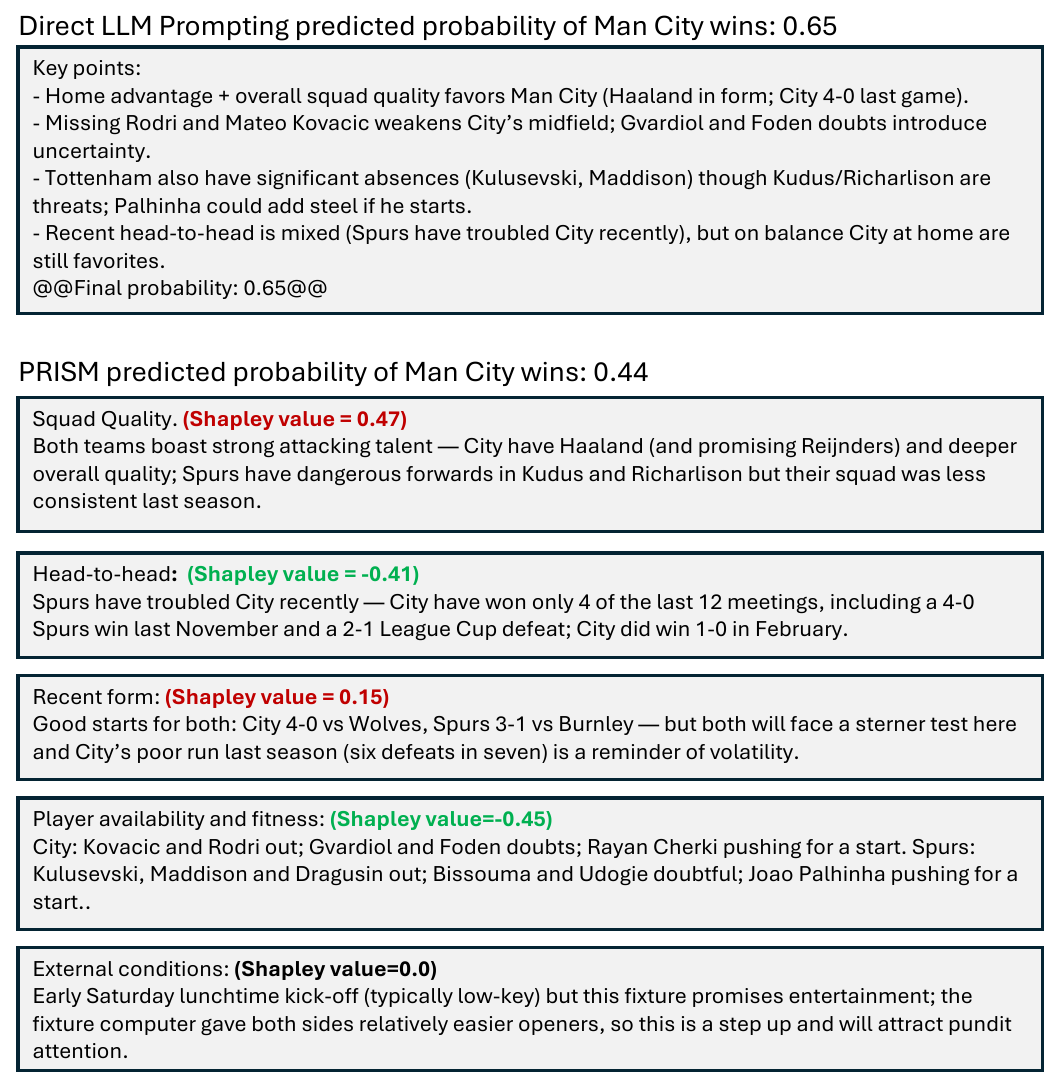}
    \caption{Direct LLM Prompting and PRISM for estimating the probability of ``Man City will beat Tottenham at Man City's Home''. In general, Man City is a stronger team as it has a better squad. Direct LLM prompting yields facts similar to the factors used by PRISM, but concludes that Man City are favorites. This suggests that the LLM may rely on impressions to assign stronger teams higher winning probability.
    For PRISM, it also considers the factor ``Squad Qualify'' which favors Man City, but other factors such as head-to-head records and player availability lead PRISM give a lower winning probability for Man City—0.44. The result of the match is Man City loses.  }
    \label{fig:man_t}
\end{figure}


\textbf{Soccer (\autoref{fig:man_t}).}
PRISM decomposes the match into conditional cues.
\textit{Home advantage} and \textit{Squad quality} contribute positively (the latter is the largest, reflecting City’s deeper roster), while \textit{Head-to-head} contributes negatively (City have been troubled by Spurs) and \textit{Player availability and fitness} is strongly negative given absences in midfield and doubts in key positions; \textit{Recent form} is mildly positive and \textit{External conditions} are neutral.
These effects interact: the benefit of home advantage weakens when midfield anchors are missing, and the head-to-head penalty matters more when overall form is mixed.
Balancing these factors yields a PRISM predicted probability of \(0.44\) for a City win at home (vs.\ \(0.65\) from direct LLM scoring), and the realized outcome—City lost—aligns more closely with PRISM’s assessment.


Across agriculture and soccer, PRISM's factor impacts are conditional rather than global: identical features switch sign or magnitude as the surrounding evidence changes.
This conditionality captures interaction structure among drivers, clarifies why conclusions can differ across otherwise similar factor sets, and yields interpretable, context-aware attributions aligned with domain regularities.

\subsection{Calibration Analysis}\label{sec:calibration}

\begin{figure}[htbp]
    \centering
    
    \begin{subfigure}{0.35\textwidth}
        \centering
        \includegraphics[width=\linewidth]{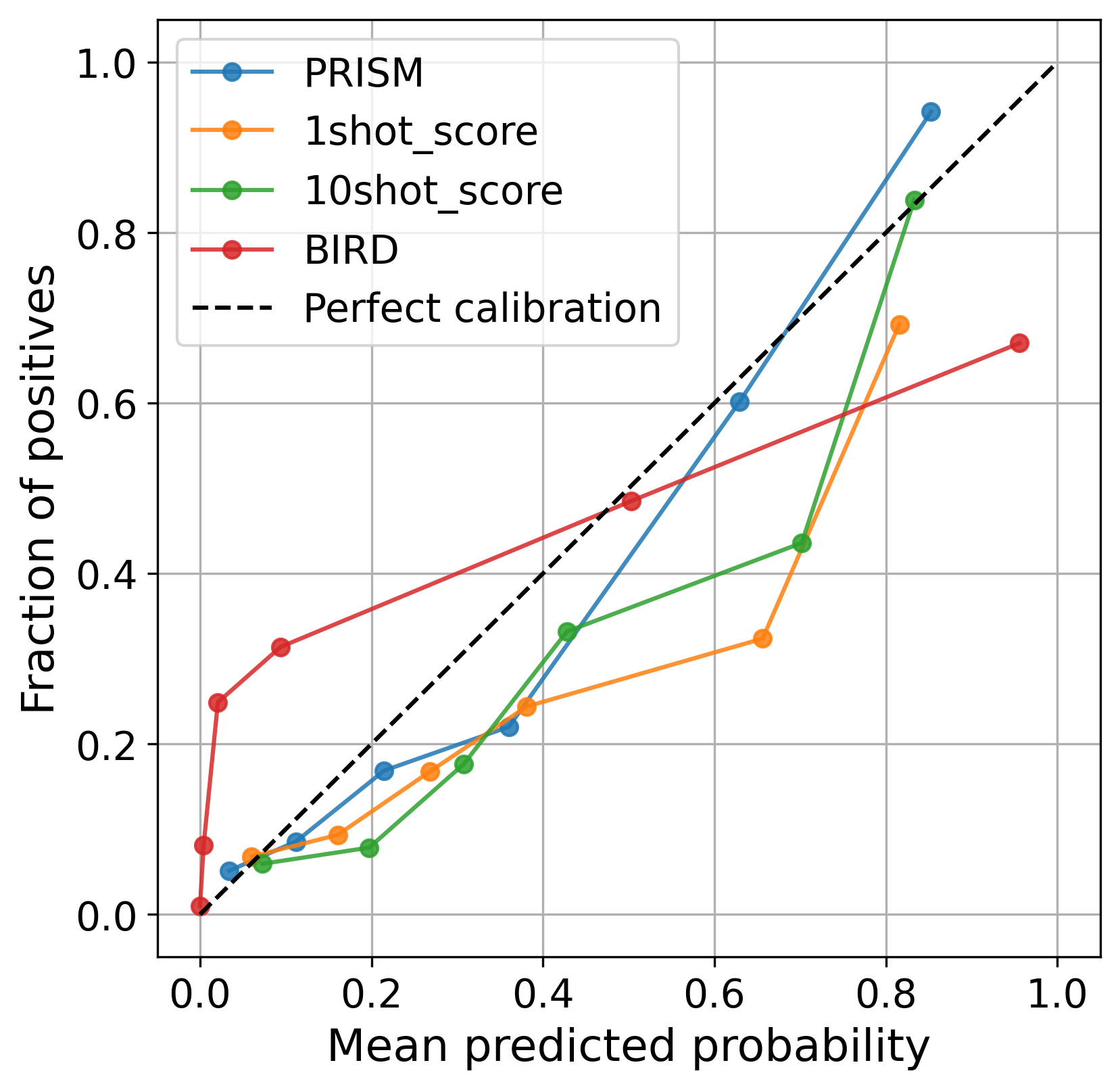}
        \caption{Adult (GPT)}
    \end{subfigure}
    \begin{subfigure}{0.35\textwidth}
        \centering
        \includegraphics[width=\linewidth]{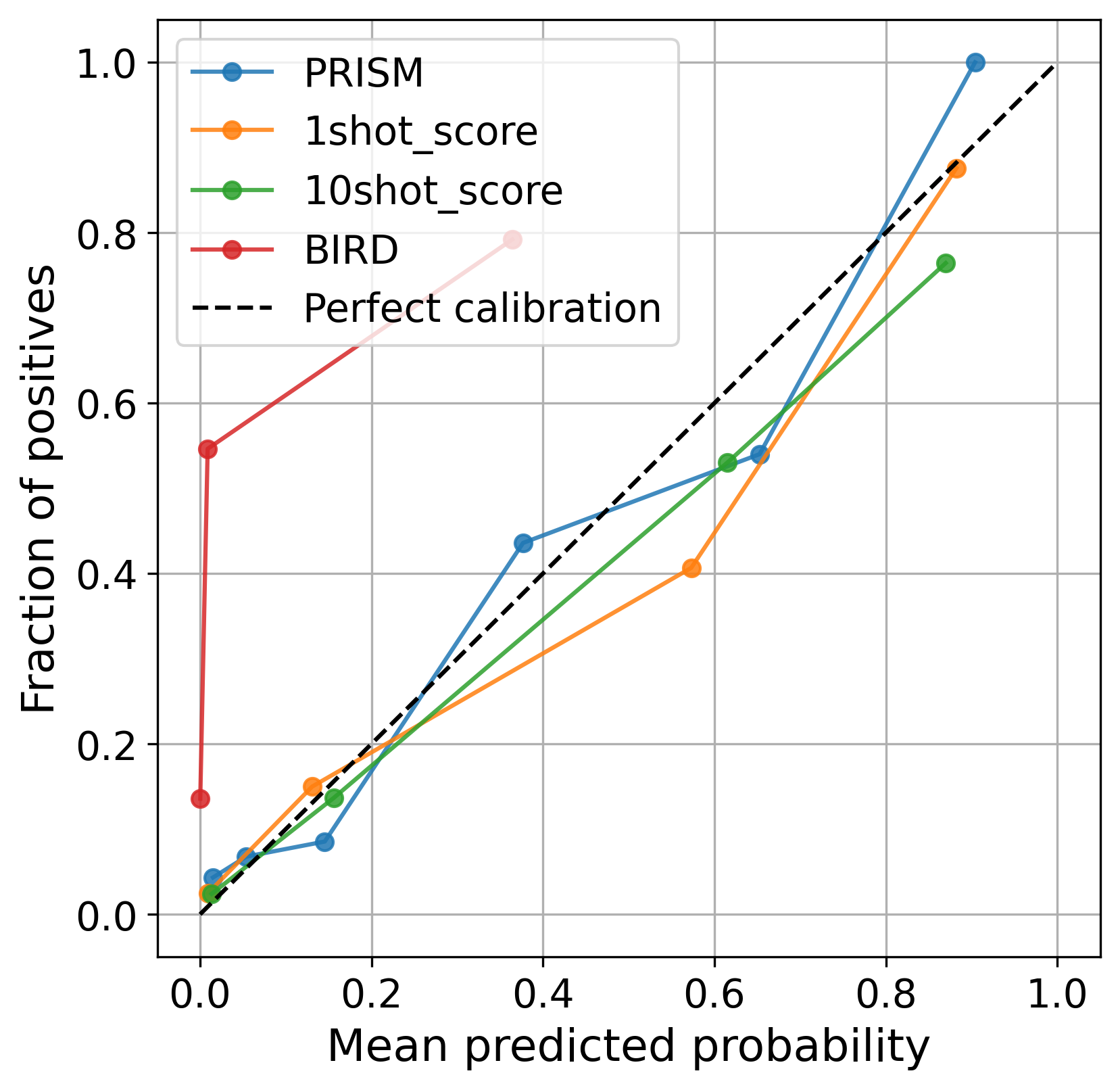}
        \caption{Adult (Gemini)}
    \end{subfigure}
    
    \begin{subfigure}{0.35\textwidth}
        \centering
        \includegraphics[width=\linewidth]{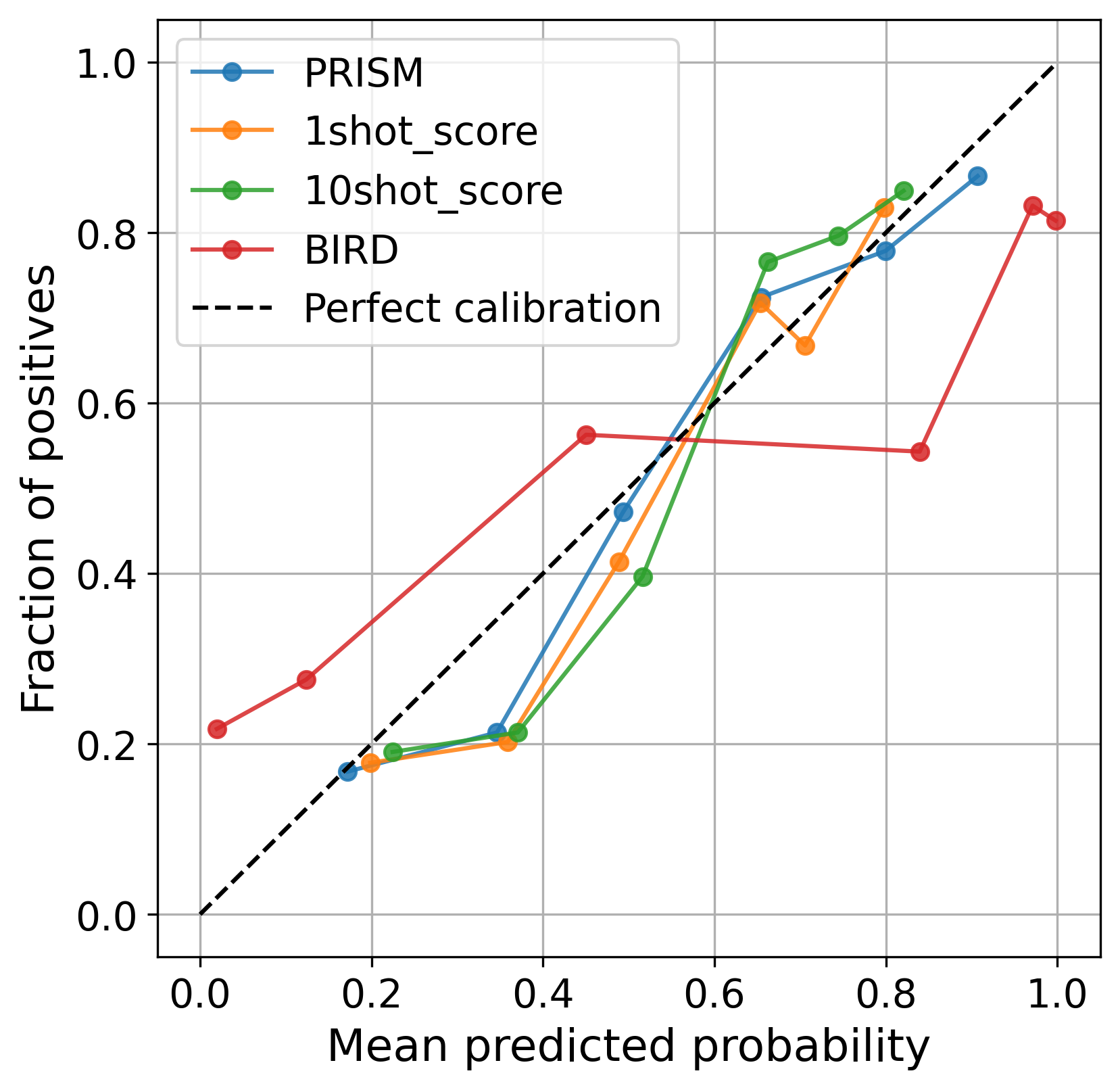}
        \caption{Heart (GPT)}
    \end{subfigure}
    \begin{subfigure}{0.35\textwidth}
        \centering
        \includegraphics[width=\linewidth]{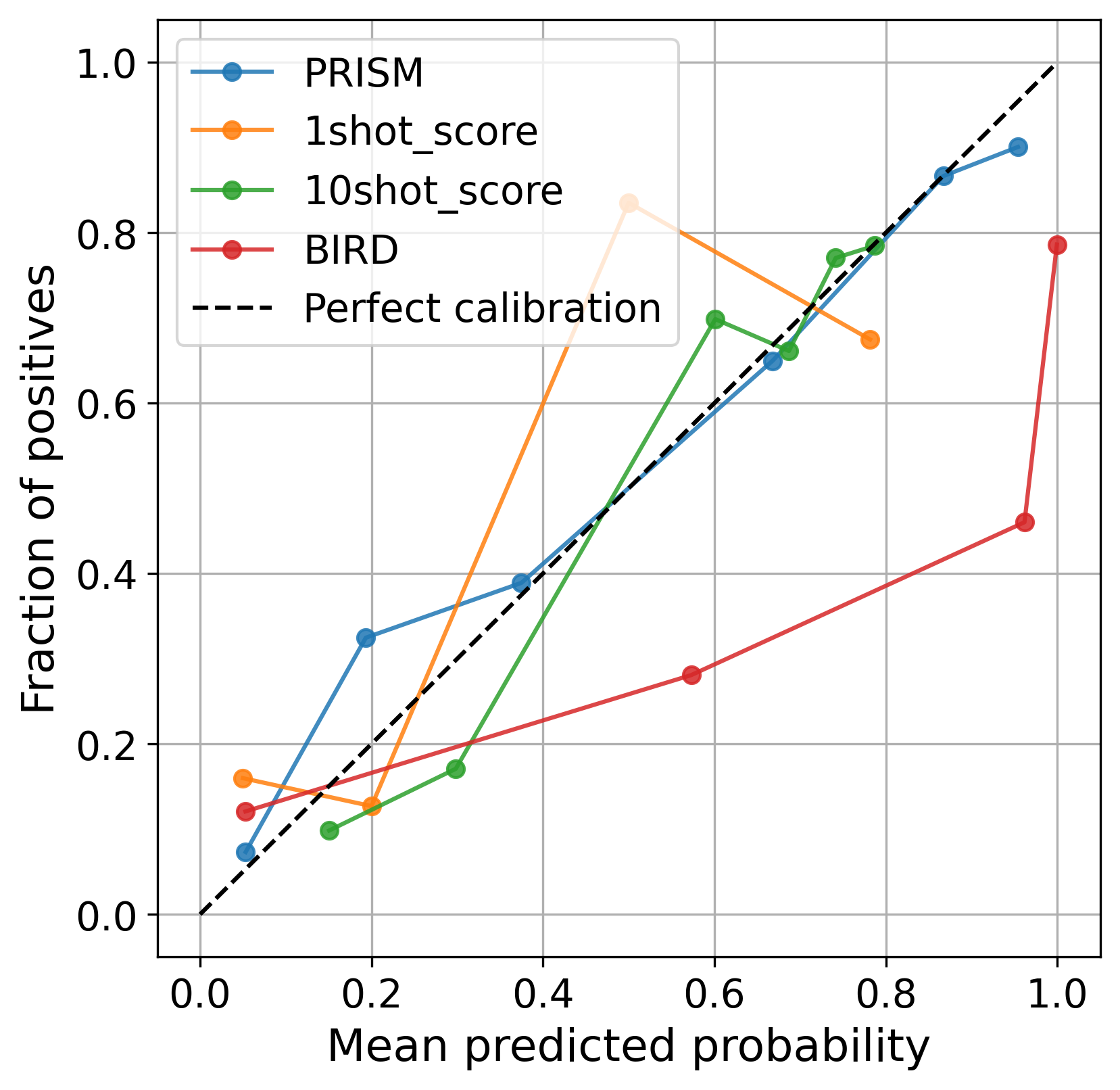}
        \caption{Heart (Gemini)}
    \end{subfigure}
    
    \begin{subfigure}{0.35\textwidth}
        \centering
        \includegraphics[width=\linewidth]{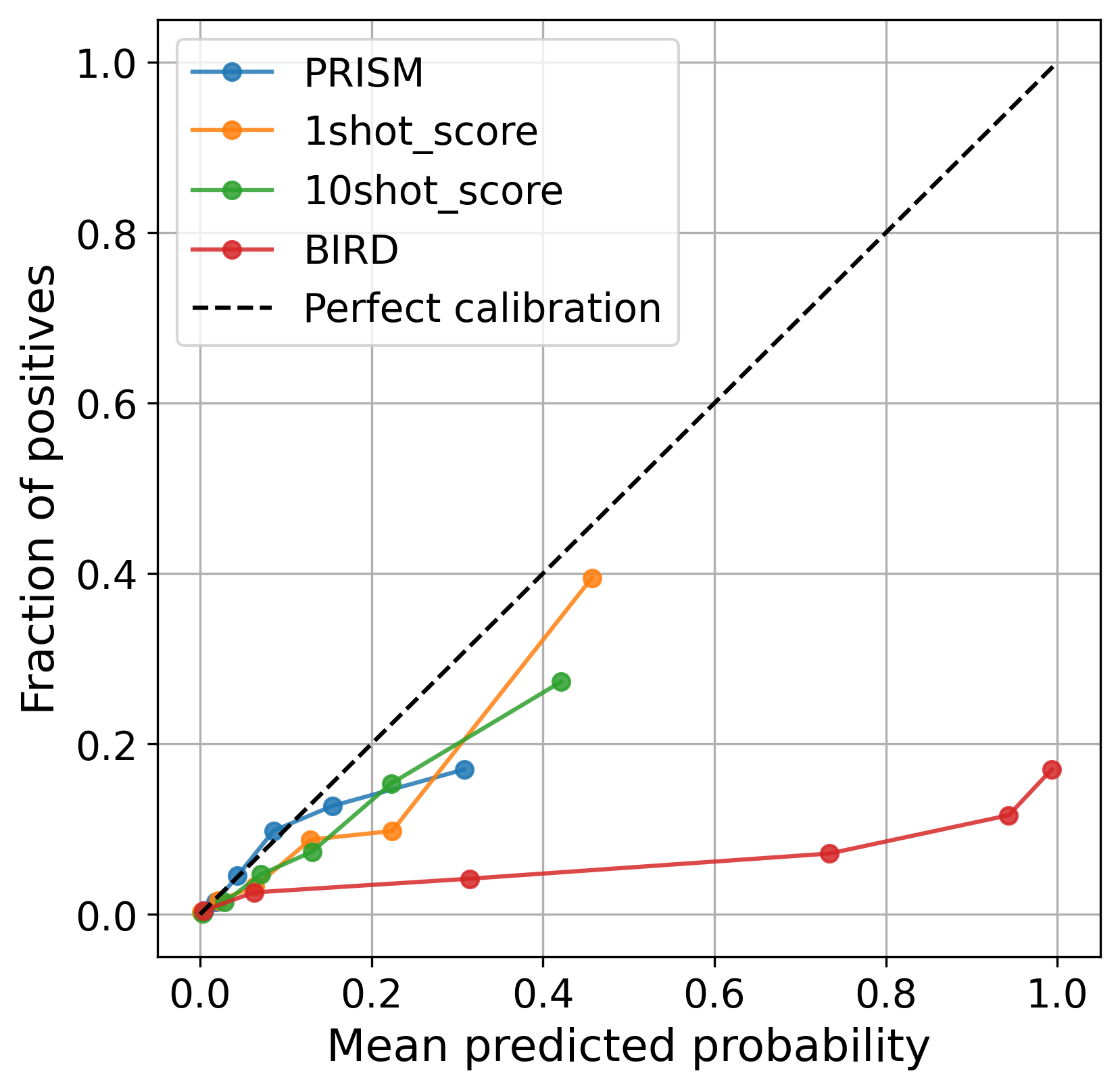}
        \caption{Stroke (GPT)}
    \end{subfigure}
    \begin{subfigure}{0.35\textwidth}
        \centering
        \includegraphics[width=\linewidth]{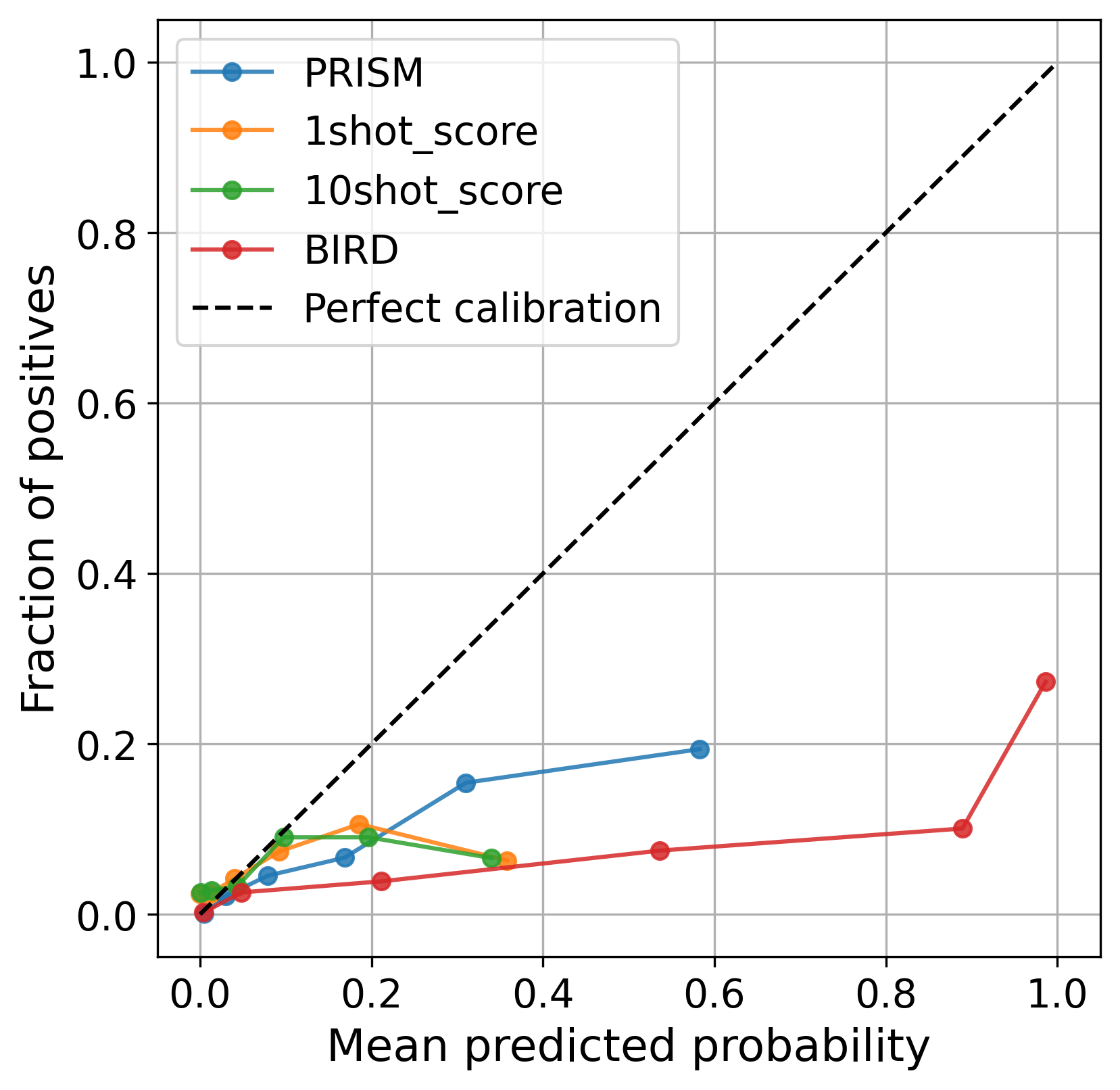}
        \caption{Stroke (Gemini)}
    \end{subfigure}
    
    \begin{subfigure}{0.35\textwidth}
        \centering
        \includegraphics[width=\linewidth]{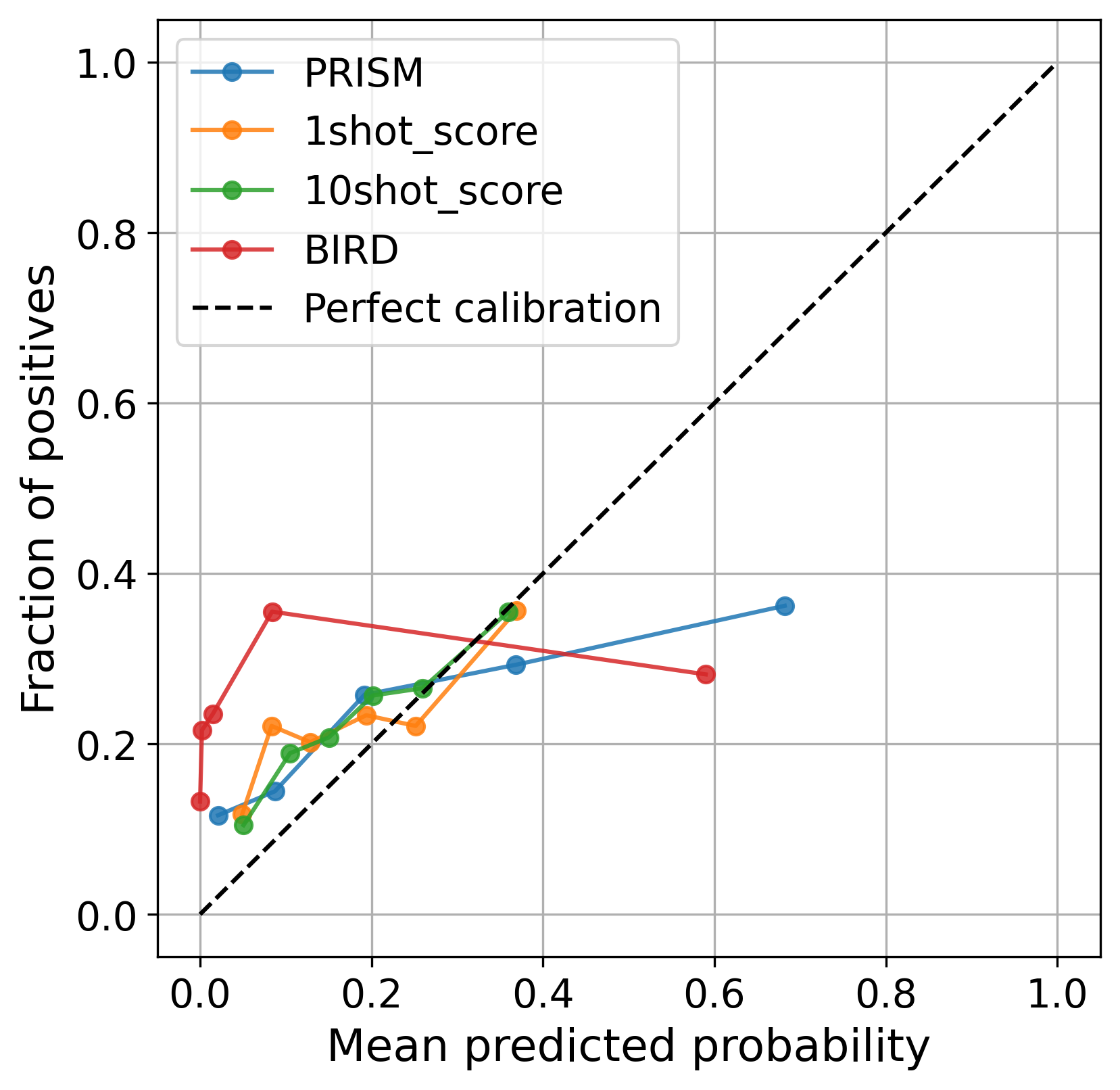}
        \caption{Loan (GPT)}
    \end{subfigure}
    \begin{subfigure}{0.35\textwidth}
        \centering
        \includegraphics[width=\linewidth]{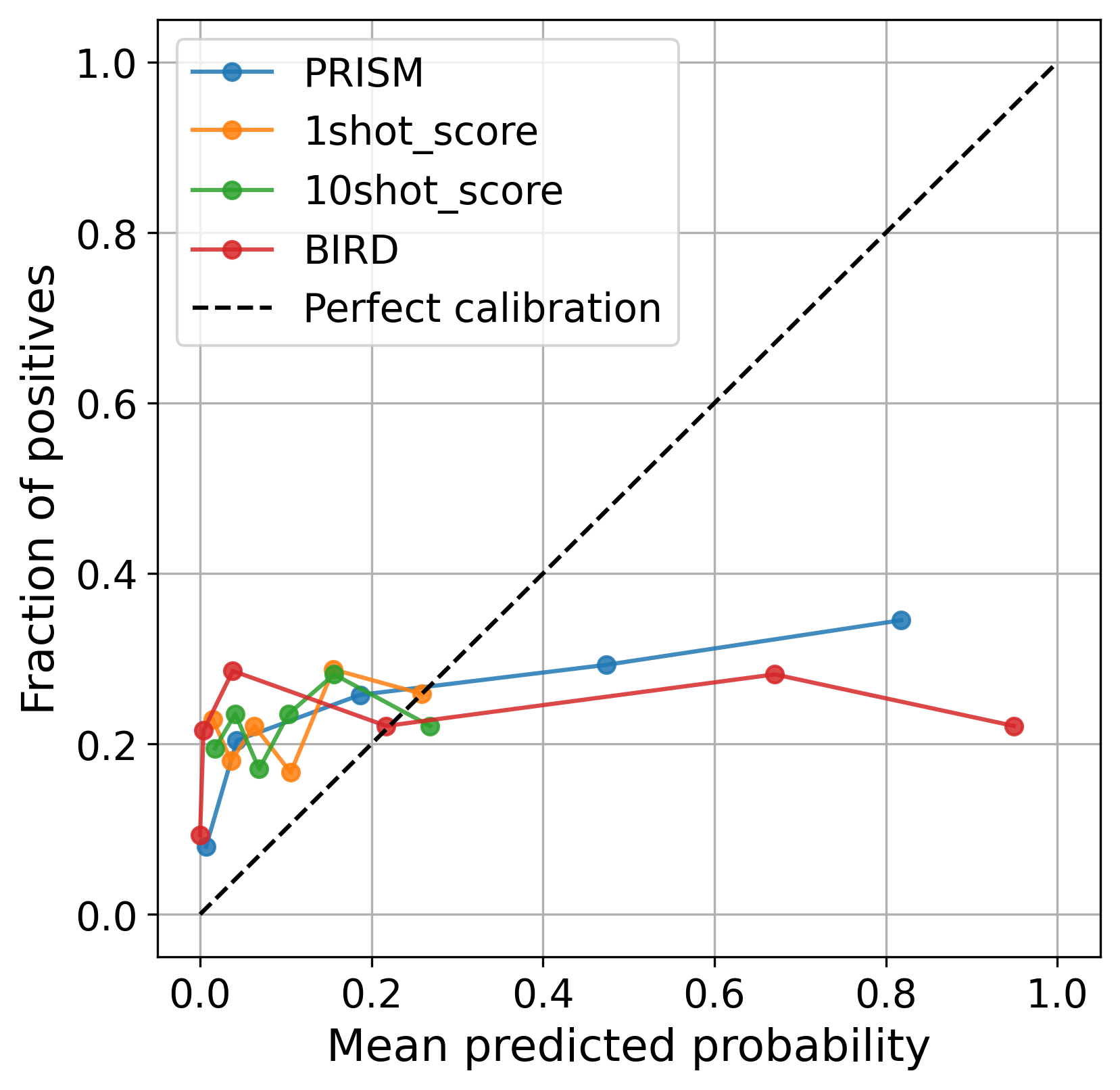}
        \caption{Loan (Gemini)}
    \end{subfigure}
    
    \caption{Calibration (reliability) curves comparing PRISM, 1shot\_score, 10shot\_score, and BIRD on four datasets under GPT-4.1-mini and Gemini-2.5-Pro.}
    \label{fig:eceplots}
\end{figure}

\textbf{Setup.}
We assess calibration by plotting the weighted reliability curve.
Given predictions $\{\hat p_i\}$ and labels $\{y_i\in\{0,1\}\}$, we bin by \emph{equal-count} quantiles of $\hat p$ and compute in-bin weighted means
\begin{equation}
    \hat{p}_m=\frac{1}{W_m}\sum_{i\in B_m} w_i\hat p_i,\quad
\hat{y}_m=\frac{1}{W_m}\sum_{i\in B_m} w_i y_i,\quad
W_m=\sum_{i\in B_m} w_i.
\end{equation}

Weights correct the gap between the \emph{evaluation} split and the \emph{deployment/population} class mix:
\begin{equation}
w_i=
\begin{cases}
\dfrac{\pi}{\hat\pi}, & y_i=1,\\[4pt]
\dfrac{1-\pi}{1-\hat\pi}, & y_i=0,
\end{cases}
\qquad
\mathrm{ECE}=\sum_{m=1}^M \frac{W_m}{\sum_j W_j}\,|\hat p_m-\hat y_m|.
\end{equation}
Here, $\hat\pi$ is the positive rate \emph{in our evaluation split}, which is class-balanced ($\hat\pi=0.5$). 
$\pi$ is the \emph{population} positive rate we aim to evaluate against. 
Since the true deployment prevalence is unknown, we use the \emph{original dataset prevalence before balancing} as a proxy for $\pi$:
\textit{Stroke} $4.87\%$, \textit{Adult Census} $24.88\%$, \textit{Heart Disease} $55.28\%$.
(If a practitioner knows their deployment prevalence, they can plug it in for $\pi$.) 
We visualize calibration by plotting $(\hat p_m,\hat y_m)$ against the identity line $y=x$.

\textbf{Why weighting matters.}
Our test splits are $1{:}1$ balanced, but real-world prevalence is typically skewed. 
Without weighting ($w_i\equiv 1$), bin positive fractions reflect the artificial $50\%$ mix rather than the population, biasing the curve upward on imbalanced tasks (e.g., \textit{Stroke}). 
Importance weights re-create the population mix \emph{within each bin}, so the reliability curve answers the practical question: “given this score in deployment, what fraction will be positive?” Therefore, Reweighting restores the population class mix in each bin, yielding reliability curves that reflect real–world deployment rather than the artificial test mix.

\textbf{Results.} Across all four datasets in Fig.~\ref{fig:eceplots}, \textit{PRISM} maintains strong calibration, remaining close to the $y{=}x$ line on \textit{Adult} and \textit{Heart}, and staying closest to the diagonal on \textit{Stroke} and \textit{Loan} despite small deviations, while \emph{consistently yielding a monotonically increasing reliability curve}. This monotonicity ensures that higher predicted probabilities always correspond to higher empirical event rates (no local reversals), preventing rank inconsistencies and threshold instability that appear in the baselines when the fraction of positives decreases as the mean predicted probability increases, and it further shows that our method is better calibrated than the baselines.

\section{Prompts}
\label{sec:prompts}
\begin{figure}[h]
  \centering
  \captionsetup[sub]{justification=centering}

  \begin{subfigure}[t]{\textwidth}
    \centering
    \includegraphics[width=\linewidth]{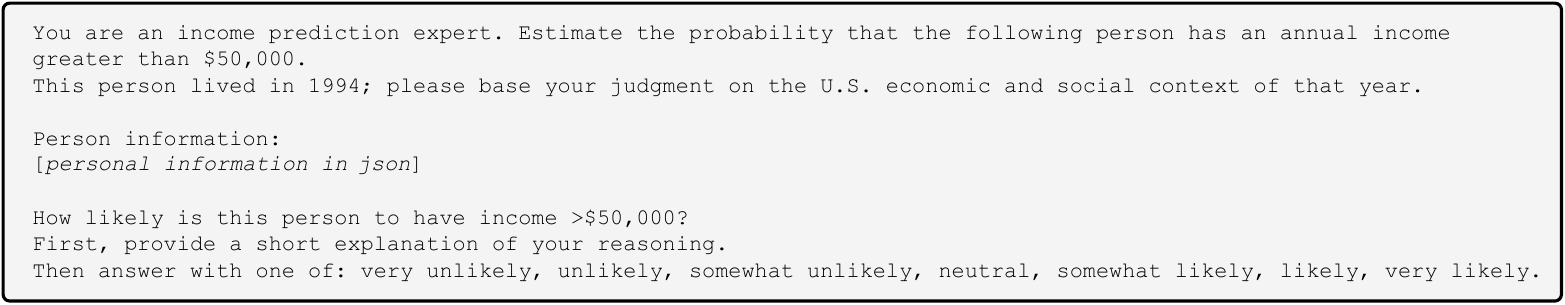}
    \caption{1/5/10shot\_level on Adult Census}
    \label{fig:prompt-nshot_level-adult}
  \end{subfigure}\hfill
  \begin{subfigure}[t]{\textwidth}
    \centering
    \includegraphics[width=\linewidth]{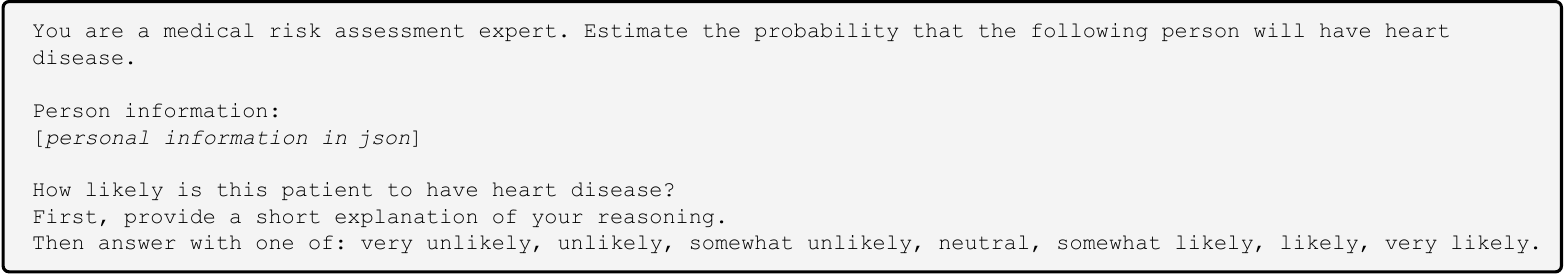}
    \caption{1/5/10shot\_level on Heart Disease dataset}
    \label{fig:prompt-nshot_level-heart}
  \end{subfigure}\hfill
  \begin{subfigure}[t]{\textwidth}
    \centering
    \includegraphics[width=\linewidth]{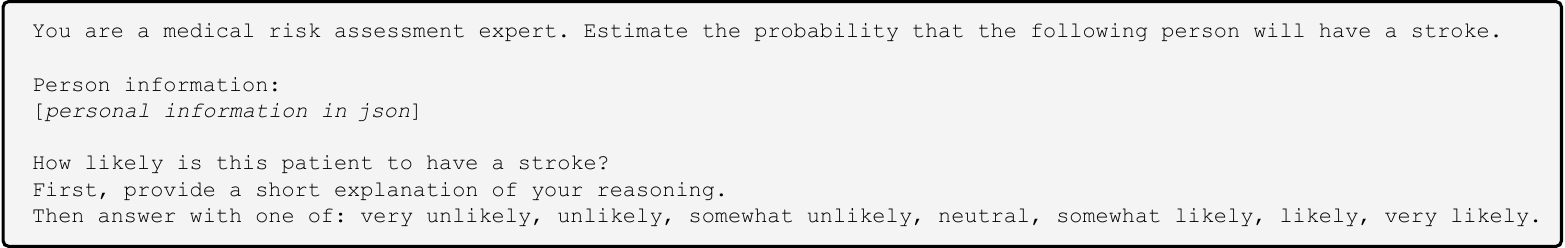}
    \caption{1/5/10shot\_level on Stroke dataset}
    \label{fig:prompt-nshot_level-stroke}
  \end{subfigure}\hfill
  \begin{subfigure}[t]{\textwidth}
    \centering
\includegraphics[width=\linewidth]{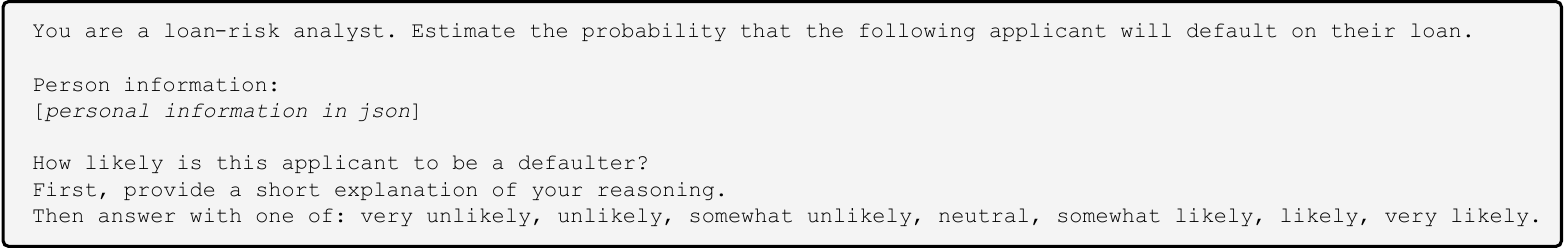}
    \caption{1/5/10shot\_level on Lending dataset}
    \label{fig:prompt-nshot_level-loan}
  \end{subfigure}

  \caption{Prompts for baseline method 1shot\_level, 5shot\_level, 10shot\_level. Where [personal information in json] is replaced by the actual data in json format.}
  \vspace{-0.3cm}
  \label{fig:prompt-nshot_level}
\end{figure}

\begin{figure}[h]
  \centering
  \captionsetup[sub]{justification=centering}

  \begin{subfigure}[t]{\textwidth}
    \centering
    \includegraphics[width=\linewidth]{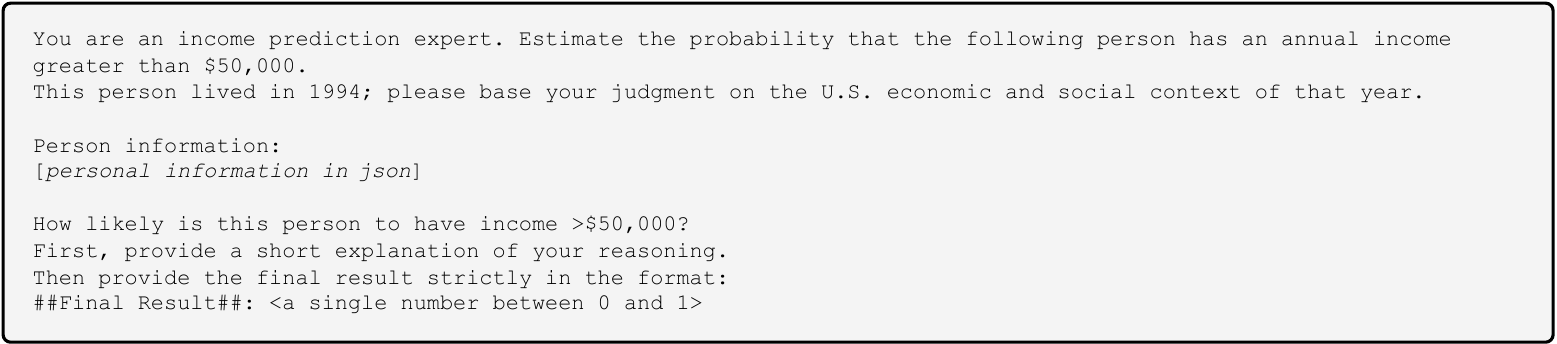}
    \caption{1/5/10shot\_score on Adult Census}
    \label{fig:prompt-nshot_score-adult}
  \end{subfigure}\hfill
  \begin{subfigure}[t]{\textwidth}
    \centering
    \includegraphics[width=\linewidth]{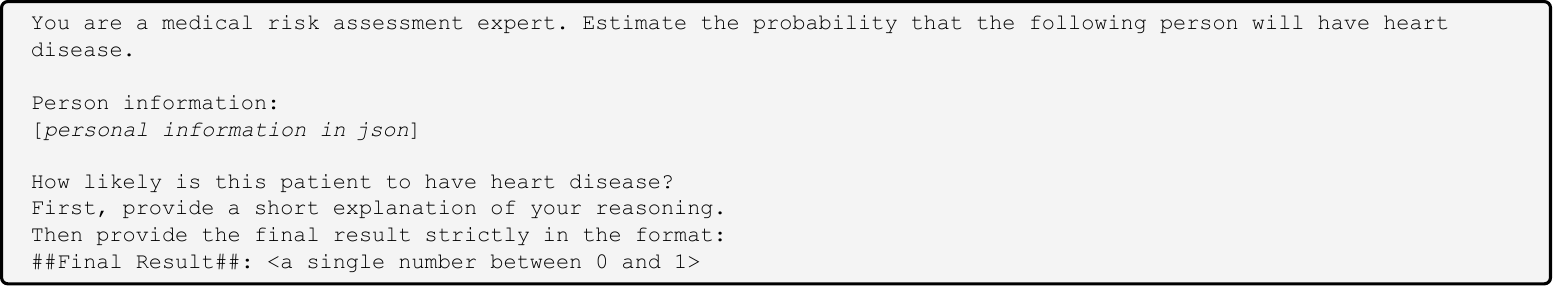}
    \caption{1/5/10shot\_score on Heart Disease dataset}
    \label{fig:prompt-nshot_score-heart}
  \end{subfigure}\hfill
  \begin{subfigure}[t]{\textwidth}
    \centering
    \includegraphics[width=\linewidth]{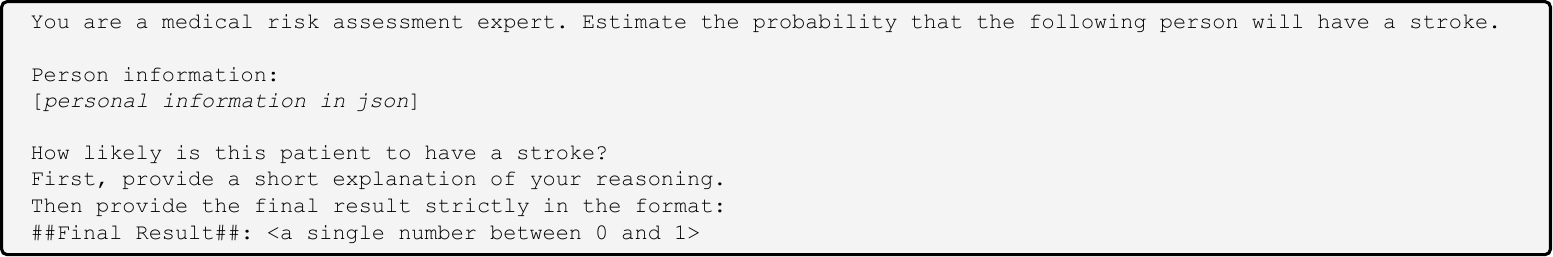}
    \caption{1/5/10shot\_score on Stroke dataset}
    \label{fig:prompt-nshot_score-stroke}
  \end{subfigure}\hfill
  \begin{subfigure}[t]{\textwidth}
    \centering
\includegraphics[width=\linewidth]{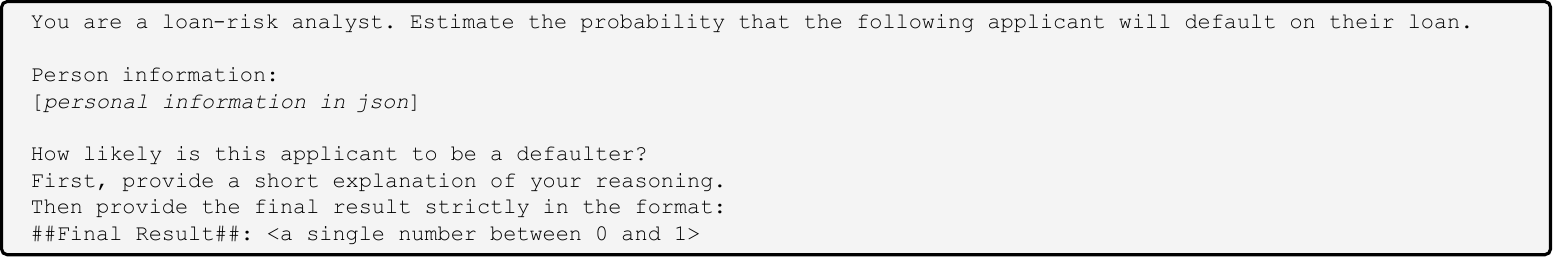}
    \caption{1/5/10shot\_score on Lending dataset}
    \label{fig:prompt-nshot_score-loan}
  \end{subfigure}

  \caption{Prompts for baseline method 1shot\_score, 5shot\_score, 10shot\_score. Where [personal information in json] is replaced by the actual data in json format, and the last line of each prompt is the format guideline.}
  \vspace{-0.3cm}
  \label{fig:prompt-nshot_score}
\end{figure}

\begin{figure}[h]
  \centering
  \captionsetup[sub]{justification=centering}

  \begin{subfigure}[t]{\textwidth}
    \centering
    \includegraphics[width=\linewidth]{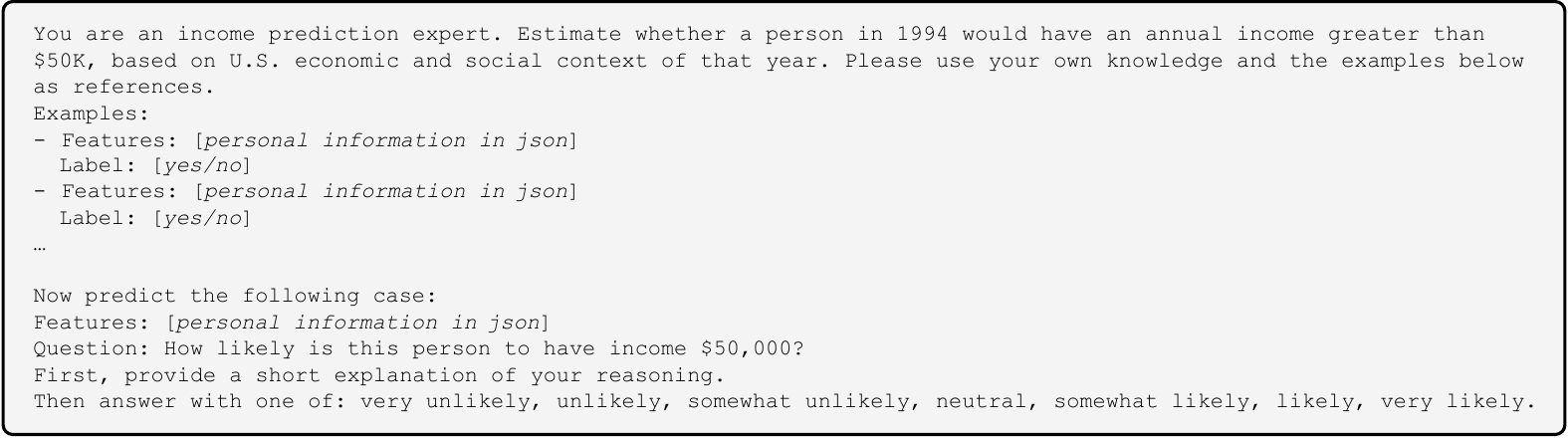}
    \caption{ICL on Adult Census}
    \label{fig:prompt-ICL-adult}
  \end{subfigure}\hfill
  \begin{subfigure}[t]{\textwidth}
    \centering
    \includegraphics[width=\linewidth]{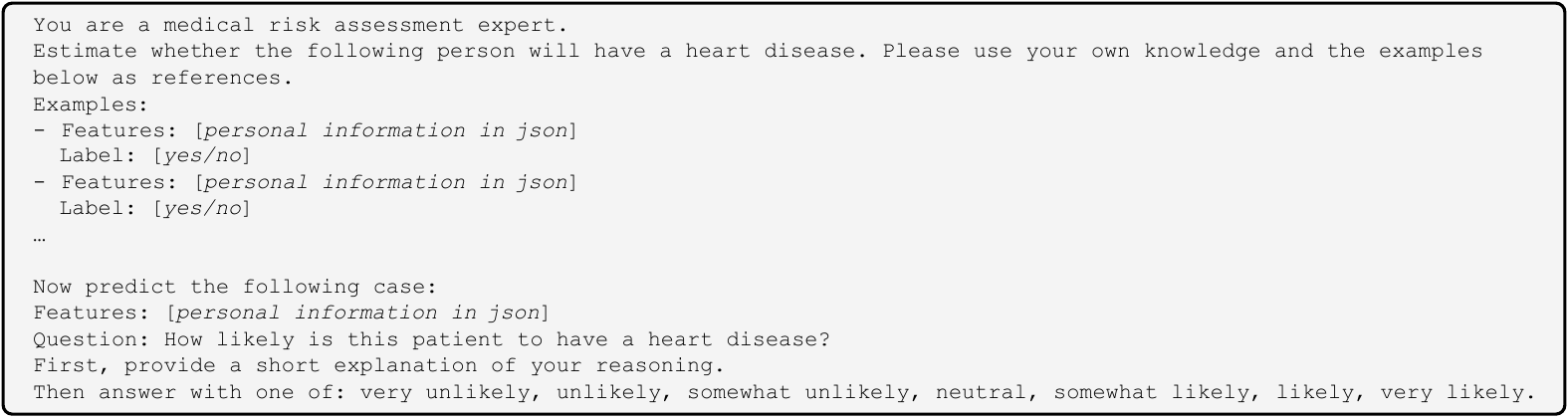}
    \caption{ICL on Heart Disease dataset}
    \label{fig:prompt-ICL-heart}
  \end{subfigure}\hfill
  \begin{subfigure}[t]{\textwidth}
    \centering
    \includegraphics[width=\linewidth]{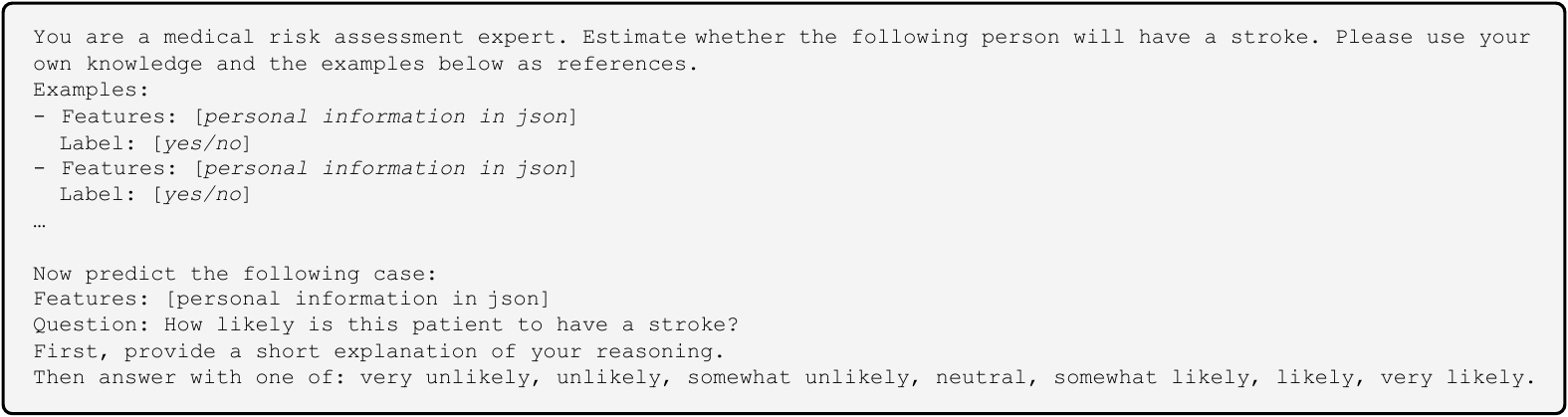}
    \caption{ICL on Stroke dataset}
    \label{fig:prompt-ICL-stroke}
  \end{subfigure}\hfill
  \begin{subfigure}[t]{\textwidth}
    \centering
\includegraphics[width=\linewidth]{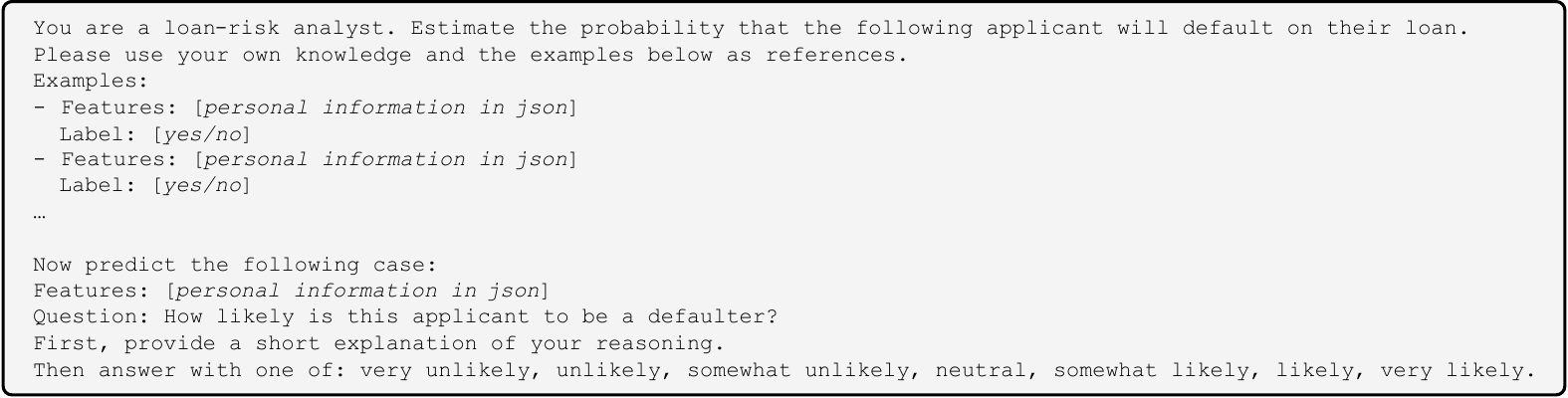}
    \caption{ICL on Lending dataset}
    \label{fig:prompt-ICL-loan}
  \end{subfigure}

  \caption{Prompts for baseline method ICL-5+5, ICL-10+10. Where [personal information in json] is replaced by the actual data in json format. For ICL-5+5, we have 5 positive and 5 negative training data randomly ordered as Features and Values in the prompt, and similar for ICL-10+10 except we have 10 positive and 10 negative training data.}
  \vspace{-0.3cm}
  \label{fig:prompt-ICL}
\end{figure}

\begin{figure}[h]
  \centering
  \captionsetup[sub]{justification=centering}

  \begin{subfigure}[t]{\textwidth}
    \centering
    \includegraphics[width=\linewidth]{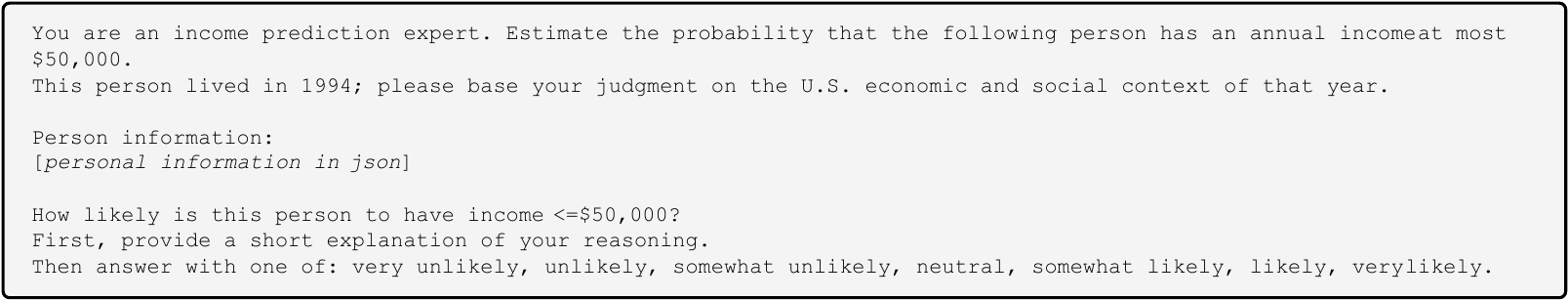}
    \caption{Contrast on Adult Census}
    \label{fig:prompt-contrast-adult}
  \end{subfigure}\hfill
  \begin{subfigure}[t]{\textwidth}
    \centering
    \includegraphics[width=\linewidth]{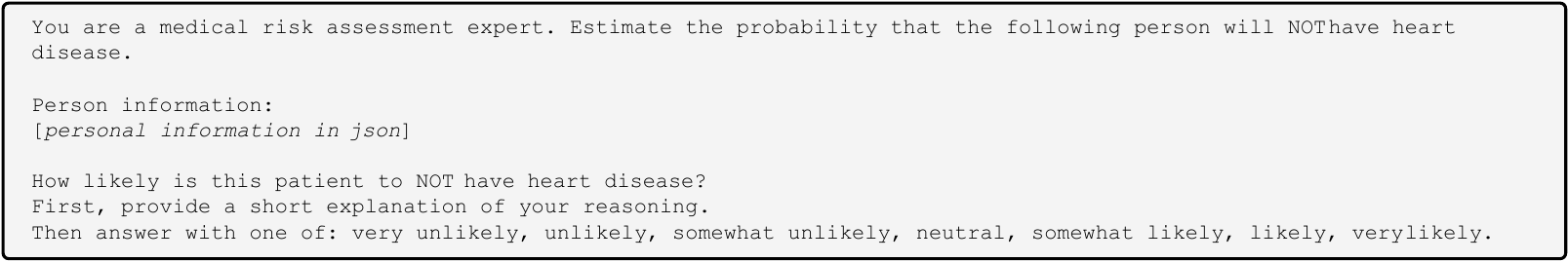}
    \caption{Contrast on Heart Disease dataset}
    \label{fig:prompt-contrast-heart}
  \end{subfigure}\hfill
  \begin{subfigure}[t]{\textwidth}
    \centering
    \includegraphics[width=\linewidth]{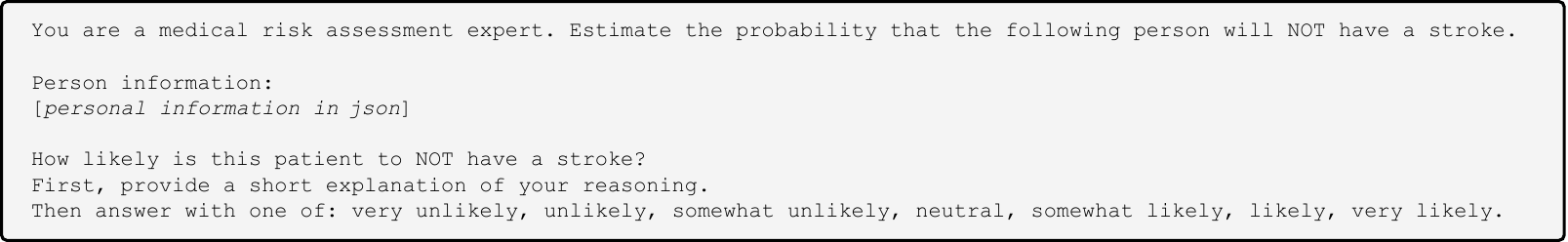}
    \caption{Contrast on Stroke dataset}
    \label{fig:prompt-contrast-stroke}
  \end{subfigure}\hfill
  \begin{subfigure}[t]{\textwidth}
    \centering
\includegraphics[width=\linewidth]{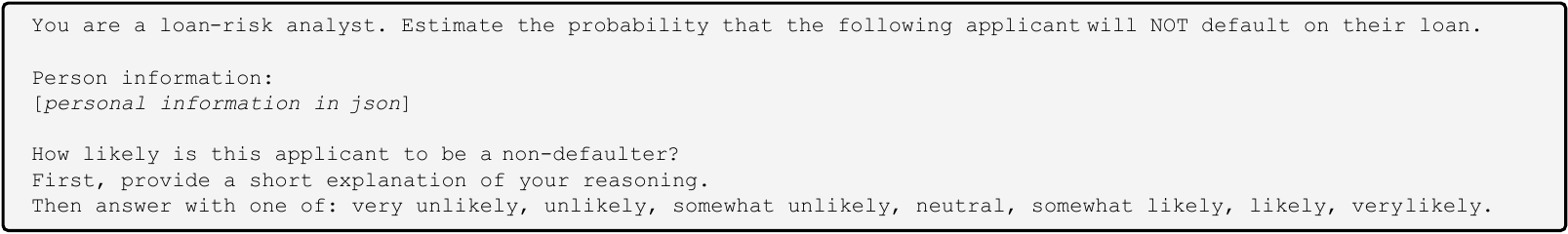}
    \caption{Contrast on Lending dataset}
    \label{fig:prompt-contrast-loan}
  \end{subfigure}

  \caption{Negative prompts for baseline method Contrast. Where [personal information in json] is replaced by the actual data in json format. The positive prompt is the same as in 1shot\_label prompt\ref{fig:prompt-nshot_level}.}
  \vspace{-0.3cm}
  \label{fig:prompt-contrast}
\end{figure}

\begin{figure}[h]
  \centering
  \captionsetup[sub]{justification=centering}

  \begin{subfigure}[t]{\textwidth}
    \centering
    \includegraphics[width=\linewidth]{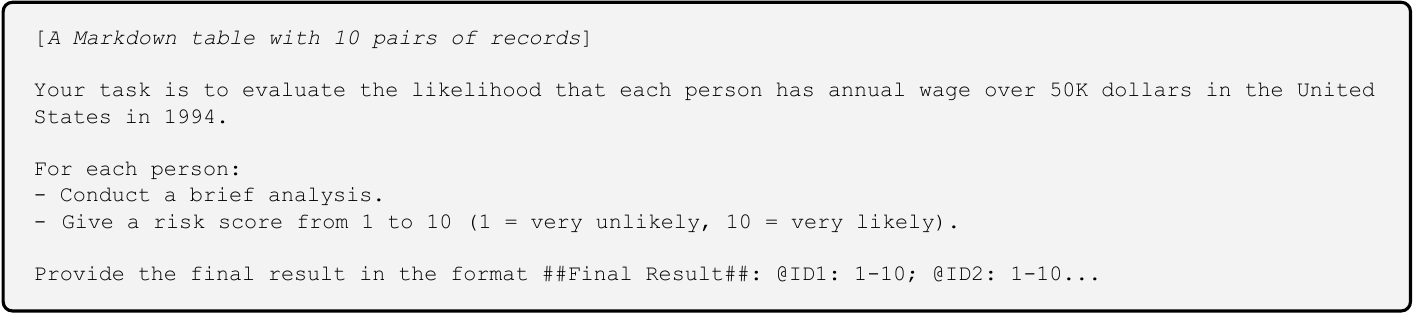}
    \caption{Tabular-PRISM on Adult Census}
    \label{fig:prompt-prism-adult}
  \end{subfigure}\hfill
  \begin{subfigure}[t]{\textwidth}
    \centering
    \includegraphics[width=\linewidth]{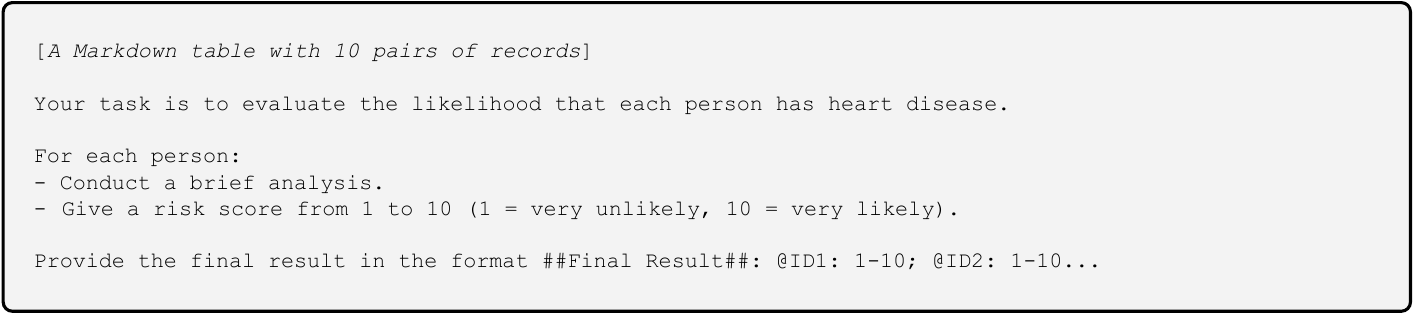}
    \caption{Tabular-PRISM on Heart Disease dataset}
    \label{fig:prompt-prism-heart}
  \end{subfigure}\hfill
  \begin{subfigure}[t]{\textwidth}
    \centering
    \includegraphics[width=\linewidth]{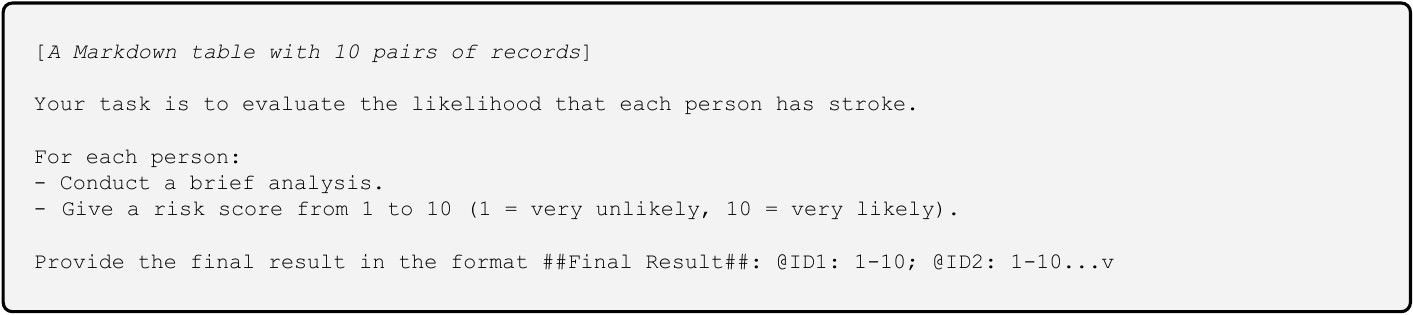}
    \caption{Tabular-PRISM on Stroke dataset}
    \label{fig:prompt-prism-stroke}
  \end{subfigure}\hfill
  \begin{subfigure}[t]{\textwidth}
    \centering
\includegraphics[width=\linewidth]{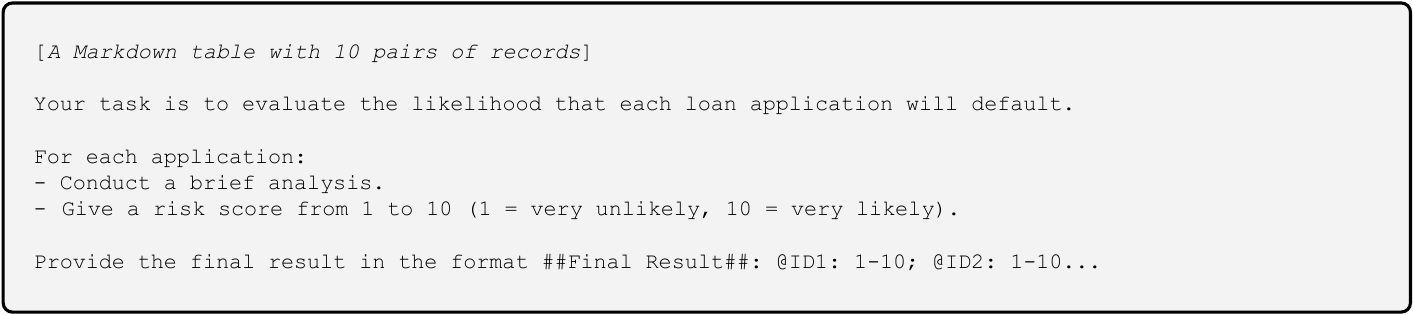}
    \caption{Tabular-PRISM on Lending dataset}
    \label{fig:prompt-prism-loan}
  \end{subfigure}

  \caption{Prompt templates for Tabular\textendash PRISM across four datasets. Each panel shows the standardized instruction used to score instances from a Markdown table with 10 pairs of records; each pair comprises a target case and its matched baseline case.}

  \vspace{-0.3cm}
  \label{fig:prompt-prism-tabular}
\end{figure}

\begin{figure}[h]
  \centering
  \captionsetup[sub]{justification=centering}

  \begin{subfigure}[t]{\textwidth}
    \centering
    \includegraphics[width=\linewidth]{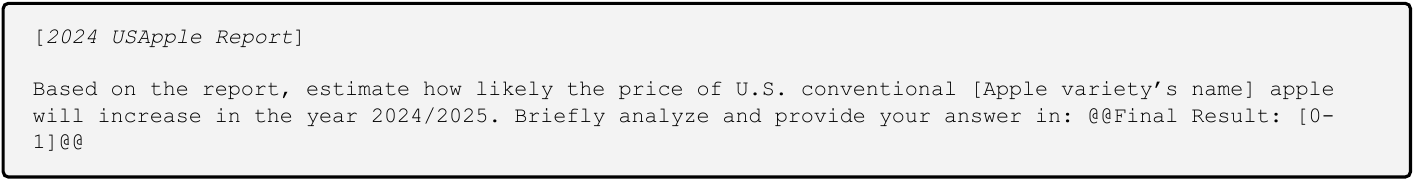}
    \caption{Raw}
    \label{fig:prompt-apple-raw}
  \end{subfigure}\hfill

  \begin{subfigure}[t]{\textwidth}
    \centering
    \includegraphics[width=\linewidth]{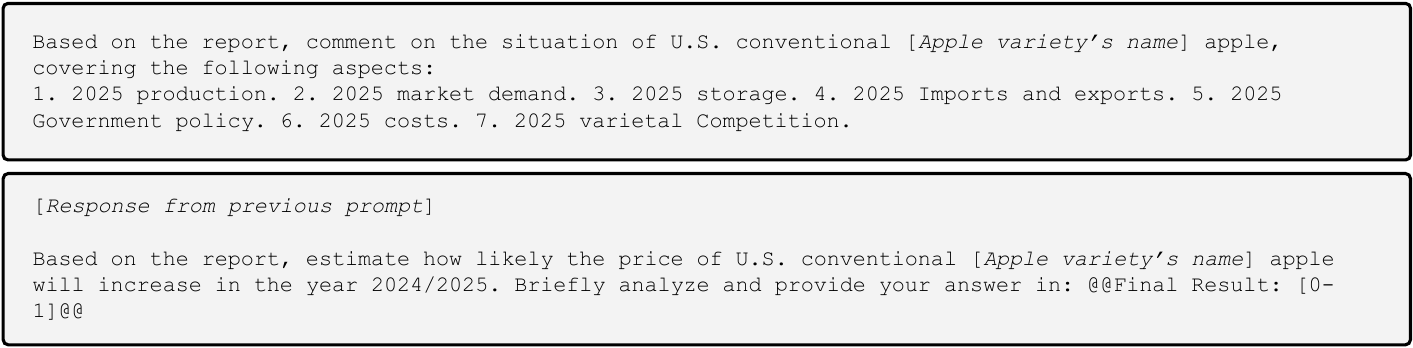}
    \caption{Extracted}
    \label{fig:prompt-apple-extracted}
  \end{subfigure}\hfill

  \begin{subfigure}[t]{\textwidth}
    \centering
    \includegraphics[width=\linewidth]{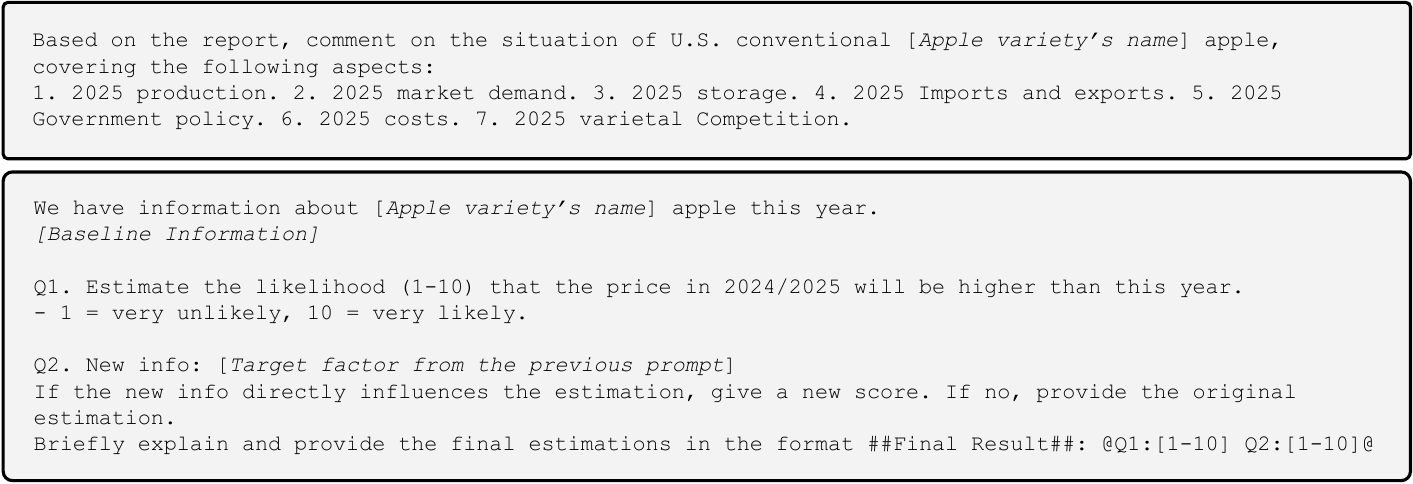}
    \caption{PRISM}
    \label{fig:prompt-apple-prism}
  \end{subfigure}

  \caption{Apple prompts of three experiments: \emph{Raw} input, \emph{Extracted} factors, and \emph{PRISM}.}
  \vspace{-0.3cm}
  \label{fig:prompt-apple}
\end{figure}

\begin{figure}[h]
  \centering
  \captionsetup[sub]{justification=centering}

  \begin{subfigure}[t]{\textwidth}
    \centering
    \includegraphics[width=\linewidth]{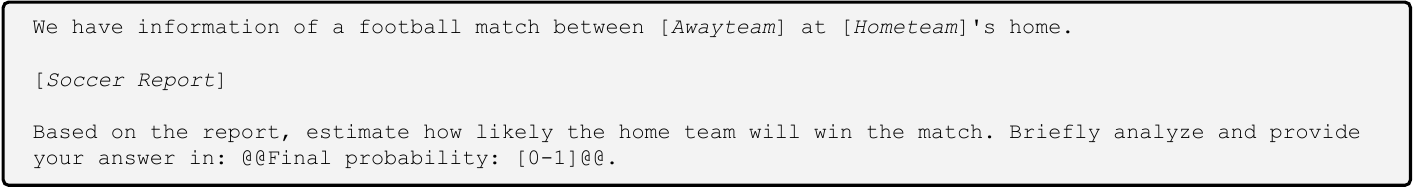}
    \caption{Raw}
    \label{fig:prompt-soccer-raw}
  \end{subfigure}\hfill

  \begin{subfigure}[t]{\textwidth}
    \centering
    \includegraphics[width=\linewidth]{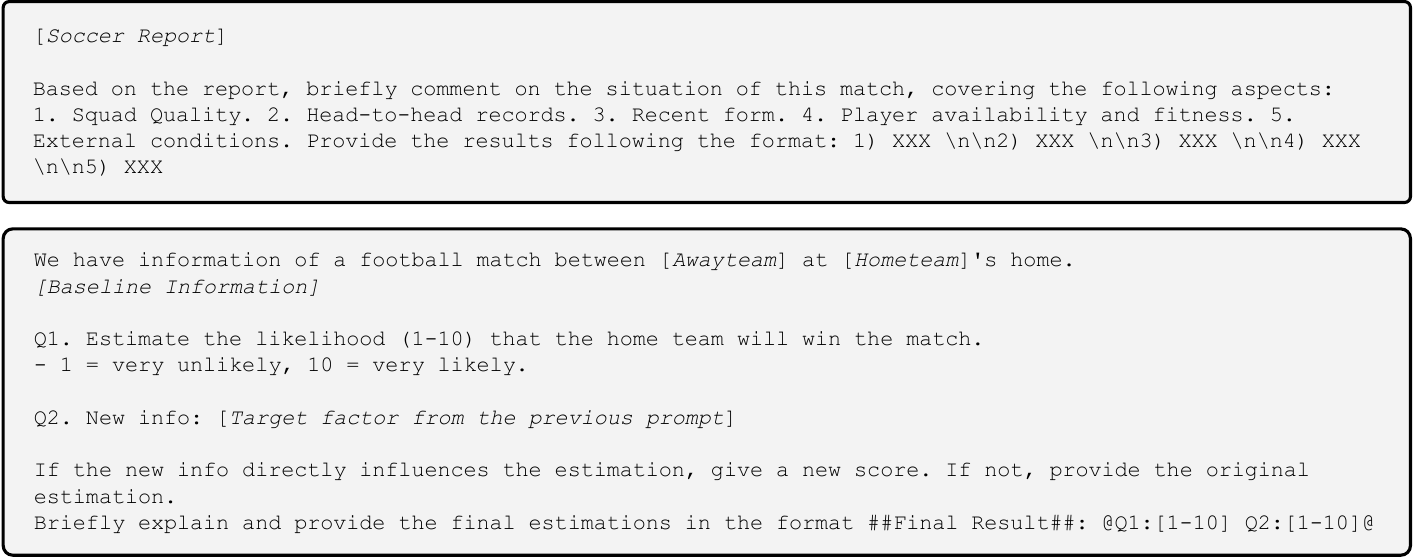}
    \caption{PRISM}
    \label{fig:prompt-soccer-PRISM}
  \end{subfigure}
  \vspace{-0.3cm}
  \caption{Soccer prompts of two experiments: \emph{Raw} input and \emph{PRISM}.}
  \label{fig:prompt-soccer}
\end{figure}
\end{document}